\documentclass[10pt]{article} 
\usepackage[accepted]{tmlr}


\usepackage{amsmath,amsfonts,bm}









\def\eqref#1{equation~\ref{#1}}









\def\1{\bm{1}}










\DeclareMathAlphabet{\mathsfit}{\encodingdefault}{\sfdefault}{m}{sl}
\SetMathAlphabet{\mathsfit}{bold}{\encodingdefault}{\sfdefault}{bx}{n}











\newcommand{\E}{\mathbb{E}}

\newcommand{\R}{\mathbb{R}}



\DeclareMathOperator*{\argmax}{arg\,max}
\DeclareMathOperator*{\argmin}{arg\,min}

\newcommand{\N}{\mathbb{N}}
\usepackage{mathtools}
\usepackage{algorithm}
\usepackage{algpseudocode}
\usepackage{hhline}
\usepackage{amsthm}
\DeclarePairedDelimiterX{\infdivx}[2]{(}{)}{%
  #1\;\delimsize\|\;#2%
}
\newcommand{\infdiv}{D_{KL}\infdivx}
\DeclarePairedDelimiter{\norm}{\lVert}{\rVert}

\newcommand{\f}{\frac}
\newtheorem{rem}{Remark}[section]
\newtheorem{exm}{Example}[section]

\newtheorem{theo}{Theorem}[section]
\newtheorem{prop}{Proposition}[section]
\newtheorem{defi}{Definition}[section]
\newtheorem{assu}{Assumption}[section]
\newtheorem{lemma}{Lemma}[section]
\usepackage{graphicx}
\usepackage{hyperref}
\usepackage{url}
\usepackage{subfiles}

\title{{Hitchhiker's guide on the relation of Energy-Based Models with other generative models, sampling and statistical physics: a comprehensive review}}


\author{\name Davide Carbone \email davide.carbone@phys.ens.fr \\
      \addr Laboratoire de Physique de l’Ecole Normale Supérieure, ENS Université PSL,\\ CNRS, Sorbonne Université, Université de Paris, Paris
      }



\begin{document}

\maketitle

\begin{abstract}
Energy-Based Models  have emerged as a powerful framework in the realm of generative modeling, offering a unique perspective that aligns closely with principles of statistical mechanics. This review aims to provide physicists with a comprehensive understanding of EBMs, delineating their connection to other generative models such as Generative Adversarial Networks, Variational Autoencoders, and Normalizing Flows. We explore the sampling techniques crucial for EBMs, including Markov Chain Monte Carlo (MCMC) methods, and draw parallels between EBM concepts and statistical mechanics, highlighting the significance of energy functions and partition functions. Furthermore, we delve into recent training methodologies for EBMs, covering recent advancements and their implications for enhanced model performance and efficiency. This review is designed to clarify the often complex interconnections between these models, which can be challenging due to the diverse communities working on the topic.\\
\end{abstract}

\section{\large Introduction}
{
Several surveys on Energy-Based Models (EBMs) have been published in recent years, each offering valuable insights into specific aspects of this modeling framework. Some works focus on their historical development and original formulation prior to the deep learning era \citep{lecun2006tutorial}, while others examine their conceptual ties with statistical physics and the theory of Boltzmann Machines \citep{huembeli2022physics}, or their evolving role in the modern generative modeling landscape \citep{bond2021deep,song2021train}. However, these reviews often lean towards either algorithmic and empirical advances from a machine learning perspective or foundational insights rooted in statistical mechanics, rarely addressing both comprehensively. Moreover, crucial notions such as sampling, energy normalization, and the interpretation of the Boltzmann-Gibbs distribution are frequently treated as technical details or relegated to side discussions, despite being central to both the theory and practical implementation of EBMs.\\In contrast, the present review aims to provide a unified and pedagogical account of Energy-Based Models that is both self-contained and accessible to a broad interdisciplinary audience. Specifically, we take the viewpoint that EBMs should not be understood solely as a machine learning model class defined by an unnormalized density function, but rather as a natural interface between statistical physics, sampling theory, and modern generative modeling. Our goal is to elevate the role of sampling—from a peripheral computational tool to a central component of the model’s theoretical foundation and training dynamics—emphasizing how concepts from nonequilibrium statistical physics, such as Langevin and Metropolis-Hastings dynamics, directly shape the behavior and feasibility of EBMs.\\This work is not just a tutorial on existing training algorithms; rather, it seeks to clarify the fundamental principles and limitations that stem from the very definition of EBMs as energy-parametrized distributions. In doing so, we aim to help readers understand why tasks such as likelihood evaluation, generation, or inference remain non-trivial within this framework. This perspective also allows us to critically assess the relationships between EBMs and other generative models—including normalizing flows, GANs, VAEs, and diffusion models—highlighting both structural differences and shared algorithmic tools (e.g., score-based learning, Langevin sampling).\\We believe that such an integrative approach is currently lacking in the literature. While many excellent surveys focus on specific algorithmic developments or empirical applications, few attempt to articulate an auto-consistent conceptual framework that situates EBMs at the intersection of physics, sampling, and generative modeling. This review aspires to fill that gap, acting as a “Hitchhiker’s Guide” for physicists venturing into generative modeling, and for machine learning researchers interested in the deep physical analogies underlying their models, trying to balance soundness for newcomers and technical details.\\ To provide historical depth and context, we also include in Appendix \ref{app:1} a brief chronology of the development of EBMs and their connections to key ideas in both physics and AI. While not necessary for understanding the main technical content, we strongly encourage readers to consult it for a broader perspective.
}

\textbf{Contribution.} The structure of the review will be the following:
\begin{itemize}
    \item in Section \ref{sec:EBM:defi} we define Energy-Based Models and we highlight the main difficulty related to its training through cross-entropy minimization;
    \item { in Section \ref{sec:ebm:gen} we present a review of the principal generative models as opposed to EBMs. Then, we conclude the section with a comparative scheme between all the presented models, with an accent on the unique features of EBMs};
    \item in Section \ref{sec:ebm:sam} we review the main MCMC methods used to sample from a Boltzmann-Gibbs ensemble, hence used in the context of EBM training to generate the necessary sample used to perform gradient descent on cross-entropy;
    \item in Section \ref{sec:ebms-phys} we summarize the derivation of Boltzmann-Gibbs equilibrium ensemble, motivating the importance of concepts as free energy in the context of statistical learning;
    {\item in Section \ref{sec:contrastive} we present recent progress in EBM training and we highlight their limitation, in relation with relatively old methods as Constrastive Divergence. }
\end{itemize}

\section{Definition of EBM}
\label{sec:EBM:defi}
In this Section, we provide the basic formal definition of Energy-Based Model. We will adopt the notation and the presented assumption throughout the present work. First of all, the problem we consider can be formulated as follows: we assume that we are given $n\in\N$ data points $\{x_i^*\}_{i=1}^n$ in $\R^d$ drawn from an unknown probability distribution that is absolutely continuous with respect to the Lebesgue measure on $\R^d$, with a positive probability density function (PDF) $\rho_*(x)>0$ (also unknown). This is a standard problem in statistical learning, where {\it learning from data} here refers to the ability to fit the data distribution and to generate new examples. More precisely, our aim is to estimate $\rho_*(x)$ via an energy-based model (EBM), i.e. to find a suitable energy function in a parametric class, $U_\theta:\R^d \to [0,\infty)$ with parameters $\theta\in \Theta$, such that the associated Boltzmann-Gibbs PDF 
\begin{equation}   
\label{eq:ebm:def}
\rho_{\theta}(x)=Z_{\theta}^{-1} e^{-U_{\theta}(x)}; \qquad Z_{\theta}=\int_{\mathbb{R}^d} e^{-U_{\theta}(x)}dx
\end{equation}
is an approximation of the target density $\rho_*(x)$. Actually, any probability density function can be written as a Boltzmann Gibbs ensemble for a particular choice of $U(x)$.  The normalization factor $Z_{\theta}$ is known as the partition function in statistical physics \citep{lifshitz}, see also Section \ref{sec:ebms-phys}, and as the evidence in Bayesian statistics\citep{feroz2008multimodal}.
\begin{rem}
    Even if $U_\theta$ is known, an explicit analytical computation of the partition function is generally unfeasable. If the dimension $d$ is big enough, the integral defining $Z_\theta$ cannot be computed using standard quadrature methods. The only possibility is Monte-Carlo sampling\citep{liu2001monte}. To employ such method, one can  express the partition function as an expectation $\E_0$ with respect to a chosen probability density function $\rho_0$, i.e.
    \begin{equation}
    \label{eq:monte}
        Z_\theta=\E_0\left[\f{e^{-U_\theta}}{\rho_0}\right]
    \end{equation}
    The selected density must be known pointwise in $\R^d$, including the normalization constant, and it should be easy to sample from. If these conditions are met, one can compute the partition function by simply replacing the expectation in \eqref{eq:monte} with the corresponding empirical average computed using samples drawn from $\rho_0$. Unfortunately, finding a probability density that satisfies these properties is challenging. For a general choice that is not tailored to $e^{-U_\theta}$, the estimator is likely to be very poor, characterized by a very large, or even infinite, coefficient of variation. 
\end{rem}
One advantage of EBMs is that they provide generative models that do not require the explicit knowledge of $Z_{\theta}$. In Section \ref{sec:ebms-sam} we will review some routines that can in principle be used to sample $\rho_\theta$ knowing only $U_\theta$ -- the design of such methods is an integral part of the problem of building an EBM. 

To proceed we need some assumptions on the parametric class of energy:
\begin{assu}
\label{eq:assump:U}
    For all $\theta \in \Theta$:
    \begin{enumerate}
        \item $U_\theta \in C^2(\R^d); \qquad \exists L\in\R_+: \|\nabla\nabla U_\theta(x)\|\le L \quad \forall x \in \R^d;$
        \item $\exists a\in\R_+ \text{and a compact set $\mathcal{C}\in\R^d$}:  \ x\cdot \nabla U_\theta(x) \ge a |x|^2 \quad \forall x \in \R^d\setminus{\mathcal{C}}.$
    \end{enumerate}
\end{assu}
The need for the first assumption will be discussed in Section \ref{sec:ebms-sam}: it is related to wellposedness and convergence properties of the dynamics used for sampling, i.e. Langevin dynamics and its specifications. The second assumption guarantees that $Z_\theta <\infty$ (i.e. we can associate a PDF $\rho_\theta$ to $U_\theta$ via~\eqref{eq:ebm:def} for any $\theta\in\Theta$). We provide now two important definitions:
\begin{defi}[Convexity]
         A function $\varphi:\R^d\to(-\infty,+\infty] $ is convex if given $0<\lambda<1$ and $x_1,x_2\in\R^d$ such that $x_1\neq x_2$, the following is true
    \begin{equation}
        \label{eq:convex}
        \displaystyle \varphi\left(tx_{1}+(1-t)x_{2}\right)\leq t\varphi\left(x_{1}\right)+(1-t)\varphi\left(x_{2}\right)
    \end{equation}
    \end{defi}
\begin{defi}[Log-Concavity]
    A density function $\rho$ with respect to Lebesgue measure on $(\R^d, \mathcal B^d)$ is log-concave if $\rho=e^{-\varphi}$ where $\varphi$ is convex.
\end{defi}
A non-convex function could have more than one local but not global minima; conversely, a non-log-concave probability density could have more than local maxima, which are called {\it modes}. It is important to stress that Assumption~\eqref{eq:assump:U} \textit{does not} imply that $U_\theta$ is convex (i.e. that $\rho_\theta$ is log-concave): in fact, we will be most interested in situations where $U_\theta$ has multiple local minima so that $\rho_\theta$ is multimodal. We will elaborate on the topic in Section \ref{sec:ebms-sam}. It is well known as for optimization problems, non-convex cases are the most complicated. Similarly, sampling from a non-log-concave probability density function (PDF) can be extremely challenging. Another assumption we will adopt is:
\begin{assu}
     Without loss of generality $\exists \theta_*\in \Theta \ : \ \rho_{\theta_*} = \rho_*$, that is $\rho_*$ is in the parametric class of $\rho_\theta$.
\end{assu} 
The aims of EBMs are primarily to identify $\theta_*$ and to sample~$\rho_{\theta_*}$; in the process, we will also show how to estimate $Z_{\theta_*}$. 
\begin{exm}
    Let us present a simple example to visualize the relation between convexity and log-concavity. In Figure \ref{fig:gauss} we plot side by side the PDF of a Gaussian mixture in 1D and the associated potential 
    \begin{equation}
        U_\theta(x)=\log\left[p\exp\left(-\dfrac{(x-\mu_1)^2}{\sigma_1^2}\right)+(1-p)\exp\left(-\dfrac{(x-\mu_2)^2}{\sigma_2^2}\right)\right]
    \end{equation}
    \begin{figure}[h]
        \centering
        \includegraphics[width=1\linewidth]{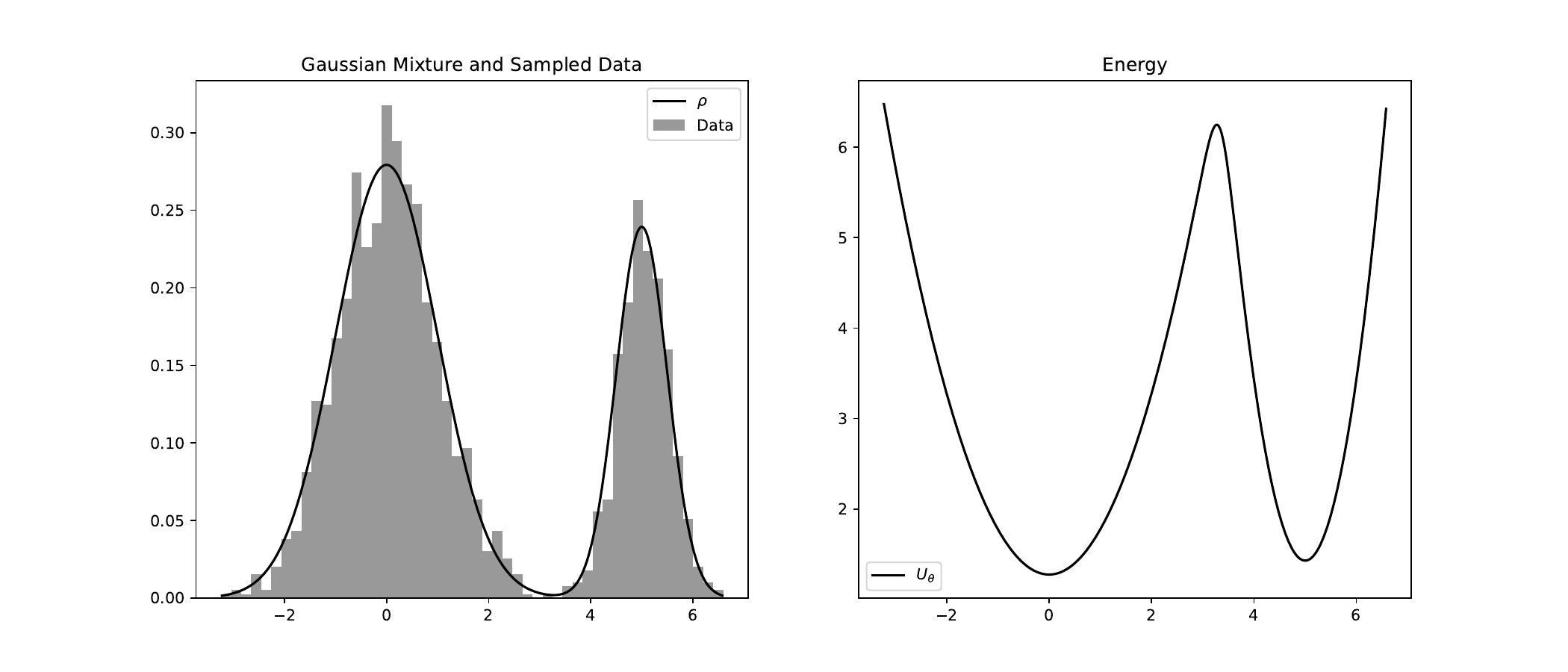}
        \caption{{\it Gaussian Mixture}. Plot of PDF with sampled histogram and associated energy $U_\theta$.}
        \label{fig:gauss}
    \end{figure}
    where $\theta=\{p,\mu_{1,2},\sigma_{1,2}\}$. The specific values are $p=0.7$, $\mu_1=0$, $\mu_2=5$, $\sigma_1=1$ and $\sigma_2=0.5$. It is clear the correspondence between minima of $U_\theta(x)$ and maxima, that is modes, of $\rho$.
\end{exm}

\subsection{Cross-entropy minimization}
\label{sec:ebm-how}
Once we defined an EBM, we need to measure its quality with respect to the data distribution. Possibly, this would provide a way to train its parameters. Hence, we define some important quantities:
\begin{defi}
\label{def:entropy}
    Consider two probability densities on $\R^d$ and absolutely continuous with respect to Lebesgue measure, namely $\rho_1$ and $\rho_2$. We define
    \begin{enumerate}
        \item {\bf Cross Entropy}
        \begin{equation} 
\label{2}
H(\rho_1,\rho_2) = -\int_{\R^d} \log \rho_2(x)\rho_1(x)dx 
\end{equation}
\item {\bf Kullback-Leibler divergence\citep{kullback1951information}}
\begin{equation} 
{\displaystyle D_{\text{KL}}(\rho_1\parallel \rho_2)=\int _{\R^d }\rho_1(x)\ \log \left({\frac {\rho_1(x)}{\rho_2(x)}}\right)\ dx}
\end{equation}
\item {\bf Entropy}
\begin{equation}
\label{eq:ent}
    H(\rho_1)=-\int_{\R^d}\log \rho_1(x) \rho_1(x)dx
\end{equation}
    \end{enumerate}
\end{defi}
The KL divergence is a widely used estimator for the dissimilarity between probability measures. It satisfies the non-negativity condition
\begin{equation}
    D_{\text{KL}}(\rho_1\parallel \rho_2)\geq 0, \quad\quad D_{\text{KL}}(\rho_1\parallel \rho_2)=0 \iff \rho_1=\rho_2 \,\, \text{a.e.}
\end{equation}
However, it is not a proper distance since it is not symmetric and it does not satisfy triangular inequality. The following trivial lemma relates the three quantities we introduced in Definition \ref{def:entropy}:
\begin{lemma}
    The following equality holds for any choice of PDFs $\rho_{1}$ and $\rho_2$
    \begin{equation}
    \label{eq:rela:ent:kl}
        H(\rho_1,\rho_2) = H(\rho_2)+ D_{\text{KL}}(\rho_2\parallel\rho_1)
    \end{equation}
\end{lemma}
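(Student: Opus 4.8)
The plan is to prove \eqref{eq:rela:ent:kl} by unfolding the three definitions in Definition \ref{def:entropy} and exploiting that the logarithm turns the ratio inside the KL integrand into a difference. No analysis is needed beyond the standing assumption that $\rho_1,\rho_2$ are densities absolutely continuous with respect to Lebesgue measure, which guarantees that the integrals are well defined; the argument is therefore purely algebraic.

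First I would expand the Kullback--Leibler term from its definition and split it using $\log(\rho_2/\rho_1)=\log\rho_2-\log\rho_1$:
\begin{equation*}
D_{\text{KL}}(\rho_2\parallel\rho_1)=\int_{\R^d}\rho_2(x)\log\rho_2(x)\,dx-\int_{\R^d}\rho_2(x)\log\rho_1(x)\,dx.
\end{equation*}
By \eqref{eq:ent} the first integral equals $-H(\rho_2)$, so adding $H(\rho_2)$ cancels it and leaves only the second integral:
\begin{equation*}
H(\rho_2)+D_{\text{KL}}(\rho_2\parallel\rho_1)=-\int_{\R^d}\rho_2(x)\log\rho_1(x)\,dx.
\end{equation*}
It then remains to recognize the right-hand side as a cross entropy in the sense of \eqref{2}, which closes the identity.

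The computation is elementary, so the only real obstacle is bookkeeping: one must track carefully which density carries the logarithm and which serves as the integrating measure, since \eqref{2} integrates the log of the \emph{second} argument against the \emph{first}. A swap of $\rho_1\leftrightarrow\rho_2$ in any single term would silently break the identity, so before declaring the proof complete I would re-verify that the surviving integral $-\int_{\R^d}\rho_2\log\rho_1$ matches the cross entropy with its arguments in the intended order, and that the cancelled self-term is indeed the entropy $H(\rho_2)$ rather than a cross term.
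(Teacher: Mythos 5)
Your algebra is sound, but the verification you postponed to the very end is exactly where the argument breaks, and you need to carry it out: the surviving integral $-\int_{\R^d}\rho_2(x)\log\rho_1(x)\,dx$ is, under the paper's convention \eqref{2} (integrate the logarithm of the \emph{second} argument against the \emph{first}), the cross entropy $H(\rho_2,\rho_1)$ — not $H(\rho_1,\rho_2)=-\int_{\R^d}\rho_1(x)\log\rho_2(x)\,dx$, which is what the left-hand side of the lemma denotes. Since cross entropy is not symmetric in its arguments, what your computation actually establishes is
\begin{equation*}
H(\rho_2,\rho_1)=H(\rho_2)+D_{\text{KL}}(\rho_2\parallel\rho_1),
\end{equation*}
or equivalently, after relabeling, $H(\rho_1,\rho_2)=H(\rho_1)+D_{\text{KL}}(\rho_1\parallel\rho_2)$. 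The identity exactly as printed in \eqref{eq:rela:ent:kl} does not follow from your chain of equalities and, read literally against Definition \ref{def:entropy}, is false for generic $\rho_1\neq\rho_2$.

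The mismatch is a typo in the statement rather than a defect in your strategy: the version you actually derived is the one the paper uses downstream, for instance when it asserts that the difference between $H(\rho_*,\rho_\theta)$ and $D_{\text{KL}}(\rho_*\parallel\rho_\theta)$ is $H(\rho_*)$ in order to justify \eqref{eq:opt}; that requires $H(\rho_*,\rho_\theta)=H(\rho_*)+D_{\text{KL}}(\rho_*\parallel\rho_\theta)$, i.e.\ the entropy and the first KL argument must carry the \emph{first} slot of the cross entropy. The paper supplies no proof of its own (the lemma is labelled trivial), and your expand-and-cancel computation is the right and essentially only approach; to make the proof complete you should state explicitly that the identity obtained is $H(\rho_1,\rho_2)=H(\rho_1)+D_{\text{KL}}(\rho_1\parallel\rho_2)$ and note that the subscripts on the right-hand side of \eqref{eq:rela:ent:kl} must be swapped for the statement to be correct.
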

One can also use the cross-entropy of the model density $\rho_\theta$ relative to the target density $\rho_*$ as an estimate of diversity between the two PDFs; in such case, \ref{2} simplifies becoming
\begin{equation}
\label{eq:cross-sim}
    H(\rho_*,\rho_\theta)=\log Z_\theta + \int_{\R^d} U_\theta(x)\rho_*(x)dx
\end{equation}
Because of \ref{eq:rela:ent:kl}, the difference between the cross-entropy and the KL divergence is $H(\rho_*)$, a term that depends just on the data distribution. Hence, the optimal parameters $\theta^*$ are solution of an optimization problem on $\Theta$, namely
\begin{equation}
\label{eq:opt}
    \theta^*=\argmin_{\theta\in\Theta} D_{\text{KL}}(\rho_*\lvert\rvert\rho_\theta)=\argmin_{\theta\in\Theta} H(\rho_*,\rho_\theta),
\end{equation}
 meaning that the entropy of $\rho_*$ plays no active role in solving such minimization problem. There is a subtle issue in this reasoning: unlike KL divergence, the cross-entropy is not bounded from below, and in particular $H(\rho,\rho)\coloneqq H(\rho)\neq 0$. That is, we should compute $H(\rho_*)$ to estimate the minimum value of cross-entropy. Unfortunately, most of the empirical estimators to be used when $\rho_*$ is known through samples suffer in high dimension\citep{ao2023entropy}. Solving \eqref{eq:opt} is equivalent to maximum likelihood method, a widely used practice in parametric statistics\citep{stigler2007epic}. \\
The use of cross-entropy avoids the very problematic computation of $H(\rho_*)$, but in \ref{eq:cross-sim} the estimation of $Z_\theta$ is also needed. However, the most common routines for cross-entropy minimization are gradient-based: they rely on the gradient of $\partial_\theta H(\rho_*,\rho_\theta)$ and not on the cross-entropy itself. 
The former can be computed using the identity $\partial_\theta \log Z_{\theta} = - \int_{\mathbb{R}^d} \partial_\theta U_{\theta}(x)\rho_{\theta}(x) dx$, obtaining
\begin{equation}
\label{4:0}
\begin{aligned}
    \partial_\theta H(\rho_*,\rho_{\theta})&=\int_{\mathbb{R}^d} \partial_\theta U_{\theta}(x)\rho_*(x)dx-\int_{\mathbb{R}^d} \partial_\theta U_{\theta}(x)\rho_{\theta}(x)dx\\ 
    &\coloneqq \E_*[\partial_\theta U_{\theta}]-\E_{\theta} [\partial_\theta U_{\theta}].
\end{aligned}
\end{equation}
This is a crucial expression for the present work, and the consequence is immediate:
\begin{rem}[Fundamental problem for EBM training]
\label{rem:fun}
Estimating $\partial_\theta H(\rho_*,\rho_{\theta}) $ requires calculating the expectation $\E_\theta[\partial_\theta U_{\theta}]$. In contrast $\E_* [\partial_\theta U_\theta ]$ can be readily estimated on the data.
\end{rem}
Typical training methods, e.g. based on the so-called Constrastive Divergence\citep{hinton2002training} and its specifications (see Subsection \ref{sec:contrastive}), resort to various approximations to calculate the expectation $\E_\theta[\partial_\theta U_{\theta}]$. While these approaches have proven successful in many situations, they are prone to training instabilities that limit their applicability.
The cross-entropy is more stringent, and therefore better, than objectives like the Fisher divergence used to train other generative models: for example, unlike the latter, it is sensitive to the relative probability weights of modes on $\rho_*$ separated by low-density regions~\citep{song2021train}.
\section{EBMs among generative models}
\label{sec:ebm:gen}
{In this section, we briefly outline the main generative models, aiming to provide a general framework without focusing on technical details. As introduced earlier, a generative model is a computational tool that produces new instances representative of a dataset—for example, generating new images of dogs after training on a dataset of dog images. Evaluating the quality of generated data is often non-trivial, and designing a good generative model typically involves a trade-off between ease of training and ease of generation. \\
Historically, generating data meant collecting measurements. The advent of computer simulations, such as the pioneering Fermi-Pasta-Ulam-Tsingou experiment \citep{fermi1955studies}, marked a shift toward generating synthetic data from models. In statistics, this process is referred to as “sampling” (see Section \ref{sec:ebm:sam}). Today’s generative AI, enabled by vast computational resources and data availability, continues this shift by generating data from data. As with the early internet\footnote{\url{https://www.livinginternet.com/i/ii_arpanet.htm}}, what began in research now shapes everyday life. Generative Pre-Trained Transformers like ChatGPT exemplify this trend. These tools are prompting broader discussions about artificial general intelligence and its societal implications \citep{morozov2023true, fjelland2020general, federspiel2023threats}.\\
We now turn to the core families of generative models, and ultimately, we will highlight their connections to Energy-Based Models.}
\subsection{Variational Autoencoders}
{ As the name suggests, to introduce variational autoencoders (VAEs) \citep{girin2020dynamical}, we first recall the structure of a standard autoencoder (AE) \citep{hinton2006reducing}, illustrated in Figure \ref{fig:vae}. An AE is a deep neural network composed of two parts: an encoder $e(x)$, which maps the input $x \in \R^d$ to a latent representation $z \in \R^L$ with $L \leq d$, and a decoder $d(z)$, which attempts to reconstruct $x$ from $z$. Training minimizes the reconstruction error between $x$ and $d(e(x))$. Once trained, AEs can be used as compression tools, and additional layers can be applied in latent space for downstream tasks \citep{bank2023autoencoders}. This process is entirely deterministic.
\begin{figure}
    \centering
    \includegraphics[width=0.7\linewidth]{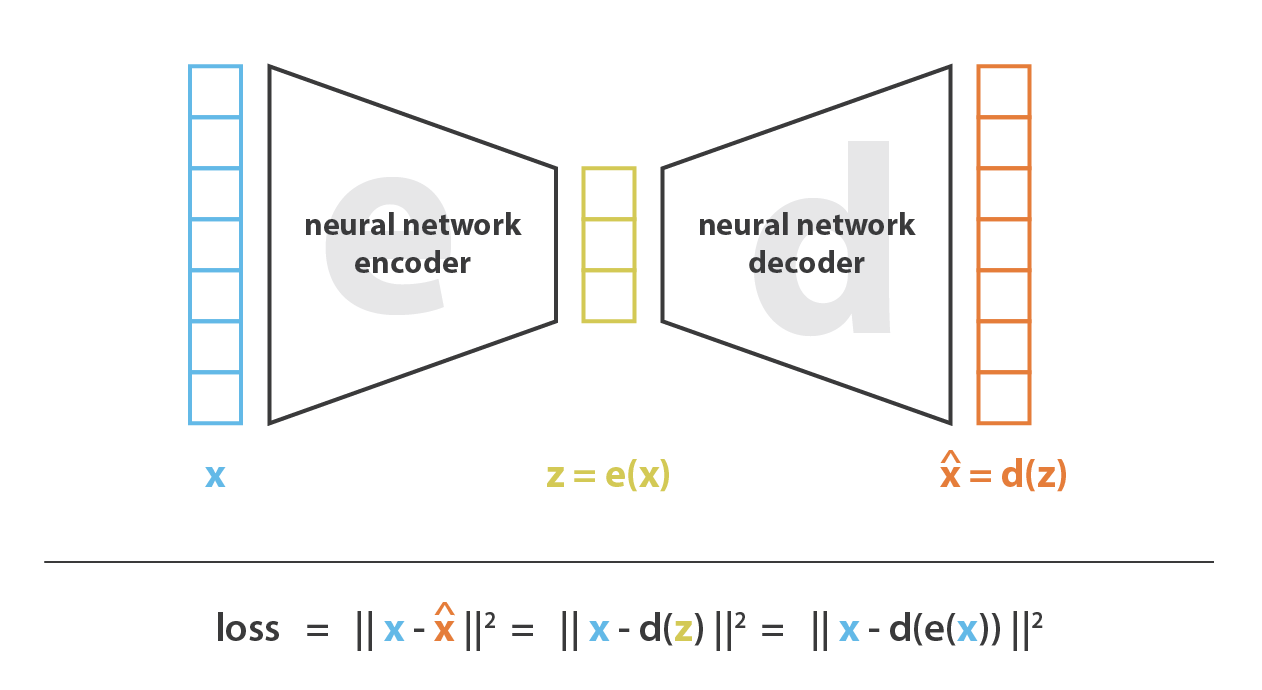}
    \caption{Representation of an autoencoder, taken from \url{https://towardsdatascience.com/understanding-variational-autoencoders-vaes-f70510919f73}}
    \label{fig:vae}
\end{figure}
Variational Autoencoders  \citep{DBLP:journals/corr/KingmaW13} extend AEs by introducing probabilistic encoders and decoders, respectively referred to as inference and generative models. Let $\rho_\theta(x,z) = \rho_\theta(x|z)\rho(z)$ be a joint distribution on $\R^d \times \R^L$, with prior $\rho(z) = \mathcal{N}(z; \boldsymbol{0}_L, \boldsymbol{I}_L)$ and decoder defined by a conditional Gaussian:
\begin{equation}
    \rho_{\theta}(x|z) = \mathcal{N}(x; \boldsymbol{\mu}_\theta(z), \operatorname{diag}\{\boldsymbol{\sigma}_\theta^2(z)\}),
\end{equation}
where the mean and variance are neural network outputs. Alternatives exist for other data types, e.g., audio \citep{girin2019notes}.

The marginal model distribution is given by
\begin{equation}
\label{eq:marginal}
    \rho_\theta(x) = \int_{\R^L} \rho_{\theta}(x|z)\rho(z)\,dz,
\end{equation}
and training aims to minimize the KL divergence from the true data distribution $\rho_*(x)$:
\begin{equation}
    \theta^* = \argmax_{\theta \in \Theta} \E_*[\log \rho_\theta(x)],
\end{equation}
analogously to EBMs. However, the marginal likelihood \eqref{eq:marginal} is generally intractable, preventing direct evaluation. To address this, VAEs rely on a variational approximation, which we introduce next:
\begin{defi}[ELBO]
Let $\mathcal F$ denote a variational family defined as a set of PDFs over the latent variables $z$. For any $q(z)\in\mathcal{F}$, the {\bf Evidence Lower Bound (ELBO)} (also known as \emph{variational free energy}) $\mathcal{L}:\Theta\times\mathcal{F}\times R^d\to \R$ is defined as
\begin{equation}
    \label{elbo}
    \mathcal{L}(\theta,q(z);x)=\E_{q(z)}[\log \rho_\theta (x,z)-\log q(z)]
\end{equation}
\end{defi}
\begin{lemma}
    The following properties hold true:
    \begin{enumerate}
        \item Decomposition of marginal log-likelihood\citep{neal1998view}. \begin{equation}
        \label{eq:dec}
            \log\rho_\theta=\mathcal{L}(\theta,q(z);x)+D_{\text{KL}}(q(z)\parallel\rho_\theta(z|x))
        \end{equation}
        \item Bound on marginal log-likelihood.
        \begin{equation}
        \label{eq:bound:like}
        \begin{split}
            \mathcal{L}(\theta,q(z);x)&\leq \log\rho_\theta(x) \\\mathcal{L}(\theta,q(z);x)&=\log\rho_\theta(x)\iff q(z)=\rho_\theta(z|x)                       
        \end{split}
        \end{equation}
    \end{enumerate}
\end{lemma}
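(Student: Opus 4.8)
The plan is to derive both statements from a single rewriting of the ELBO, using the chain rule for the joint density and then invoking the non-negativity of the KL divergence established earlier in the excerpt.

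First I would factor the joint density as $\rho_\theta(x,z)=\rho_\theta(z|x)\rho_\theta(x)$, so that $\log\rho_\theta(x,z)=\log\rho_\theta(z|x)+\log\rho_\theta(x)$. Inserting this into \eqref{elbo} and using that $\log\rho_\theta(x)$ is constant with respect to the expectation over $q(z)$, I obtain
\[
\mathcal{L}(\theta,q(z);x)=\log\rho_\theta(x)+\E_{q(z)}\!\left[\log\rho_\theta(z|x)-\log q(z)\right].
\]
The residual expectation is precisely $-\E_{q(z)}\!\left[\log\!\left(q(z)/\rho_\theta(z|x)\right)\right]=-D_{\text{KL}}(q(z)\parallel\rho_\theta(z|x))$, and rearranging gives exactly \eqref{eq:dec}, which proves the first part.

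For the second part I would apply the non-negativity of the KL divergence directly to \eqref{eq:dec}. Since $D_{\text{KL}}(q(z)\parallel\rho_\theta(z|x))\ge 0$, the decomposition immediately yields the bound $\mathcal{L}(\theta,q(z);x)\le\log\rho_\theta(x)$ stated in \eqref{eq:bound:like}. The equality condition then follows from the characterization of when the divergence vanishes: the bound is tight if and only if $D_{\text{KL}}(q(z)\parallel\rho_\theta(z|x))=0$, i.e. if and only if $q(z)=\rho_\theta(z|x)$ almost everywhere.

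There is no real obstacle here; the lemma is a direct consequence of the chain rule together with the already-stated properties of the KL divergence. The only point deserving minor care is the well-definedness of the expectations—one should assume that $q$ is absolutely continuous with respect to the posterior $\rho_\theta(z|x)$ so that the divergence term is meaningful, which is guaranteed by restricting $q$ to the variational family $\mathcal{F}$ and by the positivity of the densities in play.
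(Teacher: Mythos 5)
Your proposal is correct and follows essentially the same route as the paper: both proofs rest on the factorization $\rho_\theta(x,z)=\rho_\theta(z|x)\rho_\theta(x)$ (the paper phrases it as simplifying the ratio $\rho_\theta(x,z)/\rho_\theta(z|x)$ inside the expectation) to obtain the decomposition, and then deduce the bound and its equality case from the non-negativity of the KL divergence. The only cosmetic difference is direction---you expand the ELBO and pull out $\log\rho_\theta(x)$, whereas the paper adds $\mathcal{L}$ and the KL term and collapses them to $\log\rho_\theta(x)$---and your remark on absolute continuity is a reasonable extra precaution the paper leaves implicit.
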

\begin{proof}
    The proof of (1) is trivial:
    \begin{equation}
    \begin{aligned}
        &\mathcal{L}(\theta,q(z);x)+D_{\text{KL}}(q(z)\parallel\rho_\theta(z|x))=\E_{q(z)}[\log \rho_\theta (x,z)-\log q(z)]\\&+\E_{q(z)}[\log q(z)-\log \rho_\theta (z|x)]= \E_{q(z)}\left[\log\left(\f{\rho_\theta (x,z)}{\rho_\theta (z|x)}\right)\right]=\log \rho_\theta(x)
                \end{aligned}
    \end{equation}
    where we used the definition of conditional probability and the fact that the expectation is computed in the latent space. (2) is a direct consequence of \eqref{eq:dec} since the KL divergence is non-negative and identically zero just when $q(z)=\rho_\theta(z|x)$.
\end{proof}
Thanks to these results, the log-likelihood can be estimated via the Expectation-Maximization (EM) algorithm~\citep{dempster1977maximum}: the (E) step solves the unconstrained variational problem at fixed $\theta$
\begin{equation}
    q_*(z) = \arg\max_{q\in\mathcal{F}} \mathcal{L}(\theta,q(z);x),
\end{equation}
while the (M) step maximizes the ELBO w.r.t.\ $\theta$ at fixed $q(z)$. Since $q(z)$ depends on $x$, it is more precisely written as $q(z|x)$. Under suitable conditions, this EM procedure converges to a maximum and saturates the bound in~\eqref{eq:bound:like}.\\
However, solving this functional optimization can be intractable. A practical alternative is {\it fixed-form variational inference}~\citep{honkela2010approximate}, where the variational family $\mathcal{F}$ is restricted to parametric densities $q_\lambda(z|x)$ with parameters $\lambda \in \Lambda$. For instance, a Gaussian family with $q_\lambda(z|x) = \mathcal{N}(z;\boldsymbol{\mu},\boldsymbol{\Sigma})$ corresponds to $\lambda = \{\boldsymbol{\mu}, \boldsymbol{\Sigma}\}$. Then, the E-step reduces to
\begin{equation}
    \lambda^* = \arg\max_{\lambda} \mathcal{L}(\theta,\lambda;x).
\end{equation}
For a dataset $\mathcal{X} = \{x_i\}_{i=1}^N$, the total ELBO becomes
\begin{equation}
    \mathcal{L}(\theta,\lambda;\mathcal{X}) = \sum_{i=1}^N \mathcal{L}(\theta,\lambda_i;x_i),
\end{equation}
but optimizing $N$ local parameters $\lambda_i$ quickly becomes unmanageable, especially in high dimensions. To overcome this, {\it amortized variational inference} assumes a parametric map $f_\phi$ such that $\lambda_i = f_\phi(x_i)$, making the ELBO
\begin{equation}
    \label{eq:def:amo}
    \mathcal{L}(\theta,\phi;\mathcal{X}) = \sum_{i=1}^N \mathbb{E}_{q_\phi(z_i|x_i)}\left[ \log \rho_\theta(x_i,z_i) - \log q_\phi(z_i|x_i) \right].
\end{equation}
Thus, training the decoder $\rho_\theta(x|z)$ reduces to maximizing this objective jointly over $\theta$ and $\phi$, where $q_\phi(z|x)$ approximates the intractable posterior $\rho_\theta(z|x)$. The Variational Autoencoder (VAE) is a special case of this framework, where $q_\phi(z|x)$ is parameterized by a neural network (encoder). A common choice is Gaussian:
\begin{equation}
    q_{\phi}(z|x) = \mathcal{N}(z;\boldsymbol{\mu}_\phi(x), \operatorname{diag}(\boldsymbol\sigma^2_\phi(x))),
\end{equation}
with $\boldsymbol{\mu}_\phi$ and $\boldsymbol{\sigma}_\phi$ learned via backpropagation. Joint training of encoder and decoder can be suboptimal~\citep{he2018lagging}, yet it remains common.\\
Using KL divergence and Bayes' rule, we rewrite~\eqref{eq:def:amo} as
\begin{equation}
    \mathcal{L}(\theta,\phi;\mathcal{X}) = \sum_{i=1}^N \mathbb{E}_{q_\phi(z_i|x_i)}\left[\log \rho_\theta(x_i|z_i)\right] - \sum_{i=1}^N D_{\text{KL}}\left[q_\phi(z_i|x_i)\| \rho(z)\right],
\end{equation}
where the {\it first term} promotes reconstruction fidelity and the {\it second} regularizes the posterior to align with a prior (typically standard Gaussian), aiming for disentangled latent codes.\\
The last important point is that in computing gradients, we must avoid the intractability of the marginal likelihood. While the KL term is analytically tractable for Gaussians, the expectation term requires Monte Carlo sampling $z^{(r)}_i \sim q_\phi(z_i|x_i)$, but this sampling is non-differentiable. The reparametrization trick resolves this:
\begin{equation}
    z^{(r)}_i = \boldsymbol{\mu}_\phi(x_i) + \operatorname{diag}(\boldsymbol{\sigma}_\phi^2(x_i))^{1/2} \epsilon^{(r)}, \quad \epsilon^{(r)} \sim \mathcal{N}(0,I),
\end{equation}
enabling gradient flow through stochastic nodes. The resulting estimator is
\begin{equation}
    \hat{\mathcal{L}}(\theta,\phi;\mathcal{X}) = \sum_{i=1}^N \frac{1}{R}\sum_{r=1}^R \log \rho_\theta(x_i|z_i^{(r)}) - \sum_{i=1}^N D_{\text{KL}}\left[q_\phi(z_i|x_i)\| \rho(z)\right].
\end{equation}
VAEs thus optimize a surrogate for the log-likelihood while enabling efficient sampling from $\rho(z)$.\\
For completeness, we just mention some references showing main limitations of VAEs~\citep{wei2020variations,oussidi2018deep,sengupta2020review}: performance is sensitive to hyperparameters (e.g., prior choice, weightings), the latent structure may fail to disentangle data factors, and expressiveness may be limited due to simple priors. Generated samples may appear blurry or suffer from mode collapse, and VAEs often underperform on complex, high-dimensional datasets compared to alternative generative models.}
\subsection{Generative Adversarial Networks}
\label{gans:sec}
Generative adversarial networks\citep{goodfellow2014generative} (GANs) are a class of generative models which take inspiration from game theory. They consist of two neural networks (see Figure \ref{fig:gans}), namely a {\it generator} G and a {\it discriminator} D, trained simultaneously through the so-called adversarial training. Given a dataset $\mathcal{X}$ sampled from the unknown data distribution $\rho_*$, the generator is devoted to generate synthetic data that ideally resembles the training data. On the other hand, the discriminator has to discern between fake and true samples. In this sense, G and D are adversary: the generator aims to produce realistic data to fool the discriminator, while the discriminator strives to correctly classify real and fake data. Thus, the training ends when the discriminator becomes unable to effectively distinguish between real and generated samples. 
\begin{figure}[h]
    \centering
    \includegraphics[width=0.7\linewidth]{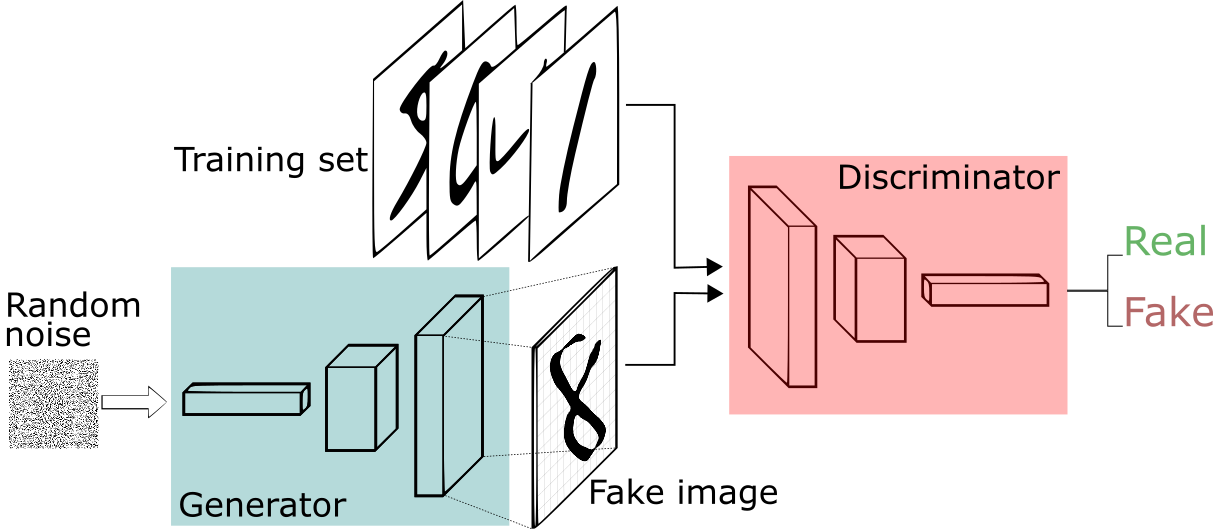}
    \caption{Scheme of the structure of GANs, taken from \url{https://sthalles.github.io/intro-to-gans/}.}
    \label{fig:gans}
\end{figure}
Let us present the mathematical formulation: firstly we define a prior $\rho_z(z)$, which is a PDF easy to sample from that serve to inject noise into the generator. The latter is a function $G_{\theta_g}(z)$ that is fed with noise and generate "fake" samples that should be similar to samples from $\rho_*$. The discriminator is a parametric function $D_{\theta_d}(x)$ that gives the probability that a sample $x$ comes from the training set rather than have been generated by $G$. Both $\theta_g$ and $\theta_d$ are parameters of a NN. The optimal weights are solution of the following two-player minimax problem:
\begin{equation}
\label{eq:minimax}
    \argmin_{\theta_g} \argmax_{\theta_d} \E_*[\log D_{\theta_d}(x)]+\E_{\rho_z}[\log(1-D_{\theta_d}(G_{\theta_g}(z)))]\coloneqq \argmin_{\theta_g} \argmax_{\theta_d} V(G,D)
\end{equation}
We refer in the following to $\rho_g(x)$ as the distribution of "fake" samples induced by the generator, that is such that
\begin{equation}
\label{eq:rho:g}
    \E_{\rho_z}[\log(1-D_{\theta_d}(G_{\theta_g}(z)))]=\E_{\rho_g}[\log(1-D_{\theta_d}(x))]
\end{equation}
The empirical idea to solve the minimax game is via an alternating algorithm:
\begin{prop}
    The optimization algorithm for a GAN is made by two alternating steps:
    \begin{itemize}
        \item {\bf Update of the discriminator}
        \begin{enumerate}
            \item Sample $\{z^{(i)}\}_{i=1}^N$ (noise) from $\rho_z$ and $\{x^{(i)}\}_{i=1}^N$ (data) from $\rho_*$.
            \item Compute $\nabla_{\theta_d} V(G,D)$ and perform {\bf gradient ascent} to update $\theta_d$.  
            \end{enumerate}
        \item {\bf Update of the generator}
        \begin{enumerate}
            \item Sample $\{z^{(i)}\}_{i=1}^N$ (noise) from $\rho_z$.
            \item Compute $\nabla_{\theta_g} V(G,D)$ and perform {\bf gradient descent} to update $\theta_g$.  
            \end{enumerate}
    \end{itemize}
\end{prop}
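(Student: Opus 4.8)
The plan is to read the alternating scheme as a stochastic gradient ascent--descent on the value function $V(G,D)$ defined in \eqref{eq:minimax}, and then to verify that the fixed point it targets coincides with the solution of the minimax problem. First I would observe that $V(G,D)$ is an expectation over $\rho_*$ and $\rho_z$, so the empirical averages built from the batches $\{x^{(i)}\}_{i=1}^N$ and $\{z^{(i)}\}_{i=1}^N$ are unbiased Monte-Carlo estimators of $V$; differentiating under the expectation, the sampled quantities $\nabla_{\theta_d} V$ and $\nabla_{\theta_g} V$ are likewise unbiased estimators of the true gradients. Since the outer problem is a minimization over $\theta_g$ and the inner one a maximization over $\theta_d$, ascending in $\theta_d$ and descending in $\theta_g$ are exactly the directions that move each player toward its own optimum, which justifies the two update blocks.

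Next I would characterize the target of the inner maximization. Fixing $G$ and using the identity \eqref{eq:rho:g} to convert the noise expectation into an expectation over the induced density $\rho_g$, I would rewrite the objective as a single integral $\int_{\R^d}[\rho_*(x)\log D_{\theta_d}(x)+\rho_g(x)\log(1-D_{\theta_d}(x))]\,dx$. The integrand has the form $a\log t + b\log(1-t)$ with $a,b\ge 0$, which is maximized pointwise at $t=a/(a+b)$; hence the optimal discriminator is $D^*_G(x)=\rho_*(x)/(\rho_*(x)+\rho_g(x))$. This is precisely the object the discriminator update is trying to approximate at each outer iteration.

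Finally, I would substitute $D^*_G$ back into $V$ to obtain the effective objective seen by the generator and recognize it, after adding and subtracting $\log 2$ in each term, as $-\log 4 + 2\,D_{\text{JS}}(\rho_*\parallel\rho_g)$, where $D_{\text{JS}}$ is the Jensen--Shannon divergence. Since the latter is non-negative and vanishes if and only if $\rho_g=\rho_*$, the minimax value equals $-\log 4$ and is attained exactly at $\rho_g=\rho_*$; this shows that descending $V$ in $\theta_g$ drives the fake distribution toward the data distribution, which is the stated goal of the generator block. Together with the previous step, this closes the justification that the two alternating updates push $D$ toward $D^*_G$ and $\rho_g$ toward $\rho_*$.

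The main obstacle is conceptual rather than computational: the clean argument above presumes that the inner maximization is solved exactly before each outer step, whereas the algorithm performs only one (or a few) gradient step(s) on $\theta_d$ between generator updates. Because the networks are non-convex and non-concave in their parameters, the order of $\min$ and $\max$ cannot be freely exchanged and convergence of alternating gradient updates to the equilibrium is not guaranteed; the honest reading is therefore that the proposition describes a well-motivated stochastic approximation of the idealized saddle-point problem rather than a provably convergent procedure. I would flag this gap explicitly, and also note that the differentiability required to backpropagate through $G_{\theta_g}$ is automatic here, since the randomness enters only through the fixed prior $\rho_z$.
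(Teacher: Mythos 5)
Your proposal is correct and follows essentially the same route as the paper: the proposition itself is presented there as an algorithmic description justified by the subsequent analysis, and your three steps (unbiased Monte-Carlo gradients, the pointwise maximization giving $D^*_G=\rho_*/(\rho_*+\rho_g)$, and the Jensen--Shannon decomposition yielding the value $-\log 4$ at $\rho_g=\rho_*$) reproduce exactly the paper's two theorems on the optimal discriminator and generator. Your closing caveat about alternating parameter-space updates not being provably convergent is also the same qualification the paper makes around its convergence theorem and in its discussion of GAN drawbacks.
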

This proposal is driven by common sense, but a more careful analysis of the minimax game is necessary to ensure convergence of such algorithm. In order to characterize the solutions of this adversarial game, it is necessary to search for the optima. The method of proof is: (1) classify solutions of optimization of $D$ at fixed $G$ and viceversa and then (2) present a convergence result of the alternating game. Let us start from the update of the discriminator:
\begin{theo}[Existence of optimal discriminator\citep{goodfellow2014generative}]
    For $G$ fixed, the optimal discriminator $D$ is 
    \begin{equation}
        D^*_G=\frac{\rho_*(x)}{\rho_*(x)+\rho_g(x)}
    \end{equation}
\end{theo}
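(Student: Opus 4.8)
The plan is to treat the objective $V(G,D)$, for fixed generator $G$, as a functional of the discriminator $D$ and to maximize it pointwise over $x$. First I would rewrite both expectations in \eqref{eq:minimax} as integrals over $\R^d$ against a common measure. The first term $\E_*[\log D_{\theta_d}(x)]$ is already an integral $\int \rho_*(x)\log D(x)\,dx$. For the second term, I would invoke \eqref{eq:rho:g} to re-express $\E_{\rho_z}[\log(1-D(G(z)))]$ as $\E_{\rho_g}[\log(1-D(x))] = \int \rho_g(x)\log(1-D(x))\,dx$; this change of variables from the latent space to data space is the conceptual crux, since it lets both terms live on the same domain. Combining them gives
\begin{equation}
    V(G,D)=\int_{\R^d}\Bigl[\rho_*(x)\log D(x)+\rho_g(x)\log\bigl(1-D(x)\bigr)\Bigr]dx.
\end{equation}

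Next I would observe that, because we optimize over all measurable functions $D$ with values in $(0,1)$ and the integrand at each $x$ depends only on the scalar value $D(x)$, the maximization can be carried out independently at each point. Thus the problem reduces to maximizing the scalar map $t\mapsto a\log t+b\log(1-t)$ on $(0,1)$, with $a=\rho_*(x)\ge 0$ and $b=\rho_g(x)\ge 0$. Differentiating gives $a/t-b/(1-t)=0$, whose unique stationary point is $t^*=a/(a+b)$; the second derivative $-a/t^2-b/(1-t)^2$ is strictly negative, so $t^*$ is the global maximizer. Substituting back $a=\rho_*(x)$ and $b=\rho_g(x)$ yields exactly
\begin{equation}
    D^*_G(x)=\frac{\rho_*(x)}{\rho_*(x)+\rho_g(x)},
\end{equation}
which is the claimed form.

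The main obstacle is not the one-variable calculus but the justification of the pointwise reduction, namely that the $D$ maximizing the integral can be obtained by maximizing the integrand at each $x$ separately. This requires that $D$ ranges over a sufficiently rich class (all measurable functions into $(0,1)$), which is the idealized assumption implicit in the theorem; in practice $D_{\theta_d}$ is a restricted parametric family, so the optimum is only approximate. I would also note the mild technical points that the support condition $\rho_*(x)+\rho_g(x)>0$ is needed for $D^*_G$ to be well defined (on the complementary null set the value of $D$ is irrelevant), and that the logarithmic integrand is integrable so Fubini-type interchange and the pointwise argument are legitimate. With these caveats stated, the proof is essentially the scalar optimization above.
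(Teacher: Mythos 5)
Your proof is correct and follows essentially the same route as the paper's: rewrite $V(G,D)$ as a single integral over data space via \eqref{eq:rho:g}, then maximize the integrand pointwise using the fact that $t\mapsto a\log t+b\log(1-t)$ attains its maximum on $(0,1)$ at $a/(a+b)$. Your added remarks on the measurable-function idealization and the support condition are sensible elaborations of the paper's brief note that $D$ need only be defined on $\mathrm{Supp}(\rho_*)\cup\mathrm{Supp}(\rho_g)$.
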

\begin{proof}
    Using \eqref{eq:minimax} and \eqref{eq:rho:g}, we have 
    \begin{equation}
        V(G,D)=\int_\Omega (\rho_*(x) \log D_{\theta_d}(x) + \rho_g(x) (1-D_{\theta_d}(x))) dx 
    \end{equation}
The function $y\to a\log(y) +b \log(1-y)$ achieve its maximum in $(0,1)$ at $a/(a+b)$ for $(a,b)\neq (0,0)$. Applied to the case in study, the discriminator can be defined just in $Supp(\rho_*(x)) \cup Supp(\rho_g(x))$, hence concluding the proof.
\end{proof}
This lemma ensure that the gradient ascending will eventually reach a maximum, that is
\begin{equation}
    C(G)=\argmax_D V(G,D) = \E_*\left[\frac{\rho_*(x)}{\rho_*(x)+\rho_g(x)}\right]+\E_{\rho_g}\left[\frac{\rho_g(x)}{\rho_*(x)+\rho_g(x)}\right]
\end{equation}
Now we need to characterize the solutions of the minimization problem $\argmin_G C(G)$  
\begin{theo}[Existence of optimal generator\citep{goodfellow2014generative}]
\label{theo:optG}
  At fixed $D=D^*_G$, the optimal generator $G^*$ induce a $\rho_g$ such that
$\rho_g = \rho_*$. At that point, $C(G^*)=-\log 4$.
\end{theo}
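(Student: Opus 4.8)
The plan is to substitute the optimal discriminator $D^*_G$ into the value function and then recognize the resulting functional of $\rho_g$ as a shifted Jensen--Shannon divergence, whose nonnegativity pins down the minimum. Concretely, inserting $D^*_G=\rho_*/(\rho_*+\rho_g)$ into $V(G,D)=\E_*[\log D]+\E_{\rho_g}[\log(1-D)]$ from~\eqref{eq:minimax}--\eqref{eq:rho:g} gives
\begin{equation}
C(G)=\E_*\left[\log\f{\rho_*(x)}{\rho_*(x)+\rho_g(x)}\right]+\E_{\rho_g}\left[\log\f{\rho_g(x)}{\rho_*(x)+\rho_g(x)}\right].
\end{equation}
The first check is the claimed value: setting $\rho_g=\rho_*$ makes each logarithm equal $\log\tfrac12$, so $C(G^*)=-\log 2-\log 2=-\log 4$.

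The key step is to show this value is the global minimum over all densities $\rho_g$. I would add and subtract $\log 2$ inside each expectation, equivalently rewriting each fraction with denominator $(\rho_*+\rho_g)/2$, to obtain
\begin{equation}
C(G)=-\log 4+D_{\text{KL}}\!\left(\rho_*\,\big\|\,\f{\rho_*+\rho_g}{2}\right)+D_{\text{KL}}\!\left(\rho_g\,\big\|\,\f{\rho_*+\rho_g}{2}\right),
\end{equation}
where the two residual terms are exactly the Kullback--Leibler divergences of $\rho_*$ and $\rho_g$ from their mixture $(\rho_*+\rho_g)/2$ (their sum being twice the Jensen--Shannon divergence). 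Recognizing this combination, by inserting the factor of $2$, is the only nonroutine algebraic move in the argument.

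To conclude, I would invoke the nonnegativity of the KL divergence established earlier, namely $D_{\text{KL}}(\rho_1\parallel\rho_2)\ge 0$ with equality if and only if $\rho_1=\rho_2$ almost everywhere. This yields $C(G)\ge-\log 4$, and equality forces both divergences to vanish, i.e.\ $\rho_*=(\rho_*+\rho_g)/2=\rho_g$, so that $\rho_g=\rho_*$ is the unique minimizer and there $C(G^*)=-\log 4$. The main subtlety is not the algebra but the existence claim itself: the argument only establishes that $\rho_g=\rho_*$ minimizes $C$ over the space of densities, whereas asserting that some generator $G^*$ \emph{attains} it requires the realizability hypothesis (the analogue of Assumption~2.2) that $\rho_*$ lies among the pushforward distributions $\{\rho_g\}$ reachable by $G_{\theta_g}$; without such expressiveness one obtains a minimizing density but not an attaining generator.
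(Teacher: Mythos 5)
Your proposal is correct and follows essentially the same route as the paper: substitute $D^*_G$ into $V(G,D)$, verify the value $-\log 4$ at $\rho_g=\rho_*$, then add and subtract $\log 4$ to expose the sum of two KL divergences (twice the Jensen--Shannon divergence) whose nonnegativity and vanishing condition pin down the unique global minimum. Your closing caveat about realizability --- that an attaining generator $G^*$ exists only under a sufficient-capacity assumption --- is a sound observation the paper defers to its subsequent convergence theorem rather than addressing here.
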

\begin{proof}
    Regarding the last point, for $\rho_g = \rho_*$ we obtain $D^*_G=1/2$, that inserted in $C(G)$ gives exactly $-\log 4$. We need to test whether this is a global optimum: we can sum and subtract $-\log 4$ to $C(G)$ obtaining
    \begin{equation}
    \begin{aligned}
        C(G)&=-\log (4)+\infdiv[\bigg]{\rho_{*}}{ \frac{\rho_{*}+\rho_g}{2}}+\infdiv[\bigg]{\rho_{g}}{ \frac{\rho_{*}+\rho_g}{2}}\\&=-\log (4)+2\cdot JSD(\rho_{*}\parallel\rho_g)        
    \end{aligned}
    \end{equation}
    where $JSD$ is the Jensen-Shannon divergence\citep{manning1999foundations}. Such quantity has the same non-negativity property of KL divergence, i.e. $JSD(\rho_{*}\parallel\rho_g)\geq 0$ and $JSD(\rho_{*}\parallel\rho_g) = 0$ iff $\rho_g = \rho_*$. This proves that $\rho_g = \rho_*$, or more precisely the corresponding generator $G^*$,  is the global minimum for $C(G)$.
\end{proof}
To summarize, we have showed separate theoretical guarantees about convergence of gradient ascent and descent. However, we need to show that alternating those two steps would eventually converge to the global Nash equilibrium of the minimax game, i.e. $\rho_*=\rho_g$. The result is summarized in the following Theorem\citep{goodfellow2014generative} of which omit the proof for the sake of brevity.
\begin{theo}
\label{theo:gan:defi}
    If $G$ and $D$ have enough capacity, and at each step of the alternating algorithm, the discriminator
is allowed to reach its optimum given $G$, and $\rho_g$ is updated so as to improve the criterion
\begin{equation}
    \E_*[\log D^*_G(x)]+\E_{\rho_g}[\log(1-D_G^*(x)]
\end{equation}
then $\rho_g$ converges to $\rho_*$.
\end{theo}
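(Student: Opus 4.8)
The plan is to abandon the parameter space entirely and recast the alternating procedure as a descent method in the space of densities $\rho_g$ itself; the ``enough capacity'' hypothesis is precisely what licenses this, since it lets us treat $\rho_g$ as a freely optimizable probability density rather than merely the pushforward of $\rho_z$ through a fixed-architecture generator $G_{\theta_g}$. Under this viewpoint the functional to be minimized is $C(G)=\sup_D U(\rho_g,D)$, where $U(\rho_g,D)=\E_*[\log D(x)]+\E_{\rho_g}[\log(1-D(x))]$, and the update rule of the theorem becomes a gradient step on $C(G)$ computed at the maximizing discriminator.

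First I would establish convexity. For each fixed $D$, the map $\rho_g \mapsto U(\rho_g,D)$ is linear, hence convex, because $\rho_g$ enters only through $\E_{\rho_g}[\log(1-D(x))]=\int_\Omega \rho_g(x)\log(1-D(x))\,dx$. Since a pointwise supremum of convex functions is convex, it follows that $C(G)=\sup_D U(\rho_g,D)$ is convex in $\rho_g$. Next I would invoke Theorem \ref{theo:optG}, which already supplies both the location and the uniqueness of the minimizer: the identity $C(G)+\log 4 = 2\,JSD(\rho_*\parallel\rho_g)$ shows that $C(G)$ attains its global minimum exactly when $\rho_g=\rho_*$, and nowhere else.

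The crucial technical ingredient is the subgradient characterization of suprema of convex functions: if $f=\sup_{\alpha}f_\alpha$ with each $f_\alpha$ convex, then the derivative of $f_\beta$ evaluated at the maximizing index $\beta$ lies in the subdifferential $\partial f$. Applied here, this says that the update of $\rho_g$ computed using the optimal discriminator $D^*_G$—which is exactly what the algorithm performs once $D$ has been driven to its optimum given $G$, as guaranteed by the Existence-of-optimal-discriminator theorem—is a legitimate subgradient descent step on the convex functional $C(G)$. Granting sufficiently small step sizes, descent on a convex functional with a unique global minimizer converges to that minimizer, so the iterates $\rho_g$ converge to $\rho_*$.

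The hard part will be making this last step fully rigorous rather than heuristic. One must check that the supremum over $D$ is genuinely attained (supplied by the optimal-discriminator result), that the chosen subgradient at $D^*_G$ truly belongs to $\partial C$, and that gradient descent in the infinite-dimensional space of densities—with its attendant issues of step-size selection and of whether the parametric model can realize the ideal functional-space update—actually converges. In practice these are exactly the points where the idealization breaks down, which is why the result is a statement about densities under infinite capacity rather than about the parameters $\theta_g,\theta_d$ one optimizes in reality.
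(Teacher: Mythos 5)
Your argument is correct and is essentially the proof that the paper deliberately omits (it defers to \citet{goodfellow2014generative}, whose Proposition~2 argues exactly as you do): pass to the space of densities under the infinite-capacity assumption, observe that $U(\rho_g,D)$ is linear hence convex in $\rho_g$ for fixed $D$, so $C(G)=\sup_D U(\rho_g,D)$ is convex with unique minimizer $\rho_*$ by the Jensen--Shannon identity of Theorem~\ref{theo:optG}, and then use the fact that the derivative at the maximizing $D^*_G$ is a valid subgradient of the supremum to conclude that the update is subgradient descent converging to $\rho_*$. You also correctly flag the same caveats the original authors do --- the convergence of the infinite-dimensional (sub)gradient descent is left heuristic, and the real algorithm updates $\theta_g$ rather than $\rho_g$ --- so there is nothing to add beyond noting that the theorem's hypothesis only asks that $\rho_g$ be ``updated so as to improve the criterion,'' which the standard argument (and yours) silently strengthens to a genuine subgradient step.
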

Ideally, the theoretical treatment of Generative Adversarial Networks  concludes with the proof that the proposed minimax game has a unique Nash equilibrium. This equilibrium corresponds to a generator capable of sampling from $\rho_*$, making it indistinguishable from true samples by the discriminator, performing no better than a random classifier with a probability of $1/2$.\\ 
GANs have several drawbacks \citep{radford2016unsupervised,salimans2016improved,arjovsky2016towards}, as highlighted by practical applications of Theorem \ref{theo:gan:defi}. Optimization in parameter space on $\theta_g$ rather than functional space on $\rho_g$ introduces challenges, making what should be a convex problem non-convex. Additionally, training the generator is difficult due to gradients being near zero early on when the generator performs poorly. GANs are also known for training instability, requiring careful balancing between the generator and discriminator. Hyperparameter tuning is often needed, and GANs require large datasets for generalization. Sample generation can suffer from mode collapse, where outputs lack diversity, and computationally, GANs can be demanding, especially with high-resolution images. Evaluating GAN performance is difficult, as metrics like Inception Score and Frechet Inception Distance have limitations, complicating model comparison. Despite these issues, GANs remain important for unsupervised learning and model robustness.
\subsection{Diffusion Models}
\label{subsec:diff}
{Diffusion generative models \citep{sohl2015deep,yang2023diffusion,croitoru2023diffusion} are a class of models that use diffusion processes to transform a simple distribution into a more complex one over time. The process starts with a basic distribution, such as Gaussian noise, and iteratively transforms it into a target distribution that approximates real data, such as images. Recently, diffusion models have become state-of-the-art in several domains, partly replacing GANs \citep{dhariwal2021diffusion}. In this section, we summarize the common features of diffusion models without diving too deeply into specific variations.\\ As with other generative models, the core ingredient is a dataset $\mathcal{X} = {x_i}^N_{i=1}$, where $x_i$ are drawn from an unknown target density $\rho_*(x)$. For simplicity, we assume $\mathcal{X} \subset \mathbb{R}^d$. In VAEs and GANs, the goal is to generate new samples from noise: VAEs decode from a Gaussian in latent space, and GANs generate from noise through the generator $G$. Similarly, in diffusion models, the aim is to push samples extracted from a simple distribution, like a Gaussian, toward the data distribution.\\ Since the following discussion is closely related to stochastic calculus \citep{rogers2000diffusions}, we introduce the notation. We denote $X_t \in \mathbb{R}^d$ as a stochastic process, a sequence of random variables where $t \in \mathbb{R}$ represents continuous time. Unlike deterministic processes, the focus here is on the distribution in law of $X_t$, denoted $\rho(x,t)$, rather than on the individual trajectory. Just as deterministic trajectories are solutions to ordinary differential equations (ODEs), stochastic processes are solutions to stochastic differential equations (SDEs).}
\begin{prop}[SDE and Fokker-Plank PDE]
\label{prop:fokker-pde}
    Given the drift $\mu:\R^d\times \R\to \R^d$ and the diffusion matrix $\sigma:\R^d\times \R\to \R^{d,d}$, let us consider the stochastic process $X_t$ solution for $t\in [0,T]\subset[0,+\infty] $ of the SDE
    \begin{equation}
        dX_t=\mu(X_t,t)dt+\sigma(X_t,t)dW_t,\quad\quad X_0\sim\rho_0
    \end{equation}
    where $W_t$ is a Wiener process. Using Ito convention, the law of $X_t$, namely $\rho(x,t)$, satisfies the Fokker-Planck partial differential equation (PDE)
    \begin{equation}
        {\displaystyle {\frac {\partial }{\partial t}}\rho(x,t)=-\nabla\cdot\left[\mu (x,t)\rho(x,t)\right]+\Delta\left[\f{\sigma(x,t)^2}{2}\rho(x,t)\right],}\quad\quad \rho(x,0)=\rho_0(x)
    \end{equation}
\end{prop}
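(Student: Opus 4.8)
The plan is to derive the Fokker-Planck equation in weak (distributional) form by testing the law of $X_t$ against smooth compactly supported functions and invoking It\^o's lemma, then removing the test function at the end. Concretely, I would fix an arbitrary $f\in C_c^\infty(\R^d)$ and apply It\^o's formula to $f(X_t)$; since $f$ carries no explicit time dependence, this produces
\begin{equation}
df(X_t)=\left[\mu\cdot\nabla f+\f12\sum_{i,j}(\sigma\sigma^T)_{ij}\,\partial_{x_i}\partial_{x_j}f\right](X_t,t)\,dt+\nabla f(X_t)\cdot\sigma(X_t,t)\,dW_t,
\end{equation}
where the second-order term originates from the quadratic covariation $d\langle X^i,X^j\rangle_t=(\sigma\sigma^T)_{ij}(X_t,t)\,dt$. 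The bracketed operator is precisely the infinitesimal generator of the diffusion acting on $f$.

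The second step is to take expectations. Because $f$ is compactly supported, the integrand of the It\^o integral is bounded, so $\int_0^t\nabla f\cdot\sigma\,dW_s$ is a genuine martingale and contributes zero mean; the noise term therefore disappears. Writing $\E[g(X_t)]=\int_{\R^d}g(x)\rho(x,t)\,dx$ and differentiating in $t$, the identity becomes, after integrating by parts once in the drift term and twice in the diffusion term (with all boundary terms killed by the compact support of $f$),
\begin{equation}
\int_{\R^d}f(x)\left[\partial_t\rho+\nabla\cdot(\mu\rho)-\f12\sum_{i,j}\partial_{x_i}\partial_{x_j}\big((\sigma\sigma^T)_{ij}\rho\big)\right]dx=0.
\end{equation}
Since this holds for every test function $f$, the fundamental lemma of the calculus of variations forces the bracket to vanish identically, which is exactly the stated PDE once $\Delta[\sigma^2\rho/2]$ is read as the double-divergence $\f12\sum_{i,j}\partial_{x_i}\partial_{x_j}[(\sigma\sigma^T)_{ij}\rho]$ (the two coincide in the isotropic case $\sigma\sigma^T=\sigma^2 I$ that the notation suggests).

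The main obstacle is not this formal computation but its rigorous justification. I would need: (i) enough regularity and integrability on $\mu,\sigma$ for It\^o's formula to apply and for $\int_0^t\nabla f\cdot\sigma\,dW_s$ to be a true (not merely local) martingale---boundedness of the coefficients on $\mathrm{supp}(f)$ suffices; (ii) justification of interchanging $\partial_t$ with the spatial integral, via dominated convergence using smoothness in $t$ of $t\mapsto\E[f(X_t)]$; and (iii) a correct reading of the conclusion---this argument yields the equation only in the weak sense, and promoting $\rho(\cdot,t)$ to a classical solution of the PDE requires extra structure, e.g.\ uniform ellipticity of $\sigma\sigma^T$ or a H\"ormander hypoellipticity condition, ensuring the transition density is smooth.
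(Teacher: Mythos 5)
Your derivation is correct and is the standard one: apply It\^o's formula to a test function $f\in C_c^\infty(\R^d)$, take expectations so the martingale term vanishes, integrate by parts, and conclude by the arbitrariness of $f$. The paper itself states this proposition without proof (it is invoked as a known result from stochastic calculus), so there is nothing to compare against beyond noting that your argument is exactly the textbook route one would supply. Two of your side remarks add genuine value relative to the bare statement: first, you correctly point out that the term $\Delta\left[\tfrac{\sigma^2}{2}\rho\right]$ in the paper's display is only literally correct in the scalar/isotropic case and should in general be read as the double divergence $\tfrac12\sum_{i,j}\partial_{x_i}\partial_{x_j}\bigl[(\sigma\sigma^T)_{ij}\rho\bigr]$, a point the paper glosses over despite declaring $\sigma$ to be matrix-valued; second, your caveat that the argument only delivers the PDE in the weak sense, with classical regularity requiring ellipticity or H\"ormander-type conditions, is an honest acknowledgment of a gap that the paper's informal statement (which presumes the law has a density $\rho(x,t)$ at all) silently assumes away.
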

This proposition is important to understand the relation between the single random process $X_t$ and its distribution in law. Let us present a simple example to clarify such connection.
\begin{exm}[Wiener process]
    Let us consider the case $\mu(x,t)=0$ and $\sigma(x,t)=1$ in $d=1$, that corresponds to the SDE
    \begin{equation}
        dX_t=dW_t
    \end{equation}
    The solution of the associated Fokker-Planck equation 
    \begin{equation}
        \displaystyle {\frac {\partial \rho(x,t)}{\partial t}}={\frac {1}{2}}{\frac {\partial ^{2}\rho(x,t)}{\partial x^{2}}},
    \end{equation}
    for a delta initial datum $\rho(x,0)=\delta(x)$ is precisely
    \begin{equation}
        \rho(x,t)=\frac{1}{\sqrt{2\pi t}}e^{-x^2/2t}
    \end{equation}
    This is a gaussian density with variance proportional to $t$. That is, the initial concentrated density spreads on the real line.
\end{exm}
This brief summary about SDEs is sufficient to provide a consistent definition of generative diffusion model:
\begin{defi}[Generative diffusion model]
\label{gen:diff}
    Let us consider the data distribution $\rho_*:\R^d\to \R_+$ and a base distribution $\bar\rho(x):\R^d\to \R_+$. Given a time interval $[0,T]\in[0,\infty]$, a generative diffusion model is an SDE with fixed terminal condition
    \begin{equation}
    \label{eq:sde:gen}
        dX_t=\mu(X_t,t)dt+\sigma(X_t,t)dW_t,\quad\quad X_0\sim\bar\rho,\quad X_T\sim\rho_*
    \end{equation}
    where $W_t$ is a Wiener process.
\end{defi}
This definition resembles concepts from stochastic optimal control\citep{fleming2012deterministic}: in fact, the terminal condition is not sufficient to uniquely fix $\mu(X_t,t)$ and $\sigma(X_t,t)$. Under this point of view, the specification of a particular class of diffusion models reduces to a {\it prescription on how to determine the drift and the diffusion matrix}. In the following we will summarize two highlighted methods present in literature.
\paragraph{Score-based diffusion\citep{song2020score}.} To explain what is score-based diffusion we need the following preliminaries:
\begin{defi}
    Given a PDF $\rho(x)$, the {\bf score} is the vector field 
    \begin{equation}
        s(x)=\nabla \log \rho(x)
    \end{equation}
\end{defi}
\begin{prop}[Naive score-based diffusion]
\label{prop:score-match}
    For any $\varepsilon>0$ and $\rho_0(x)$, the choice $\mu(x,t)=\varepsilon s_*(x)=\varepsilon\nabla\log\rho_*(x)$ and $\sigma=\sqrt{2\varepsilon}$ in \eqref{eq:sde:gen} satisfies the endpoint condition for $T=\infty$.
\end{prop}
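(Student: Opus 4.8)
The plan is to recognize the proposed SDE as the overdamped Langevin dynamics of the target. Since $\mu(x,t)=\varepsilon\nabla\log\rho_*(x)$ and $\sigma=\sqrt{2\varepsilon}$ are time-independent, \eqref{eq:sde:gen} becomes
\begin{equation}
    dX_t = \varepsilon\,\nabla\log\rho_*(X_t)\,dt + \sqrt{2\varepsilon}\,dW_t,
\end{equation}
i.e. Langevin diffusion for the potential $-\log\rho_*$, which under the realizability assumption $\rho_*=\rho_{\theta_*}$ equals $U_{\theta_*}$ up to the additive constant $\log Z_{\theta_*}$ (annihilated by the gradient). Two facts must then be shown: that $\rho_*$ is the stationary law of this dynamics, and that the law of $X_t$ converges to it as $t\to\infty$ from an arbitrary initial density $\rho_0$. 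The Lipschitz-gradient bound of Assumption~\ref{eq:assump:U} also guarantees, along the way, that the SDE admits a unique strong solution and that the associated PDE is well-posed.

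First I would write the Fokker--Planck equation via Proposition~\ref{prop:fokker-pde}. With the stated coefficients it reads
\begin{equation}
    \partial_t\rho = -\varepsilon\,\nabla\cdot\left[(\nabla\log\rho_*)\rho\right] + \varepsilon\Delta\rho = \varepsilon\,\nabla\cdot\left[\rho_*\nabla\left(\frac{\rho}{\rho_*}\right)\right],
\end{equation}
the last equality following from the identity $\rho_*\nabla(\rho/\rho_*)=\nabla\rho-\rho\nabla\log\rho_*$. Stationarity of $\rho_*$ is then immediate, since $\rho=\rho_*$ forces $\nabla(\rho/\rho_*)=0$; equivalently, $\rho_*\nabla\log\rho_*=\nabla\rho_*$ makes the drift and diffusion fluxes cancel, so $\partial_t\rho_*=0$.

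Second, to prove convergence I would use the relative entropy as a Lyapunov functional (the $H$-theorem). Differentiating $D_{\mathrm{KL}}(\rho_t\parallel\rho_*)$ along the flow, using $\int\partial_t\rho\,dx=0$ to discard the constant term and integrating by parts in the divergence form above, yields the entropy-dissipation identity
\begin{equation}
    \frac{d}{dt}D_{\mathrm{KL}}(\rho_t\parallel\rho_*) = -\varepsilon\int_{\R^d}\frac{\rho_*^2}{\rho}\left|\nabla\left(\frac{\rho}{\rho_*}\right)\right|^2 dx \le 0,
\end{equation}
with equality iff $\rho/\rho_*$ is constant, i.e. $\rho=\rho_*$. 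Thus the relative entropy is monotonically nonincreasing and $\rho_*$ is its unique minimizer.

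The hard part is upgrading this monotonicity to genuine convergence, that is, ruling out the entropy stalling at a strictly positive value. This requires a functional inequality---a logarithmic Sobolev (or at least Poincar\'e) inequality---bounding $D_{\mathrm{KL}}(\rho_t\parallel\rho_*)$ by the relative Fisher information appearing on the right-hand side, so that Gr\"onwall's lemma converts the dissipation identity into exponential decay. Assumption~\ref{eq:assump:U} is exactly what delivers this: the confinement condition $x\cdot\nabla U_{\theta_*}(x)\ge a|x|^2$ outside a compact set provides the dissipativity needed for a Lyapunov argument, while the bounded Hessian controls the local behaviour; concretely one may split $U_{\theta_*}$ into a uniformly convex confining part (handled by Bakry--\'Emery) plus a bounded perturbation over the compact non-convex region (handled by Holley--Stroock). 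This last step is where the non-log-concave (multimodal) case bites: the log-Sobolev constant can degrade exponentially with the height of the barriers separating modes, so convergence holds but the rate may be impractically slow---precisely the limitation that makes this scheme ``naive.''
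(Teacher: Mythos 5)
Your proof follows essentially the same route as the paper's: writing the Fokker--Planck equation in divergence form, checking stationarity of $\rho_*$ by inspection, and deriving the entropy-dissipation identity showing $D_{\mathrm{KL}}(\rho_t\parallel\rho_*)$ is non-increasing (your integrand $\rho_*^2\rho^{-1}\left|\nabla(\rho/\rho_*)\right|^2$ is the same as the paper's $\rho\left|\nabla\log(\rho/\rho_*)\right|^2$). The one substantive difference is that you correctly observe that monotone dissipation alone does not rule out the relative entropy stalling at a positive value, and you sketch the logarithmic Sobolev / Poincar\'e step (Bakry--\'Emery plus Holley--Stroock under Assumption~\ref{eq:assump:U}) needed to upgrade this to genuine convergence --- a point the paper's proof glosses over, stopping at $\frac{d}{dt}D_{\mathrm{KL}}\le 0$ and declaring the proof concluded, so your version is in fact the more complete of the two.
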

\begin{proof}
    If we consider the Fokker-Planck PDE associated to \eqref{eq:sde:gen} with the selected drift and variance, we have
    \begin{equation}
        \partial_t \rho(x,t)=\nabla\cdot [-s_*(x)\rho(x,t)+\nabla \rho(x,t)]= \nabla\cdot \left[\rho(x,t) \nabla \log\left(\f {\rho(x,t)}{\rho_*(x)}\right)\right]
    \end{equation}
    By direct substitution, the stationary probability density $\rho_*(x)$ is a solution. For uniqueness, we need to prove that any solution of the PDE would converge to this solution. A formal argument is based on Jordan-Kinderlehrer-Otto (JKO) variational formulation of Fokker-Planck equation\citep{jordan1998variational}, interpreted as a gradient flow in probability space with respect to Wasserstein-2 distance. An alternative way is the following: for any solution $\rho(x,t)$, we can compute the time derivative of the KL divergence between such solution and $\rho_*(x)$. If we define $R=\rho/\rho_*$:
    \begin{equation}
        \frac{d}{dt}D_{\text{KL}}(\rho\parallel\rho_*)=\frac{d}{dt}\int_{\R^d} \rho \log R \, dx =\int_{\R^d} \partial_t \rho \log R \, dx+ \int_{\R^d} \f\rho R \partial_t R\, dx 
    \end{equation}
    We can use Fokker-Planck equation to substitute $\partial_t\rho$ and integrate by parts:
    \begin{equation}
        \frac{d}{dt}D_{\text{KL}}(\rho\parallel\rho_*)=\frac{d}{dt}\int_{\R^d} \rho \log R \, dx =-\int_{\R^d} \rho |\nabla \log R|^2 \, dx+ \int_{\R^d} \rho_* \partial_t R\, dx 
    \end{equation}
    We notice that $\rho_* \partial_t R=\nabla\cdot (\rho\log R)$, hence that the second addend is zero by integration by parts. The conclusion is that 
    \begin{equation}
        \frac{d}{dt}D_{\text{KL}}(\rho\parallel\rho_*)=-\int_{\R^d} \rho |\nabla \log R|^2 \, dx\leq 0,
    \end{equation}
    concluding the proof.
\end{proof}
The result seems to say that we are able to build a diffusion generative models estimating the score of the target. In a data driven context, $\rho_*$ is known just through data points and one has to face the problem of estimating $s_*$. A possible approach\citep{hyvarinen2005estimation} is score matching.
\begin{defi}[Fisher divergence]
    Given two PDFs $\rho(x)$ and $\pi(x)$, the {\bf Fisher divergence} is defined as
    \begin{equation}
        D_{F}(\rho\parallel \pi)=\int_{\R^d} \rho(x)\norm{\nabla \log \rho(x) -\nabla \log\pi(x)}^2 \, dx
    \end{equation}
\end{defi}
Even if in some sense $D_F$ seems to measure some distance between two PDFs, it is very different from the KL divergence, see following Remark. 
\begin{rem}
By definition, both KL and Fisher divergence between two PDFs satisfy the non-negativity property, i.e. they are strictly positive, and zero only when the densities are the same. $D_F$ does not depend on normalization constants of the PDFs because of the gradients. This is a double-edged weapon: it is apparently useful in high dimension, where the computation of normalization of a density is impractical (as for instance the partition function for EBMs). But if the distribution is multimodal, the local nature of $D_F$ is very insensible to global characteristics of the densities, as for instance the relative mass in each mode. Let us consider a key example: the distributions we would like to compare are:
\begin{equation}
\label{eq:bim-gauss}
\begin{split}
    \rho_1(x)&=0.5\mathcal{N}(x,-5,1)(x)+0.5\mathcal{N}(x,5,1),\\\rho_2(x)&=\sigma(z)\mathcal{N}(x,-5,1)+(1-\sigma(z))\mathcal{N}(x,5,1)
\end{split} 
\end{equation}
where $\sigma(z)=1/(1+e^{-z})$ is a sigmoid function. The two densities are bimodal gaussian mixture in 1D with same means and variances; the second mixture is balanced with relative mass equal to $1/2$. We would like to compare $D_{F}(\rho_1\parallel \rho_2)$ and $D_{\text{KL}}(\rho_1\parallel \rho_2)$ as functions of $z$. In Figure \ref{fig:compF} we plot the two divergences in function of $z$. We estimate the expectations that define the two divergences using a Monte Carlo estimate, namely
\begin{equation}
\begin{split}
    D_{F}(\rho_1\parallel \rho_2)&\approx \sum_{i=1}^N \norm{\nabla \log \rho_1(x_i) -\nabla \log\rho_2(x_i)}^2\\D_{\text{KL}}(\rho_1\parallel \rho_2)&\approx \sum_{i=1}^N \log\left[\frac{\rho_1(x_i)}{\rho_2(x_i)}\right]
\end{split} 
\end{equation}
where $x_i\sim\rho_1(x)$. The minimum value is $0$ and corresponds to $z=0$, that is $\rho_1=\rho_2$. The first difference is that the values of $D_F$ are smaller of several order of magnitude --- in general, this could be a problem in practical implementations. Most importantly, the shape of the curve is very different. In this one dimensional example we need $N=10000$ to appreciate a similar growth, even if $D_{F}$ curve is more steep. For smaller $N$, $D_{F}$ is basically flat for $z\neq 0$. This is related to the absence of points in low density regions, that is where the integrand in $D_{F}$ gives a non-zero contribution. 
\end{rem}
\begin{figure}
    \centering    
    \includegraphics[width=1.05\textwidth]{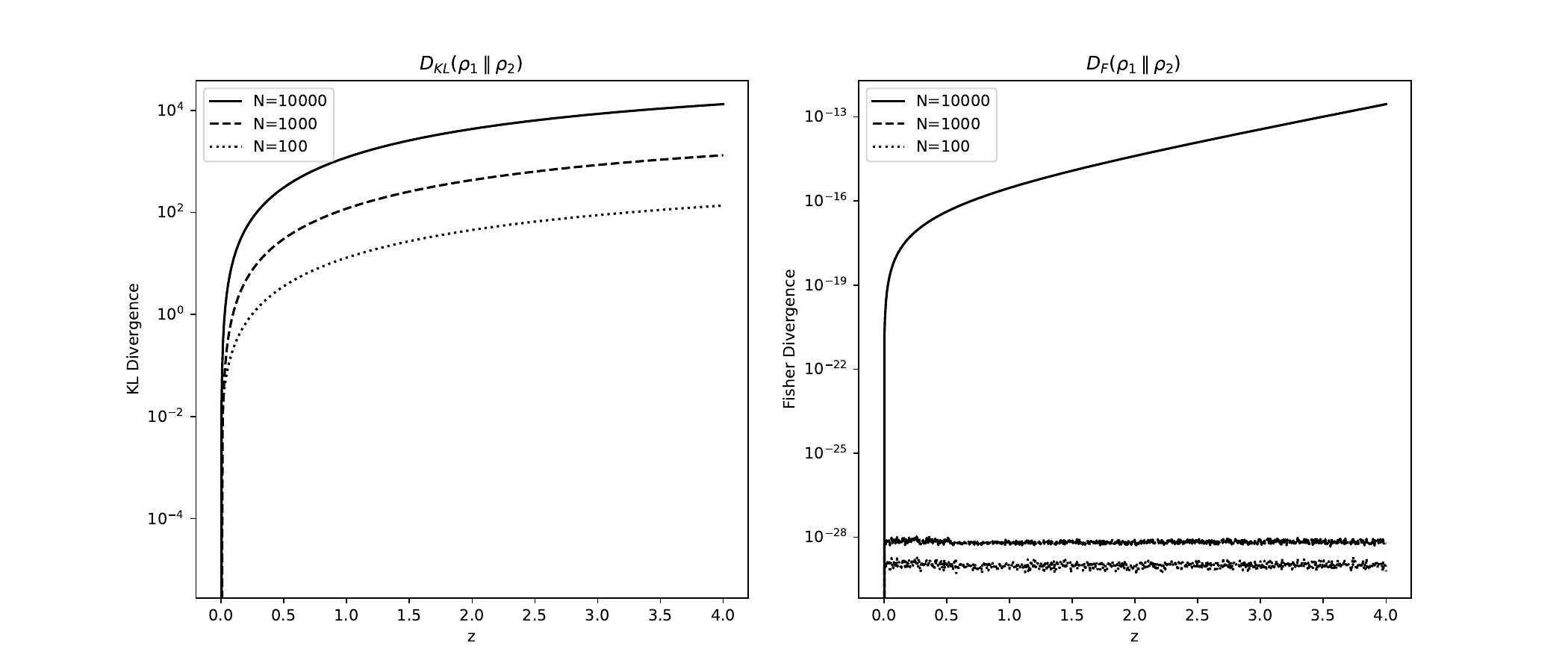} 
    \caption{Comparison between KL divergence and Fisher divergence for the two bimodal gaussian mixtures in \eqref{eq:bim-gauss}. The variable $z\in(0,\infty)$ is related to the relative mass of the two modes via a sigmoid function $\sigma(z)\in(0,1)$; the plots for $z<0$ are analogous by symmetry. Notice the different scales of the $y$ axes. The Monte Carlo estimation is performed using $N=100,1000,10000$ samples.}
    \label{fig:compF}
\end{figure}
In score matching, one propose a parametric score $s_\theta$, for instance a neural network, and train such model to match the true score $s_*$. The loss on which the model is trained, using for instance gradient routines, is
\begin{equation}
\label{eq:score:match}
    \mathcal{L}(s_\theta,\rho_*)=\f12\int_{\R^d} \rho_*\norm{s_\theta -\nabla \log\rho_*}^2 \, dx= \E_* [\norm{s_\theta}^2 + \operatorname{tr}(J_x s_\theta) ]+ C_p
\end{equation}
where we integrated by parts, $C_p=const$ does not depend on $\theta$ and $J_x$ is the Jacobian with respect to $x$. Denoting with $\rho_\theta$ one (n.b. not unique) PDF associated to $s_\theta$, the loss is evidently $D_{F}( \rho_*\parallel\rho_\theta)$. The right hand side reformulation in \eqref{eq:score:match} is crucial: the expectation $\E_*$ can be estimated using data points at our disposal, bypassing the problematic term $\nabla\log\rho_*$. \\
Unfortunately, the naive score-based approach is plagued by two fundamental issues that make it impractical. The first regards the score estimation itself: usually, data at our disposal comes from high density region of $\rho_*$, that is the estimation of $\E_*$, hence of the score, will be inaccurate outside such areas. The problem is that the initial condition (e.g. noise) of the SDE is usually located far from data. An imprecise drift will critically affect the generation process, leading to unpredictable outcomes. The second regards the difference between the PDE and the practical implementation through \eqref{eq:sde:gen}. The generation problem is convex in probability space, i.e. $\rho_*$ is the unique asymptotic stationary solution, but the rate of convergence of the law of $X_t$ is critically related to the particular $\rho_*$ in study, in particular in relation with multimodality and slow mixing. We will discuss in details about this issue in Section \ref{sec:ebms-sam}. \\
The next step towards state-of-the-art score-based diffusion is the following lemma\citep{anderson1982reverse}:
\begin{lemma}
    Any SDE in the form 
    \begin{equation}
    \label{eq:forw}
        dX_t=f(X_t,t)dt+g(t)dW_t,\quad\quad X_0\sim\rho_1,\quad X_T\sim\rho_2
    \end{equation}
    with solution $X_t\sim\rho(x,t)$ admits an associated reversed SDE 
    \begin{equation}
    \label{eq:back}
        dX_s=[f(X_s,s)-g(s)^2\nabla \log \rho(x,s)]ds+g(s)dW_s,\quad\quad X_T\sim\rho_2,\quad X_0\sim\rho_1
    \end{equation}
    where $ds$ is a negative infinitesimal time step and $s$ flows backward from $T$ to $0$. By convention, \eqref{eq:forw} is also called forward SDE and \eqref{eq:back} backward one.
\end{lemma}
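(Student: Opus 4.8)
The plan is to establish the statement at the level of marginal laws, using the Fokker--Planck equation of Proposition~\ref{prop:fokker-pde} rather than a direct manipulation of stochastic integrals. Since the forward SDE in \eqref{eq:forw} has additive, state-independent noise $\sigma(x,t)=g(t)$, its marginal density $\rho(x,t)$ obeys
\begin{equation}
\partial_t\rho=-\nabla\cdot[f(x,t)\rho]+\frac{g(t)^2}{2}\Delta\rho.
\end{equation}
The goal is to exhibit a backward-time diffusion whose marginal at each physical time coincides with this same $\rho(\cdot,t)$; matching the two Fokker--Planck equations will then pin down the backward drift.

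First I would remove the awkwardness of a process indexed by a decreasing time by reparametrizing. Writing $u=T-s$ and $Z_u=X_s$, a candidate backward SDE $dX_s=b(X_s,s)\,ds+g(s)\,dW_s$ becomes, in the forward variable $u$, a standard Itô SDE with the drift sign flipped,
\begin{equation}
dZ_u=-b(Z_u,T-u)\,du+g(T-u)\,d\tilde W_u,
\end{equation}
because $ds=-du$ while the quadratic variation of the noise is insensitive to the direction of time. Its law $\tilde\rho(x,u)$ then satisfies, again by Proposition~\ref{prop:fokker-pde},
\begin{equation}
\partial_u\tilde\rho=\nabla\cdot[b(x,T-u)\tilde\rho]+\frac{g(T-u)^2}{2}\Delta\tilde\rho.
\end{equation}

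Next I would impose the matching condition $\tilde\rho(x,u)=\rho(x,T-u)$, so that $\partial_u\tilde\rho=-\partial_t\rho\big|_{t=T-u}$, and substitute the forward Fokker--Planck equation for $\partial_t\rho$. Equating the two expressions for $\partial_u\tilde\rho$ and cancelling the common Laplacian gives, after using $\Delta\rho=\nabla\cdot(\rho\nabla\log\rho)$,
\begin{equation}
\nabla\cdot[b\,\rho]=\nabla\cdot\big[(f-g^2\nabla\log\rho)\rho\big],
\end{equation}
whose natural solution is precisely $b(x,s)=f(x,s)-g(s)^2\nabla\log\rho(x,s)$, the drift claimed in \eqref{eq:back}.

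The delicate point, which I expect to be the main obstacle, is the bookkeeping of the diffusion term under time reversal. A naive reversal would flip the sign of the Laplacian and yield an ill-posed backward heat operator; the correct reparametrization keeps the diffusion with its original (dissipative) sign, and it is exactly this that forces the extra score term $-g^2\nabla\log\rho$ to appear in order to compensate in the drift. A second subtlety is that matching marginals only constrains $\nabla\cdot(b\rho)$, so $b$ is a priori determined only up to an added field $v/\rho$ with $\nabla\cdot v=0$; the stated drift is the canonical choice, and a fully rigorous argument in the spirit of Anderson would additionally invoke the Markov property and the relation between the forward and backward Kolmogorov equations to rule out the divergence-free ambiguity.
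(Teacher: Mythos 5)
The paper does not prove this lemma at all: it simply states it and cites \citet{anderson1982reverse}, so there is no in-paper argument to compare against. Your marginal-density derivation is the standard heuristic route to Anderson's formula and it is essentially correct: reparametrizing by $u=T-s$ to turn the backward SDE into a forward It\^o SDE with drift $-b$, writing its Fokker--Planck equation, imposing $\tilde\rho(x,u)=\rho(x,T-u)$, and solving for $b$ does yield $b=f-g^2\nabla\log\rho$. Two remarks. First, your phrase ``cancelling the common Laplacian'' misdescribes the computation: the two Laplacian terms appear with opposite signs on the two sides of the matching identity, so moving one across combines them into $-g^2\Delta\rho$ (not $-\tfrac{g^2}{2}\Delta\rho$), and it is precisely this doubling that produces the coefficient $g^2$ in front of the score; your final displayed equation is nonetheless the correct one. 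Second, you are right to flag that matching one-time marginals only determines $\nabla\cdot(b\rho)$, leaving a $\rho^{-1}\times(\text{divergence-free})$ ambiguity in the drift, and that the genuine content of Anderson's theorem is that the \emph{reversed process itself} is a diffusion whose drift is the stated one --- a fact that requires comparing the forward and backward Kolmogorov equations for the transition densities (or an integration-by-parts/Bayes argument on $\rho(x,t\,|\,y,t')$), not just the marginals. As a blind reconstruction this is a sound and appropriately self-aware sketch of the result the paper takes on citation.
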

Exploiting this result, we can define a score-based diffusion model:
\begin{defi}
\label{defi:score-ba}
    A {\bf score-based generative model} is the backward SDE \eqref{eq:back}, where $\rho_1=\rho_*$ and $\rho_2=\bar\rho$. 
\end{defi}
Apparently, the situation is even worse with respect to score matching: the score in \eqref{eq:back} is related to the law of $X_t$, i.e. it is time dependent and generally not analytically known --- score estimation was already an issue for $s_*(x)$. The core idea in score-based diffusion is to extract information about $\nabla \log \rho(x,s)$ from the forward process since the solutions of \eqref{eq:forw} and \eqref{eq:back} have the same law, see Figure \ref{fig:score}.
\begin{figure}[h]
    \centering
    \includegraphics[width=0.7\linewidth]{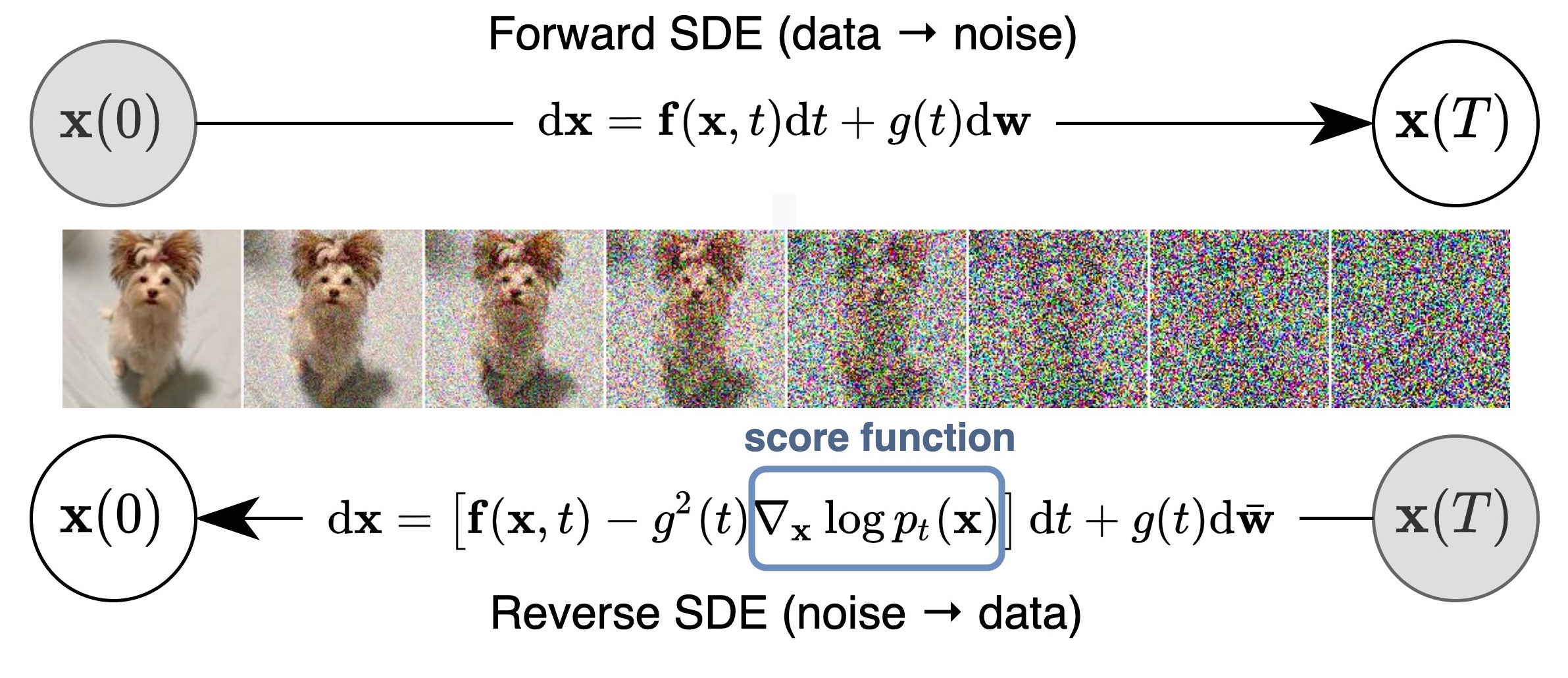}
    \caption{Schematic representation of forward and backward process in score-based diffusion. Image taken from \citep{song2020score}.}
    \label{fig:score}
\end{figure}
By Definition \ref{defi:score-ba}, the forward process brings {\it data to noise} and the model to be learned is a time dependent parametric vector field $s_\theta(x,t)$. The loss used during the forward process to learn the score is: 
\begin{equation}
\label{eq:scor-part}
    \mathcal{L}_{SM}(s_\theta(x,t))=\E_{t\sim\mathcal{U}(0,T)}\E_{\rho(x,t)}[\lambda(t)\norm{s_\theta(x,t) -\nabla \log\rho(x,t)}^2]
\end{equation}
where $\lambda:[0,T]\to\R_+$ is a positive scalar weight function and ${U}(0,T)$ is the uniform distribution in $(0,T)$. After the same integration by parts used in \eqref{eq:scor-part}, there is still the problem that computing the hessian is expensive in high dimension, especially if $s_\theta$ is a neural network. Several proposal to solve this issue have been proposed and successfully exploited, such as denoising score matching\citep{vincent2011connection} or sliced score matching\citep{song2020sliced}. Another subtle issue is that generally the forward process would generate pure noise for $T=\infty$ --- one could be worried that the truncation at finite time would provide an imprecise estimate of the score at that time scale, that is close to noise, and induce errors during the generative phase. This problem is attacked by practitioners via several tricks but the theoretical results in this sense are not complete.\\
Let us provide a brief interpretation of why score-based diffusion works better than naive score matching (Proposition \ref{prop:score-match}). Let us consider the simple case $f(x,t)=0$ and $g(t)=e^t$; the resulting forward process is perturbing data with gaussian noise at increasing variance scale\citep{song2021train}. That is, time scale corresponds to amount of noise in this setup. We recall that the problem of naive score matching was the lack of data in low density region for the target density. In score-based diffusion one use perturbed data to populate those region and compute the score at each time scale that serves as bridge from $\bar\rho$ and the target $\rho^*$. 
{A more recent class of diffusion-based generative models is the framework of stochastic interpolants \citep{albergo2022building}, offering a unified view through continuous stochastic processes. This approach is related to flow-based methods, such as rectified flow and flow matching \citep{liu2022flow, lipman2022flow}, which can be seen as different parameterizations of similar principles. These methods transform a simple prior distribution (e.g., Gaussian noise) into a complex target distribution by modeling probability flow dynamics. With strong theoretical foundations and empirical success, they form the basis of state-of-the-art generative models like Stable Diffusion 3\footnote{\url{https://encord.com/blog/stable-diffusion-3-text-to-image-model/}}. We present in Appendix \ref{app:2} a brief review of stochastic interpolation, even if not stricly related to EBMs. Moreover, it is worth mentioning as flow-based generative models are similar to the Schrödinger Bridge method \citep{de2021diffusion}, which solves an entropy-regularized optimal transport problem. Unlike flow-based models, Schrödinger Bridges match distributions while minimizing transport cost under stochastic constraints, making them more complex to solve as they require iteratively solving coupled forward and backward processes bridging target and initial densities.}

\subsection{Normalizing Flows}
\label{sec:NF}
The fundamental idea underlying Normalizing Flows\citep{tabak2010density,tabak2013family} (NF) is very close to the usual in generative modelling: to transform samples from a straightforward base distribution, often a Gaussian, to data distribution. The main feature of NF is that the transformation is performed through a series of invertible and differentiable transformations.

The core concept revolves around constructing a model capable of learning a sequence of {\it invertible} operations that can map samples from a simple distribution to the target distribution. In particular, we recall the well-known lemma:
\begin{lemma}
\label{lem:change}
    Let us consider a random variable $Z\in\R^d$ and its associated probability density function $\rho_Z(z)$. Given an invertible function $Y=\phi(Z)$ on $\R^d$, the probability density function in the variable $Y$ is defined through
    \begin{equation}
        \rho_Y(y)=\rho_Z(g^{-1}(y))|\operatorname{det} J_y\varphi^{-1}(y) |=\rho_Z(\phi^{-1}(y))|\operatorname{det} J_y\phi(\phi^{-1}(y)) |^{-1}
    \end{equation}
    where $\phi^{-1}$ is the inverse of $\phi$ and $J_y$ is the Jacobian w.r.t. $y$. The density $\rho_Y$ is also called {\bf pushforward} of $\rho_Z$ by the function $\phi$ and denoted by $\phi_\#\rho_Z$.
\end{lemma}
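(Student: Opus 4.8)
The plan is to reduce the statement to the multivariate change-of-variables (substitution) theorem for integrals, starting from the measure-theoretic characterization of the pushforward. First I would fix an arbitrary measurable set $A\subseteq\R^d$ and use that $Y=\phi(Z)$ forces the event $\{Y\in A\}$ to coincide with $\{Z\in\phi^{-1}(A)\}$; equating the two probabilities and writing them as integrals of the respective densities gives
\begin{equation}
\int_A \rho_Y(y)\,dy = \int_{\phi^{-1}(A)} \rho_Z(z)\,dz.
\end{equation}
This identity is the backbone of the argument: it holds for every $A$, so once the right-hand side is rewritten as an integral over $A$, matching integrands will deliver the claim.

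The second step is to perform the substitution $z=\phi^{-1}(y)$ in the right-hand integral. Assuming $\phi$ is a $C^1$ diffeomorphism, so that $\phi^{-1}$ is differentiable and its Jacobian exists, the change-of-variables theorem gives $dz=|\operatorname{det}J_y\phi^{-1}(y)|\,dy$, and as $z$ ranges over $\phi^{-1}(A)$ the variable $y$ ranges exactly over $A$. Substituting,
\begin{equation}
\int_{\phi^{-1}(A)} \rho_Z(z)\,dz = \int_A \rho_Z(\phi^{-1}(y))\,|\operatorname{det}J_y\phi^{-1}(y)|\,dy.
\end{equation}
Comparing with the first identity and invoking the arbitrariness of $A$, I conclude $\rho_Y(y)=\rho_Z(\phi^{-1}(y))\,|\operatorname{det}J_y\phi^{-1}(y)|$ for Lebesgue-almost every $y$, which is the first equality.

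The third step converts the Jacobian of $\phi^{-1}$ into that of $\phi$, yielding the second equality. Here I would differentiate the identity $\phi(\phi^{-1}(y))=y$ via the chain rule to obtain $J_x\phi(\phi^{-1}(y))\,J_y\phi^{-1}(y)=I$, whence $J_y\phi^{-1}(y)=[J_x\phi(\phi^{-1}(y))]^{-1}$. Taking determinants and using $\operatorname{det}(M^{-1})=(\operatorname{det}M)^{-1}$ gives $|\operatorname{det}J_y\phi^{-1}(y)|=|\operatorname{det}J_x\phi(\phi^{-1}(y))|^{-1}$, completing the identification with the stated form.

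The main obstacle is not the algebra but the regularity hypotheses. The statement only says \emph{invertible}, whereas the Jacobian substitution requires $\phi$ to be a diffeomorphism—continuously differentiable with nonvanishing Jacobian determinant, so that $\phi^{-1}$ is itself $C^1$. I would make these assumptions explicit, since otherwise the Jacobian factor is not even defined; the nonvanishing of $\operatorname{det}J\phi$ is precisely what the inverse function theorem needs and what guarantees that the pushforward $\phi_\#\rho_Z$ admits a density absolutely continuous with respect to Lebesgue measure rather than developing a singular part.
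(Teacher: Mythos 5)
Your proof is correct and is the standard argument; the paper itself states this lemma without proof, citing it as well known, so there is no alternative route to compare against. Your closing remark is also well taken: the lemma as written assumes only that $\phi$ is invertible, whereas the Jacobian substitution genuinely requires $\phi$ to be a $C^1$ diffeomorphism with nonvanishing Jacobian determinant (otherwise the pushforward need not even have a Lebesgue density), and in the normalizing-flow setting this is exactly the differentiability-plus-invertibility requirement the paper imposes on its flow maps. Incidentally, the displayed formula in the statement contains typographical slips ($g^{-1}$ and $\varphi^{-1}$ for $\phi^{-1}$) that your derivation implicitly corrects.
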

In generative modelling, $\rho_Z$ is identified with the base distribution and its pushforward as the target, i.e. data, distribution. The direction from noise to data is called generative direction, while the other way is called normalizing direction --- data are normalized, gaussianized, by the inverse of $\phi$. The name Normalizing Flow originates from the latter. In fact, the mathematical foundation of NF is reduced to Lemma \ref{lem:change}. \\
The whole problem reduces to design the pushforward in a data driven setup, that is where we just have a dataset $\mathcal{X}$ of samples from the target and no access to the analytic form of $\rho_*$. In order to link NF with other generative models, let us denote with $\phi_\theta$ with $\theta\in\Theta$ the parametric map that characterizes the pushforward $\rho_\theta=(\phi_\theta)_\#\rho_Z$. In practice, this map is usually a neural network and $\rho_\theta$ will implicitly depend on it. The optimal parameters $\theta^*$ are chosen to be solution of the following optimization problem:
\begin{equation}
\label{eq:NF-like}
    \theta^*=\argmin_{\theta\in\Theta} D_{\text{KL}}(\rho_*(x)\parallel\rho_\theta(x))=\argmax_{\theta\in\Theta} \E_*[\log \rho_\theta(x)]
\end{equation}
As already stressed, this formulation in term of maximum log-likelihood is equivalent to cross-entropy minimization for EBMs. As for VAEs, the analytical form of $\rho_\theta$ is not known: in NF it is implicitly defined through the pushforward. This issue is attacked using Lemma \ref{lem:change} to rewrite the right hand side in \eqref{eq:NF-like} as
\begin{equation}
    \argmax_{\theta\in\Theta} \E_*[\log \rho_\theta(x)]=\argmax_{\theta\in\Theta} \E_*[\log \rho_Z(\phi_\theta^{-1}(y))+\log|\operatorname{det} J_y\phi^{-1}(y) |]
\end{equation}
The likelihood of a sample under the base measure is represented as the first term, and the second term, often referred to as the log-determinant or volume correction, accommodates the alteration in volume resulting from the transformation introduced by the normalizing flows. After this manipulation every addend inside the expectation is calculable --- the map $\phi$ and the noise distribution $\rho_Z$ are given (e.g. a gaussian). As usual, the expectation can estimated via Monte Carlo using the finite dataset $\mathcal{X}$ at our disposal. Any gradient based optimization routine can be then exploited to optimize $\theta$. During training, the model adjusts the parameters $\theta$ to bring the transformed distribution in close alignment with the true data distribution.\\
\begin{figure}[ht]
    \centering
    \includegraphics[width=0.75\linewidth]{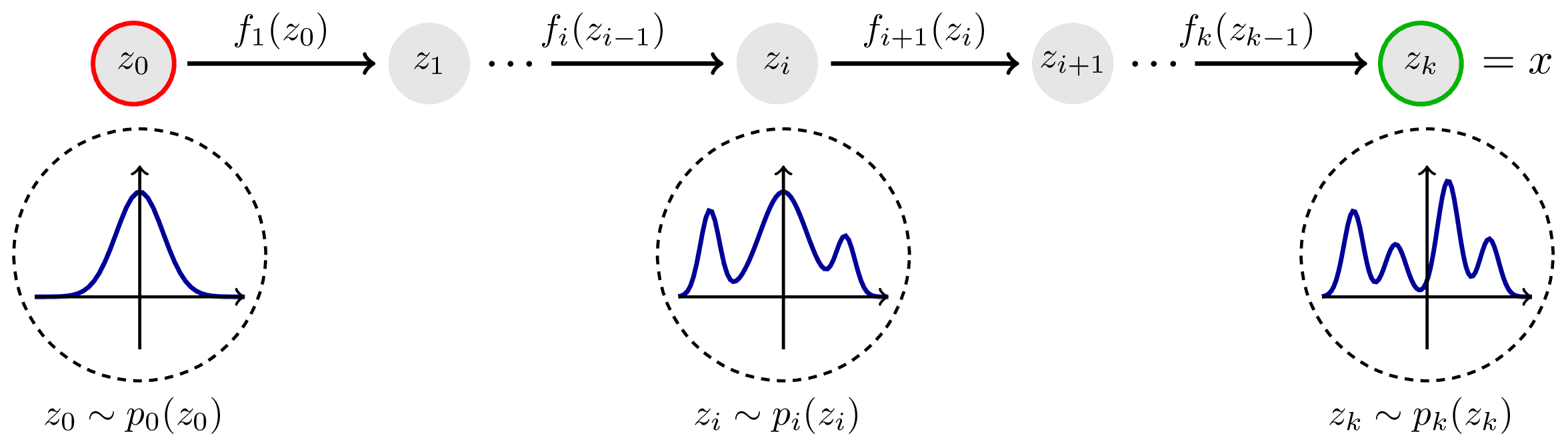}
    \caption{Schematic representation of Normalizing Flows, image taken from \url{https://flowtorch.ai/users/}.}
    \label{fig:NF}
\end{figure}
\noindent The main limitation in NF is that the pushforward map must be bijective for any $\theta$. Not only that: both forward and inverse operations are required to be computationally feasible to perform generation and normalization. Furthermore, the Jacobian determinant must be tractable to facilitate efficient computation. These requests constrain the possible neural architectures that one can use to model $\phi_\theta$. The following lemma provides a decisive tool in this sense.
\begin{lemma}
    Let us consider a set of $M$ bijective functions $\{f_i\}_{i=1}^M$. If we denote with $f=f_M\circ f_{M-1}\circ\dots\circ f_1 $ their composition, one can prove that $f$ is bijective and its inverse is\
    \begin{equation}
        f^{-1}=f^{-1}_1 \circ\dots \circ f^{-1}_{M-1}\circ f^{-1}_M
    \end{equation}
    Moreover, if we denote with $x_i= f_i\circ\dots\circ f_1(z)=f^{-1}_{i+1} \circ\dots \circ f^{-1}_M$ and $y=x_M$, we have
    \begin{equation}
        \operatorname{det} J_y f^{-1}(y)=\prod_{i=1}^M \operatorname{det} J_y f_i^{-1} (x_i)
    \end{equation}
\end{lemma}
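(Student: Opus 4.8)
The plan is to separate the statement into its two independent assertions—first the bijectivity of the composition together with the reversed-order inverse formula, and second the factorization of the Jacobian determinant—and to treat each by a short induction on $M$, reducing everything to the two-function case handled by elementary set theory and the chain rule respectively.

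For the bijectivity claim, I would proceed by induction on $M$. The base case $M=1$ is trivial. For the inductive step, writing $f = f_M \circ g$ with $g = f_{M-1}\circ\dots\circ f_1$, I would use the standard fact that the composition of two bijections is a bijection, together with the elementary identity $(f_M\circ g)^{-1} = g^{-1}\circ f_M^{-1}$. The inductive hypothesis gives $g^{-1} = f_1^{-1}\circ\dots\circ f_{M-1}^{-1}$, which immediately yields the claimed formula $f^{-1}=f_1^{-1}\circ\dots\circ f_{M-1}^{-1}\circ f_M^{-1}$. Alternatively, and perhaps more transparently, one can verify the formula directly by composing $f$ with the candidate inverse and cancelling adjacent pairs $f_i\circ f_i^{-1}=\operatorname{id}$ from the inside out; this avoids invoking induction and makes the telescoping structure explicit.

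For the Jacobian factorization, the key tool is the multivariate chain rule: for a composition of differentiable maps, the Jacobian matrix of the composition equals the product of the individual Jacobian matrices evaluated at the appropriate intermediate points. Applying this to $f^{-1}=f_1^{-1}\circ\dots\circ f_M^{-1}$ and tracking the trajectory $x_M=y\mapsto f_M^{-1}(x_M)=x_{M-1}\mapsto\dots\mapsto f_1^{-1}(x_1)=z$, one obtains
\begin{equation}
J_y f^{-1}(y)=J\,f_1^{-1}(x_1)\cdot J\,f_2^{-1}(x_2)\cdots J\,f_M^{-1}(x_M),
\end{equation}
where each factor $J\,f_i^{-1}$ is evaluated at $x_i$. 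Taking determinants and using the multiplicativity $\operatorname{det}(AB)=\operatorname{det}(A)\operatorname{det}(B)$ gives exactly $\operatorname{det} J_y f^{-1}(y)=\prod_{i=1}^M \operatorname{det} J_y f_i^{-1}(x_i)$, as claimed.

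The mathematical content here is entirely standard, so the only genuine obstacle is bookkeeping: one must keep the order of composition straight (the inverse reverses it) and, more importantly, identify the correct evaluation point $x_i$ for each factor in the chain-rule product. The notation in the statement, $x_i=f_i\circ\dots\circ f_1(z)$ with $x_M=y$, fixes precisely these intermediate points, so I would make the correspondence between chain-rule factors and these $x_i$ explicit before collapsing the product. Once the evaluation points are pinned down, both multiplicativity of the determinant and the telescoping cancellation in the bijectivity argument are immediate.
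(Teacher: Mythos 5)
Your proof is correct: induction (or telescoping cancellation) for the reversed-order inverse, then the multivariate chain rule along the intermediate points $x_M=y,\dots,x_1$ followed by multiplicativity of the determinant is exactly the standard argument, and you correctly identify the evaluation point $x_i$ for each factor $J f_i^{-1}$. The paper states this lemma without proof, so there is nothing to compare against; your write-up supplies the omitted (routine) details accurately.
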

Exploiting this factorization result, the strategy is to compose invertible building blocks $(\phi_\theta)_i$ to construct a function $\phi_\theta$ that is sufficiently expressive. In general, the architecture of Normalizing Flows encompasses various transformations (see Figure \ref{fig:NF}), including simple operations like affine transformations and permutations, as well as more complex functions such as coupling layers. Common flow architectures include RealNVP, Glow, and Planar Flows, each introducing unique ways to parameterize and structure transformations\citep{kobyzev2020normalizing}.\\
Regarding drawbacks of NF, one significant limitation lies in the computational cost associated with training NF, particularly as model complexity increases. The requirement for invertibility and the computation of determinants of Jacobian matrices contributes to the time-consuming nature of training, especially in deep architectures.

The architectural complexity of NF poses another challenge. Designing an optimal structure and tuning parameters may prove challenging, necessitating experimentation and careful consideration. Moreover, they may face challenges in scaling to extremely high-dimensional spaces, limiting their applicability in certain scenarios. Despite their expressiveness, NF may struggle to capture extremely complex distributions, requiring an impractical number of transformations to model certain intricate data distributions effectively.\\
Another degree of freedom is the choice of the base distribution $\rho_Z$, which can impact NF performance. Using a base distribution that does not align well with the true data distribution may hinder the model's ability to accurately capture underlying patterns. Training NF is observed to be less stable compared to other generative models, requiring careful tuning of hyperparameters and training strategies to achieve convergence and avoid issues like mode collapse. Lastly, interpreting the learned representations and transformations within NF can be challenging, which is an obstacle for a straightforward comprehension of how the model captures and represents information.

\subsection{Comparison with EBMs}
\label{sec:comp}
In this section, we present a summarized comparison of EBMs and the other generative models. First of all, the main similarity is the objective: maximize the log-likelihood is the general aim. In Table \ref{tab:log-like} we present how this task is instantiated case by case. 
\begin{table}
    \centering\small
    \begin{tabular}{|c|c|}
    \hline
     & Implementation of Maximum Log-Likelihood\\
    \hhline{|=|=|}
       EBM  & \begin{tabular}{@{}c@{}}Cross-Entropy Minimization \\ $\argmin_{\theta\in\Theta} \E_*[U_{\theta}]+\log Z_\theta$ \end{tabular}  \\
       \hline
       VAE & \begin{tabular}{@{}c@{}}Latent Space \\ $\argmax_{\theta,\theta'} \sum_{i=1}^N\frac{1}{R}\sum_{i=r}^R \log \rho_\theta (x_i|z_i^{(r)})-\sum_{i=1}^N D_{\text{KL}}[\log q_{\theta'}(z_i|x_i)\parallel \rho(z_i)]$ \end{tabular} \\
       \hline
        GAN & \begin{tabular}{@{}c@{}}Minimax Game \\ $\argmin_\theta \argmax_{\theta'} \E_*[\log D_{\theta'}(x)]+\E_{\rho_z}[\log(1-D_{\theta'}(G_\theta(z)))]$ \end{tabular} \\
         \hline
        SBD or SI & \begin{tabular}{@{}c@{}}Implicit via Transport-Diffusion Equation \\ $\partial_t \rho(x,t) +\nabla\cdot ((b_\theta(x,t)+\varepsilon s_{\theta'}(x,t))\rho(x,t)-\varepsilon \nabla\rho(x,t)) = 0$ \end{tabular}       \\
        \hline 
        NF & \begin{tabular}{@{}c@{}}Volume Correction Factor \\ $\argmax_{\theta\in\Theta} \E_*[\log \rho_Z(\phi_\theta^{-1}(y))+\log|\operatorname{det} J_y\phi_\theta^{-1}(y) |]$\end{tabular}  \\
        \hline
    \end{tabular}
    \caption{Comparison of implementation of maximum log-likelihood for different generative models.}
    \label{tab:log-like}
\end{table}
In generative models, there exists an inherent trade-off between the model's ability to generate data and its alignment with real-world data. Essentially, the paradigm followed in each optimization step involves two key stages: (1) the generation of data using a fixed model, and (2) the evaluation of the model's performance by comparing the generated ("fake") data with the actual dataset. This dual-step process is universally applicable, albeit with variations in implementation. It represents an interpretation of generative models as a balance between their discriminative and sampling capabilities. 
{The key difference among generative models lies in how they learn. Normalizing Flows  and Generative Adversarial Networks  directly learn a deterministic mapping from data to noise, while VAEs, diffusion models, and EBMs rely on a stochastic sampling process. Deterministic methods can be more efficient but are prone to amplifying errors from finite datasets, whereas stochastic approaches, like Score-Based Diffusion, benefit from noise as a regularizer. The choice of noise level in diffusion models and latent structure in VAEs remains an open challenge.
Most generative models are bidirectional, except for GANs, where invertibility is absent. Recent models, such as Score-Based Diffusion, improve fidelity by leveraging information from the noising process. This suggests that tools from Mathematical Physics, particularly stochastic processes, could provide deeper insights into generative modeling and drive future advancements.}

Now, let's delve into a more detailed mathematical comparison, with a focus on Energy-Based Models . Specifically, we demonstrate how, in certain cases, other generative models can be interpreted as EBMs:
\begin{itemize}
  \item For GANs, if the discriminator is $D_{\theta'}(x)\propto e^{-U_{\theta'}(x)}$, we immediately recover the term $\E_*[U_\theta]$. The training of the generator correspond to learn a perfect sampler, and resemble the use of machine learning to improve MCMC in computational science\citep{song2017nice}.
   \item For SBD and SI, if the score is modelled by $s_{\theta'}(x,t)\propto -\nabla U_{\theta'}(x)$ the law of the process solution of the SDE is a Boltzmann-Gibbs ensemble by construction. Thus, the strong analogy is related to the constrained structure imposed to the law of the bridging process between the noise and the data, forced to be a BG ensemble. Regarding the loss, since the model is trained on Fisher divergence or on the interpolants, there is no direct analogy between the losses.
  \item For NFs, if $\phi_\theta$ is the map associated to a flow that brings $X_0\sim \rho_Z(\phi_\theta^{-1}(x))\propto e^{-U_\theta(x)}$ to $X_1\sim \rho_*$, then the term $\E_*[U_\theta]$ present in EBMs is analogous to $\E_*[\log \rho_Z(\phi_\theta^{-1}(y))$ for NFs. In practice, if the composition of $\rho_Z$ with the normalizing flow can be written as an EBM, there is no difference between the models. This is of course not true in general --- it is not given that for any $\theta$, a composition of the inverse map $\phi^{-1}_\theta$ and $\rho_Z$ can be always associated to an EBM parameterized via $U_\theta$.
      \item For VAEs the situation is a bit more intricate because of the ELBO reformulation. A possible interpretation towards EBMs is to think about encoder and decoder as a forward and backward processes from data to a latent space (possibly independent features, similarly to gaussian noise). One could imagine $\rho_\theta (x_i|z_i^{(r)})$ and $q_{\theta'}(z_i|x_i)$ as EBMs that have to match with some constraint on the $z$ space. In fact, the original EM steps represent an alternating optimization, where $\theta$ is not related to $\theta'$. In this sense, VAEs tries to match the forward and backward processes, similarly to SBD where they are the same by construction of the score. 
\end{itemize}
{
When comparing the theoretical properties of generative models with EBMs, several aspects are worth to be stressed, such as \textbf{approximation properties}, \textbf{sample efficiency}, and the ability to handle \textbf{manifold data}.
\begin{itemize}
  \item \textbf{Approximation Properties}: One of the key distinctions between models is their ability to approximate complex distributions. Normalizing Flows offer excellent approximation properties due to their ability to exactly model any distribution through a sequence of invertible transformations \citep{papamakarios2021normalizing}. However, these transformations can become computationally expensive as the complexity of the data grows. On the other hand, EBMs also have strong approximation abilities but rely on the flexibility of the energy function. EBMs can approximate complex distributions well, but the necessity of calculating the partition function \(Z\) and performing sampling (e.g., MCMC) makes this process more complex, especially for high-dimensional spaces.
  \item \textbf{Sample Efficiency}: The efficiency of generating high-quality samples differs among models. GANs are typically faster in generating samples compared to EBMs, which require iterative sampling methods such as MCMC. Diffusion models and flow-based methods, while offering impressive generation quality, can also be computationally expensive in practice, although recent advancements in efficient sampling techniques have improved this almost beyond GANs performance \citep{ma2022accelerating,liu2022flow,tong2023improving}. NFs can generate samples quickly once the transformation is learned, but their scalability and sample efficiency degrade with the dimensionality of the data.
  \item \textbf{Ability to Handle Manifold Data}: Models like VAEs \citep{zheng2022learning} and NFs \citep{kohler2021smooth} perform well on manifold data due to their explicit modeling of latent spaces and invertible mappings. EBMs, due to their flexible energy function, can also handle manifold data but require careful design of the energy function to ensure that the model appropriately reflects the underlying structure of the data \citep{arbel2021generalized}. Diffusion models naturally handle continuous data spaces and can be adapted to deal with manifold structures by carefully designing the noising and denoising processes \citep{pidstrigach2022score}.
\end{itemize}
In conclusion, generative models can be seen as bridges linking a simple source, like noise, to real data. Just as constructing a physical bridge requires stable foundations, building a generative model involves statistical analysis of the dataset and selecting an appropriate noise source. Data preprocessing, akin to adjusting docking configurations, is crucial for proper alignment. Different generative models suit different data structures, much like roads adapt to terrain, with the goal of maximizing log-likelihood to ensure an effective connection between source and target distributions.}
\subsection{Why (and why not) EBMs in practice}
{First of all, a key advantage of EBMs over other generative models is their ability to define an explicit probability density function in terms of an energy function. Unlike models that indirectly parameterize a distribution (e.g., VAEs, GANs, or diffusion models) or impose structural constraints to ensure tractability (e.g., normalizing flows), EBMs offer a more flexible framework in which the probability of a sample is directly related to its energy through the Boltzmann distribution. This explicit formulation, assuming a computable partition function Z along training (cfr. \citep{carbone2024efficient}), allows EBMs to provide normalized probabilities for arbitrary data points, making them particularly suitable for tasks such as anomaly detection \citep{zhai2016deep,yoon2024energy}, likelihood-based evaluation \citep{du2019implicit}, and probabilistic reasoning \citep{lecun2006tutorial}. Unlike normalizing flows, which rely on invertible transformations, EBMs impose no such constraints, making them highly expressive. Moreover, they naturally generalize many probabilistic models used in physics and machine learning, providing a strong theoretical foundation that connects them to statistical mechanics and variational inference \citep{huembeli2022physics,cheng2024versatile}. Because energy functions directly shape the probability landscape, practitioners can analyze how different regions of input space contribute to model confidence, uncovering structured relationships in the data, like manifold identification \citep{yang2023m,arbel2021generalized} -- this fact, together with the direct probabilistic content, makes Energy-Based Models more {\it interpretable} with respect to other generative models. This property is particularly valuable in scientific applications and settings where understanding model decisions is as important as predictive performance, for instance in molecular graph generation \citep{liu2021graphebm}, molecular reactions \citep{sun2021towards}, text modelling \citep{yu2022latent}, agent policy regulation \citep{sharma2021maximum}, predictions of human behaviours \citep{pang2021trajectory}, protein modelling \citep{frey2023learning}, high-energy physics \citep{cheng2024versatile}, statistics \citep{zhang2022identifiable}, drug discovery \citep{li2023energy}, out-of-distribution detection \citep{liu2020energy}, and many others.\\
While EBMs offer significant advantages in terms of explicit probabilistic modeling and interpretability, they may not be the ideal choice when efficient and fast sample generation is the primary goal. The need for methods like MCMC for sampling means EBMs can be computationally expensive and slow, particularly in high-dimensional spaces, see Section \ref{sec:ebm:sam}. In tasks where real-time generation is crucial—such as image synthesis, text generation, or interactive applications—models like GANs, diffusion models, or flow-based models are often preferred. These models are capable of producing high-quality samples in a much more efficient and faster manner, making them better suited for scenarios where speed and scalability are essential. Moreover, other generative models tend to outperform EBMs in terms of sample quality. For instance, GANs are well-suited for applications that prioritize high-quality generative performance without the need for in-depth probabilistic insight. \\
There have been tentative hybrid approaches attempting to combine the strengths of EBMs with other generative models, such as diffusion-based models and flow-matching techniques \citep{du2023reduce,chao2024training}. These hybrid models aim to leverage the sampling efficiency and generation quality of diffusion models or normalizing flows, while also incorporating the probabilistic grounding and interpretability of EBMs. By combining the advantages of both paradigms, these approaches seek to obtain the best of both worlds—offering more efficient generation without sacrificing the explicit probabilistic framework provided by EBMs.\\
{\it Take Home Message}: EBMs are very useful and preferable for tasks that require {\bf detailed probabilistic understanding} and {\bf interpretability}. On the contrary, they are generally less practical for rapid, large-scale generation or when interpretability is not a major concern.
}


\section{EBMs and sampling}
\label{sec:ebm:sam}
The challenge of sampling from the Boltzmann-Gibbs (BG) ensemble arises in statistical mechanics, particularly when dealing with complex systems at equilibrium. This ensemble encapsulates the probability distribution of states for a system with numerous interacting particles at a given temperature. The primary obstacle in that context lies in the exponential number of possible states and intricate dependencies among particles, rendering brute-force methods impractical for large systems. A similar difficulty is encountered during EBM training, since the computation of the expectation $\E_{\theta}$ requires the ability to sample from a Boltzmann-Gibbs density. 

Let us restrict to the case in which the energy $U(x)$ is defined on $\R^d$, that corresponds to continuous states in Statistical Physics. Any proposed techniques to efficiently sample from $\rho_{BG}(x)=\exp(-U(x))/Z$ can rely just on $U(x)$ or on its derivatives, even if the computation of many iterated derivatives can be expensive in high dimension. The estimation of the partition function or the shape of the energy landscape are in general unknown --- on the contrary, they are the unknowns. Methods as rejecting sampling\citep{casella2004generalized} cannot be used since one has usually access to $U(x)$ and not to the normalized density $\rho_{BG}(x)$. Since the advent of computational science, sampling has been attacked with many methods --- a complete and exhaustive review of the existent methods would lead us off-topic. In this Section, we will highlight three common routines for sampling from a BG enseble: Metropolis-Hastings and Unadjusted Langevin Algorithm, and lastly Metropolis Adjusted Langevin Algorithm, a sort of fusion of the first two. \\
Let us better define the mathematical setting. We consider a space $\Omega\subseteq \R^d$ and a discrete sequence $(t_k)_{k\geq 0}\subset \N$. Then, we consider $X_{t_k}\coloneqq X_k$ to be a stochastic process in $\Omega$ and discrete time. For the sake of simplicity we will always consider absolutely continuous densities with respect to Lebesgue measure.
\begin{defi}[Informal]
\label{defi:inf}
    Sampling from a BG ensemble consists in defining the process $X_k$ such that $\exists T>0$, not necessarily unique, for which $X_T\sim \exp(-U(x))/Z$.
\end{defi}
Once we manage to define such a process, and implement it in practice, we have solved the problem of sampling from a BG ensemble. A possible implicit way to define such stochastic process is via a transition kernel. Suppose we are interested in the law of the process $X$ at time $k+1$, that we denote $\rho(X_{k+1})$ with an abuse of notation (n.b. analogous of $\rho(x,t)$ in the context of SDEs and Fokker-Planck equation). By definition of conditional probability, there exists a function $T:\Omega^{n+1}\to \R_+$ such that
\begin{equation}
\label{eq:gen-ker}
    \rho(X_{k+1})=\int_{\Omega^d} T(X_{k+1}|X_k,\dots,X_0) \rho(X_k,\dots,X_0)\prod_{i=0}^n dX_i
\end{equation}
This equation asserts that any property, uniquely defined by the law $\rho(X_{k+1})$ of the system at time ${k+1}$, depends on the system's state at any $k\geq 0$. Generally, this strict constraint is relaxed by imposing Markovianity\citep{stroock2013introduction}, which is the property of the transition kernel to depend solely on the present state $X_k$ and not on previous states, i.e.
\begin{equation}
    T(X_{k+1}|X_k,\dots,X_0)=T(X_{k+1}|X_k)\coloneqq T(X_k,X_{k+1})
\end{equation}
The sequence $(X_k){n\geq 0}$ is called a Markov chain if the associated transition kernel is Markovian. The question now is how to design such a chain to solve the sampling problem. Traditionally, it is simpler to identify a transition kernel for which $\rho_{BG}(x)$ is the unique stationary distribution, i.e., $\rho(X_k)=\rho_{BG}(x)$ for any $n>T$ in Definition \ref{defi:inf}. Moreover, the integral definition \eqref{eq:gen-ker} is not suitable for applications since one usually evolves $X_k$ and not its law. Typically, it is required that $T$ is associated with an explicit time evolution for the process, namely an explicit mapping $X_{k+1}=F(X_k)$.

For historical reasons, let us present the most famous procedure to build the required sampling stochastic process, namely the Metropolis-Hastings algorithm\citep{metropolis1953equation,hastings1970monte}. Such techniques stand out as a foundational Markov chain Monte Carlo (MCMC)\citep{andrieu2003introduction} method. Here, we provide its definition and a sketch of the proof of its properties.
\begin{defi}[Metropolis-Hastings (MH) algorithm]
    \label{defi:MH}
    Let us consider an initial condition $X_0\sim\rho_0(x)$, where $\rho_0(x)$ simple to sample from (e.g. Gaussian or uniform). Let us consider a conditional probability distribution $g(X_{k+1}|X_k)$, also called proposal distribution, defined on the state space $\Omega$ and let $\rho_{BG}(x)=\exp(-U(x))/Z$ the BG ensemble we would like to sample from. Starting at $n=0$, we define a Markov chain $X_k$ via the following repeated steps:
    \begin{enumerate}
        \item Given $X_k$, generate a proposal $X^{(p)}_{k+1}$ using the time evolution prescribed by $T$.
        \item Compute the acceptance ratio
        \begin{equation}
            A(X^{(p)}_{k+1},X_k)=\argmin\left\{1,\frac{\rho_{BG}(X^{(p)}_{k+1})g(X_k|X^{(p)}_{k+1})}{\rho_{BG}(X_k)g(X^{(p)}_{k+1}|X_k)}\right\}
        \end{equation}
        \item Sample a real number $u\sim\mathcal{U}[0,1]$. If $u<A(X_k,X^{(p)}_{k+1})$, accept the proposal and set $X_{k+1}=X_{k+1}^{(p)}$; otherwise, refuse the move and set $X_{k+1}=X_k$. Then, increment $n$ to $n+1$.
    \end{enumerate}
\end{defi}
\begin{prop}
    The Markov chain $X_k$ defined via MH algorithm has $\rho_{BG}(x)$ as unique stationary distribution, i.e. \begin{equation}
    \rho_{BG}(x)=\int_{\Omega^d} T_{MH}(x|x') \rho_{BG}(x')dx',\quad\quad \forall x,x'\in\Omega
\end{equation}
where $T_{MH}(x|x')$ is the transition kernel of MH algorithm.
\end{prop}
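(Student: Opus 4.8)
The plan is to establish the stronger property of \emph{detailed balance} (reversibility) with respect to $\rho_{BG}$, from which invariance follows immediately by integration, and then to invoke standard ergodic theory for the uniqueness claim. First I would write the transition kernel explicitly as a sum of a ``move'' part and a ``stay'' part:
\begin{equation}
    T_{MH}(x|x') = g(x|x')\,A(x,x') + \delta(x-x')\left(1-\int_{\Omega} g(y|x')\,A(y,x')\,dy\right),
\end{equation}
where the first term accounts for proposing $x$ from $x'$ and accepting, while the second collects all rejected moves, which leave the chain at $x'$.

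The core step is to verify detailed balance for the off-diagonal (move) part, namely $\rho_{BG}(x')\,g(x|x')\,A(x,x') = \rho_{BG}(x)\,g(x'|x)\,A(x',x)$. I would do this by the standard two-case split on the argument of the minimum in $A(x,x') = \min\{1,\, \rho_{BG}(x)\,g(x'|x)/[\rho_{BG}(x')\,g(x|x')]\}$. If the ratio is $\le 1$, then $A(x,x')$ equals it and the left-hand side collapses to $\rho_{BG}(x)\,g(x'|x)$, while the reciprocal ratio is $\ge 1$ so $A(x',x)=1$ and the right-hand side is also $\rho_{BG}(x)\,g(x'|x)$; the case $>1$ is symmetric. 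The diagonal $\delta$-term is trivially reversible because $\delta(x-x')$ is symmetric in its arguments. This is a routine calculation and not the real difficulty.

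Invariance then follows by integrating the reversibility identity $\rho_{BG}(x')\,T_{MH}(x|x') = \rho_{BG}(x)\,T_{MH}(x'|x)$ over $x'$: the right-hand side factors as $\rho_{BG}(x)\int_{\Omega} T_{MH}(x'|x)\,dx' = \rho_{BG}(x)$, since $T_{MH}(\cdot|x)$ is a normalized transition kernel. This yields exactly the fixed-point equation in the statement.

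The hard part is the \textbf{uniqueness} of the stationary distribution, which does not follow from reversibility alone and requires additional hypotheses not made fully explicit in the informal setup. I would supply them by assuming the proposal $g(\cdot|x)$ has support covering $\Omega$ (e.g. a Gaussian proposal), so that the chain is $\rho_{BG}$-irreducible: any set of positive $\rho_{BG}$-measure is reached in one step with positive probability. Aperiodicity is automatic whenever the rejection probability is positive on a set of positive measure. Under irreducibility and aperiodicity, the Harris ergodic theorem guarantees that the invariant measure is unique, completing the argument. I expect this ergodicity step---pinning down the precise support and regularity conditions on $g$ and $U$ that make the chain Harris-recurrent---to be the genuine obstacle, whereas the detailed-balance verification is mechanical.
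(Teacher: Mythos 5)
Your proposal is correct and follows essentially the same route as the paper: decompose $T_{MH}$ into a move part plus a $\delta$ rejection part, verify detailed balance for the off-diagonal term (your two-case split on the minimum is algebraically the same as the paper's one-line absorption of the prefactor into the $\min$, yielding the symmetric expression $\min\{\rho_{BG}(x)g(x'|x),\rho_{BG}(x')g(x|x')\}$), and defer uniqueness to ergodicity theory, which the paper handles by citing geometric ergodicity where you invoke irreducibility, aperiodicity and the Harris theorem. Your version is somewhat more explicit about the integration step from reversibility to invariance and about the hypotheses on $g$ needed for uniqueness, but it is the same argument.
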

\begin{proof}
    We have show that (1) $\rho_{BG}(x)$ is a stationary distribution and (2) it is unique. Regarding (2) we advocate to geometric ergodicity\citep{mengersen1996rates}. We present the proof of property (1): firstly, it is equivalent to detailed balance condition\citep{robert1999monte} 
    \begin{equation}
    \label{eq:det-bal}
        \rho_{BG}(x)T_{MH}(x,x')=\rho_{BG}(x')T_{MH}(x',x)\quad\quad \forall x,x'\in\Omega
    \end{equation}
    The transition kernel is by definition
    \begin{equation}
        T_{MH}(x,x')=g(x'|x)A(x', x)+ \delta (x-x')\left(1-\int_\Omega A(x, s)g(s|x)ds\right) 
    \end{equation}
    where the first addend takes into account the case of accepted move, while the second of the rejected one. Actually, for $x=x'$ the detailed balance condition is trivially true. Then, for $x\neq x'$ we compute the left hand side of \eqref{eq:det-bal}
    \begin{equation}
    \begin{aligned}
        \rho_{BG}(x)T_{MH}(x,x')&=\rho_{BG}(x)g(x'|x)A(x', x)\\&=\rho_{BG}(x)g(x'|x)\argmin\left\{1,\frac{\rho_{BG}(x')g(x|x')}{\rho_{BG}(x)g(x'|x)}\right\}\\&=\argmin\left\{\rho_{BG}(x)g(x'|x),\rho_{BG}(x')g(x|x')\right\}
    \end{aligned}
    \end{equation}
    The right hand side is symmetric with respect to swap of $x$ with $x'$, hence concluding the proof.
\end{proof}
In practice, convergence is considered achieved when the acceptance ratio is consistently close to $1$. In such cases, every newly generated proposal can be regarded as an independent sample obtained from $\rho_{BG}$.

Despite its popularity, the Metropolis-Hastings algorithm has some limitations. It is sensitive to the choice of the proposal distribution $g$ and its parameters, and improper tuning may result in inefficient exploration. For instance, in the so-called {\it random walk setting}, $g$ is chosen to be a Gaussian transition kernel, and its variance is a critical hyperparameter in this case. Moreover, the algorithm generates correlated samples, impacting the independence of successive samples and hindering accurate estimation even after convergence. Convergence may be slow in high-dimensional spaces, requiring numerous iterations. In such setups, the algorithm's performance is influenced by the initial state, and initial points far from the basin of the target may impede efficient exploration, leading to an acceptance rate close to zero. Another issue pertains to multimodal distributions, especially those with widely separated modes. They pose a significant challenge for the Metropolis-Hastings algorithm because, depending on the choice of $g$, jumps between modes can be very rare and may necessitate a very long chain to practically observe convergence.\\
The second class of Markov chain we would like to review are the Langevin-based algorithms. The basic idea is very close to the definition of naive score-based diffusion in Proposition \ref{prop:score-match}. For the sake of simplicity let us fix the state space $\Omega=\R^d$.
\begin{prop}
    Let us denote with $dW_t$ a Wiener process. Under Assumption \ref{eq:assump:U}, namely 
    \begin{equation}
        \exists a\in\R_+ \text{and a compact set $\mathcal{C}\in\R^d$}:  \ x\cdot \nabla U(x) \ge a |x|^2 \quad \forall x \in \R^d\setminus{\mathcal{C}},
    \end{equation}
    the Langevin SDE
    \begin{equation}
        dX_t=-\nabla U(x)dt+\sqrt{2}dW_t\quad\quad X_0\sim\rho_0
    \end{equation}
    have a global solution in law and is ergodic~\citep{oksendal2003stochastic,mattingly2002stochastic,talay1900expansion}. For any initial condition $\rho_0(x)$ such solution is $\rho_{BG}(x)$.
\end{prop}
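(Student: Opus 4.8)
The plan is to split the statement into three claims and dispatch them in order: global-in-time well-posedness of the SDE, identification of $\rho_{BG}$ as a stationary law, and ergodicity (convergence of the law of $X_t$ from an arbitrary $\rho_0$, together with uniqueness of the invariant measure). First I would establish existence and uniqueness of a global solution. The diffusion coefficient $\sqrt{2}$ is constant, and the drift $b(x)=-\nabla U(x)$ is globally Lipschitz because part 1 of Assumption~\eqref{eq:assump:U} bounds the Hessian, $\|\nabla\nabla U\|\le L$, so $\nabla U$ is $L$-Lipschitz. The classical It\^o existence--uniqueness theorem then yields a unique strong solution on each $[0,T]$; the confinement condition of part 2 furnishes a Lyapunov function such as $V(x)=1+|x|^2$ that controls $\E|X_t|^2$, ruling out explosion and upgrading this to a global solution in law for all $t\ge 0$.

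Second, I would identify the stationary density. Writing the Fokker--Planck equation of Proposition~\ref{prop:fokker-pde} for this SDE gives $\partial_t\rho=\nabla\cdot(\nabla U\,\rho+\nabla\rho)=\nabla\cdot(\rho\,\nabla\log(\rho/\rho_{BG}))$, where $\rho_{BG}=e^{-U}/Z$ is a genuine probability density precisely because part 2 of Assumption~\eqref{eq:assump:U} guarantees $Z<\infty$. This is exactly the setting of Proposition~\ref{prop:score-match} with $\varepsilon=1$ and $\rho_*=\rho_{BG}$ (indeed $-\nabla U=\nabla\log\rho_{BG}$ and $\sqrt{2}=\sqrt{2\varepsilon}$), so by direct substitution the stationary flux $\nabla U\,\rho_{BG}+\nabla\rho_{BG}$ vanishes and $\rho_{BG}$ solves the stationary equation.

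Third, and most substantively, I would prove convergence and uniqueness. Here I can reuse verbatim the entropy computation from the proof of Proposition~\ref{prop:score-match}: setting $R=\rho/\rho_{BG}$ and integrating by parts yields $\frac{d}{dt}D_{\text{KL}}(\rho\parallel\rho_{BG})=-\int_{\R^d}\rho\,|\nabla\log R|^2\,dx\le 0$, with equality only at $\rho=\rho_{BG}$. To turn this monotonicity into genuine convergence and to conclude uniqueness of the invariant measure, I would invoke ergodicity in the sense of the cited works: the noise is uniformly elliptic ($\sigma=\sqrt{2}\,\mathbf{1}$), so the transition kernel is strong-Feller with a smooth strictly positive density, providing irreducibility and a minorization on compacts, while the dissipativity in part 2 of Assumption~\eqref{eq:assump:U} supplies a geometric Lyapunov drift condition $\mathcal{A}V\le -cV+b\,\mathbf{1}_{\mathcal{C}}$ for the generator $\mathcal{A}$. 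The Meyn--Tweedie/Harris theorem then yields a unique invariant measure with geometric convergence, which must coincide with the $\rho_{BG}$ identified above.

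The main obstacle is exactly this last step: the one-line entropy inequality only shows that $D_{\text{KL}}(\cdot\parallel\rho_{BG})$ decreases, and does not by itself deliver a rate or uniqueness. Making it rigorous requires either a functional inequality for $\rho_{BG}$ (a log-Sobolev or Poincar\'e inequality, justified from the confinement assumption via Bakry--\'Emery together with a Holley--Stroock bounded-perturbation argument to absorb the possibly non-convex region inside $\mathcal{C}$), or the probabilistic Lyapunov-plus-minorization machinery of the references. Since Assumption~\eqref{eq:assump:U} is tailored for the latter, the crux of the work is verifying the drift and minorization conditions and then applying the geometric ergodicity theorem to pin down $\rho_{BG}$ as the unique limit.
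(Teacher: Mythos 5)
Your proposal is sound, but note that the paper does not actually prove this proposition: it is stated with the conclusion attributed to the cited references (\O ksendal, Mattingly--Stuart--Higham, Talay--Tubaro), so there is no in-paper argument to compare against line by line. What you propose is the standard route those references take, and your decomposition is the right one. The well-posedness step is correct, and in fact slightly over-engineered: since part 1 of Assumption~\ref{eq:assump:U} makes $\nabla U$ globally Lipschitz and the diffusion is constant, the classical It\^o theorem already gives a global strong solution with no explosion, so the Lyapunov function is only genuinely needed later for ergodicity. Your identification of $\rho_{BG}$ as a stationary solution correctly reuses the paper's own computation from Proposition~\ref{prop:score-match} (the case $\varepsilon=1$, $\rho_*=\rho_{BG}$), and you are right that the resulting entropy-decay inequality is only monotonicity, not convergence --- the paper's proof of Proposition~\ref{prop:score-match} glosses over this same point. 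Your main route to uniqueness and convergence (uniform ellipticity giving a strong-Feller, everywhere-positive transition density, plus the drift condition $\mathcal{A}V\le -cV+b\,\mathbf{1}_{\mathcal{C}}$ for $V=1+|x|^2$, which follows directly from $x\cdot\nabla U\ge a|x|^2$ outside $\mathcal{C}$, feeding into Harris/Meyn--Tweedie) is exactly what the cited literature does and is the correct way to close the argument. One caveat: the alternative functional-inequality route you mention is not straightforwardly available here, since $x\cdot\nabla U(x)\ge a|x|^2$ with a bounded Hessian does not imply that $U$ is a bounded perturbation of a uniformly convex potential, so Bakry--\'Emery plus Holley--Stroock need not apply under the stated hypotheses; you correctly treat the Lyapunov-plus-minorization machinery as the primary path, and that is the one to keep.
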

Given this result, one can define a Markov chain based on the time discretization of such SDE and use it for sampling\citep{parisi1981correlation}. Such procedure is commonly named Unadjusted Langevin Algorithm (ULA)\citep{roberts1996exponential}.
\begin{defi}[ULA]
    Given a time step $h>0$ and a set of i.i.d. gaussian variables $\{\xi_k\}\sim\mathcal{N}(\boldsymbol{0}_d,\boldsymbol{I}_d)$, the Unadjusted Langevin Algorithm (ULA) is the Markov chain defined as
    \begin{equation}
    \label{eq:ULA}
        X_{k+1}=X_k-h \nabla U_{\theta_k}(X_k) +\sqrt{2h}\, \xi_k,\quad\quad\quad\quad X_0\sim \rho_{\theta_0},
    \end{equation}
    for $k\in\N$.
\end{defi}
Under Assumption \ref{eq:assump:U}, the Unadjusted Langevin Algorithm (ULA) is ergodic and possesses a unique global solution. An advantage over the Metropolis-Hastings (MH) algorithm is that the chain is uniquely defined via $U(x)$, and no proposal distribution is necessary. However, it is well-known that ULA represents a biased implementation of Langevin dynamics\citep{wibisono2018sampling}. For a nonzero time step, the global solution is $\rho_{bias} \neq \rho_{BG}$. Let us illustrate this point with a simple example.
\begin{exm}
    Let $U(x)=(x-\mu)^T\Sigma^{-1}(x-\mu)/2+\log[\operatorname{det}(2\pi\Sigma)]/2$, that is BG ensemble is a gaussian with mean $\mu$ and covariance matrix $\Sigma$. The associated Langevin SDE is also known as Ornstein-Uhlenbeck (OU) process\citep{uhlenbeck1930theory}, having a linear drift as peculiarity:
    \begin{equation}
        d X_t=-\Sigma^{-1}(X_t-\mu) d t+\sqrt{2} d W_t.
    \end{equation}
    It is possible to write an explicit solution using Ito integral, namely
    \begin{equation}
        X_t-\mu \sim e^{-t \Sigma^{-1}}\left(X_0-\mu\right)+\Sigma^{\frac{1}{2}}\left(\boldsymbol{I}_d-e^{-2 t \Sigma^{-1}}\right)^{\frac{1}{2}} Z
    \end{equation}
    for any $t\geq 0$ and where $Z\sim\mathcal{N}(\boldsymbol{0}_d,\boldsymbol{I}_d)$ indipendently from $X_0$. It means that the law of the process converges exponentially fast to $\mathcal{N}(\mu,\Sigma)$. The associated ULA is
    \begin{equation}
        X_{k+1}-\mu=\left(\boldsymbol{I}_d-h \Sigma^{-1}\right)\left(X_k-\mu\right)+\sqrt{2 h} \xi_k .
    \end{equation}
    and the corresponding solution in law is
    \begin{equation}
        X_k-\mu \sim A_h^k\left(X_0-\mu\right)+\sqrt{2 h}\left(\boldsymbol{I}_d-A_h^2\right)^{-\frac{1}{2}}\left(\boldsymbol{I}_d-A_h^{2 k}\right)^{\frac{1}{2}} Z
    \end{equation}
where $A_h=\boldsymbol{I}_d-h\Sigma^{-1}$. Naming $\lambda_{min}(\Sigma)>0$ the minimum eigenvalue of the covariance matrix, for $0<h<\lambda_{min}(\Sigma)$ we have $\lim_{k\to\infty}A_h^k=0$. Thus, for $k\to\infty$
\begin{equation}
    X_k \xrightarrow{d} \mu+\sqrt{2 h}\left(\boldsymbol{I}_d-A_h^2\right)^{-\frac{1}{2}}Z
\end{equation}
This means that the limiting measure for ULA is not $\rho_{BG}$, but 
\begin{equation}
    \rho_{bias}(x)=\mathcal{N}\left(\mu, \Sigma\left(\boldsymbol{I}_d-\frac{h}{2} \Sigma^{-1}\right)^{-1}\right)(x)
\end{equation}
\end{exm}
This example illustrates that the Unadjusted Langevin Algorithm (ULA) exhibits bias even for a very simple target density. This phenomenon has been recently analyzed mathematically\citep{wibisono2018sampling}. The physical interpretation is that detailed balance is broken by construction. Let us elaborate on this point: in Proposition \ref{prop:fokker-pde}, we demonstrated how a Stochastic Differential Equation (SDE) can be associated with a Partial Differential Equation (PDE). The specific case studied in this section was previously analyzed in Proposition \ref{prop:score-match}. Specifically, the Boltzmann-Gibbs (BG) density is the unique minimizer of the Kullback-Leibler (KL) divergence functional $D_{\text{KL}}(\rho\parallel \rho_{GB})$. Moreover, the Fokker-Plank PDE corresponds to the gradient flow in $\mathcal{P}(\R^d)$ with respect to the 2-Wasserstein distance $\mathcal{W}_2$\citep{jordan1998variational}. If we split \eqref{eq:ULA} in two substeps
\begin{equation}
    \begin{split}
        X_{k+\f12}&=X_k-h\nabla U(X_k)\\
        X_{k+1}&=X_{k+\f12}+\sqrt{2\varepsilon} \xi_k
    \end{split}
\end{equation}
we can associate each of them to a precise operation in probability space. In particular, denoting with $\rho_i$ the law of $X_i$, we obtain
\begin{equation}
\label{eq:2-step}
    \begin{split}
        \rho_{k+\f12}&=(\boldsymbol{I}_d-h\nabla U)_\#\rho_{k}\\
        \rho_{k+1}&=\mathcal{N}(\boldsymbol{0}_d,2h\boldsymbol{I}_d)\star\rho_{k+\f12}
    \end{split}
\end{equation}
We recall the decomposition of the Kullback-Leibler (KL) divergence as $D_{\text{KL}}(\rho\parallel \rho_{GB})=-H(\rho,\rho_{GB})-H(\rho)$. In \eqref{eq:2-step}, the first step involves the forward discretization of gradient descent on $-H(\rho,\rho_{GB})=\E_\rho [U]$, while the second step represents the exact gradient flow for negative entropy in probability space. Therefore, ULA is also referred to as the Forward-Flow method in probability space. The bias arises because the forward gradient descent does not correspond, in probability space, to the adjoint of the flow at iteration $k+1/2$. One possible solution is to use forward-backward combinations, referring to proximal algorithms\citep{parikh2014proximal}. In particular, the Forward-Backward (FB) implementation for Langevin dynamics would be
\begin{equation}
    \begin{split}
        \rho_{k+\f12}&=(\boldsymbol{I}_d-h\nabla U)_\#\rho_{k}\\
        \rho_{k+1}&=\argmin _{\rho \in \mathcal{P}}\left\{-H(\rho)+\frac{1}{2 \epsilon} \mathcal{W}_2\left(\rho, \rho_{k+\frac{1}{2}}\right)^2\right\}
    \end{split}
\end{equation}
Similarly, the Backward-Forward (BF) version
\begin{equation}
    \begin{aligned}
\rho_{k+\frac{1}{2}} & =\left((\boldsymbol{I}_d+h \nabla U)^{-1}\right)_{\#} \rho_k \\
\rho_{k+1} & =\exp _{\rho_{k+\frac{1}{2}}}\left(-h \nabla \log \rho_{k+\frac{1}{2}}\right)
\end{aligned}
\end{equation}
where $\exp$ is the exponential map. Unfortunately, both FB and BF are not implementable in practice, except for the trivial case of gaussian initial data and target $\rho_{BG}$. The heat flow (the step $k+1/2$) is the most problematic since it concerns steps in probability space. Neither forward (n.b. beyond one iteration) nor backward are usable. As a side note, one could imagine to directly perform a single forward or backward step on the KL divergence. Unfortunately, the encountered issues are the same one has for the heat flow, i.e. the hard task appears to be the actual implementation of forward or backward routines in probability space. In conclusion, ULA appears to be the simplest time discretization of Langevin dynamics, since it is practically implementable in general, hence very used for sampling from a BG ensemble. However, it is known to be biased and other methods are studied to eliminate, or at least reduce, such bias. \\
One possibility we would like to review is Metropolis Adjusted Langevin Algorithm\citep{grenander1994representations} (MALA), which represents a sort of hybrid between MH and ULA. 
\begin{defi}[MALA]
    Metropolis Adjusted Langevin Algorithm is a particular case of MH algorithm \ref{defi:MH} where the proposal distribution is the transition kernel associated to ULA \eqref{eq:ULA}, namely (for $x\in\R^d$)
    \begin{equation}
        {\displaystyle g(x'\mid x)=\frac{1}{(2\pi h)^{\frac{d}{2}}} \exp \left(-{\frac {1}{4h }}\|x'-x+h U(x)\|_{2}^{2}\right)}
    \end{equation}
\end{defi}

On the other hand, one can interpret MALA as a corrective measure for the breakdown of detailed balance in ULA. While the Metropolis-Hastings algorithm inherently respects detailed balance, implying that MALA becomes asymptotically unbiased for a large number of iterations as $k\to\infty$, certain challenges persist. A primary concern is the sensitivity to the choice of the step size $h$ during the discretization of Langevin dynamics, significantly influencing the efficiency of sampling. When $h$ is too small, it can lead to poor exploration and potentially a very low acceptance rate, while an excessively large $h$ can lead to instability of the chain. Determining an optimal $h$ lacks a general rule, contributing to MALA introducing bias in samples, particularly evident when the target distribution features sharp peaks or multimodal structures. This bias introduces potential inaccuracies in statistical estimates.\\In practical applications, MALA may exhibit random walk behavior, especially when step sizes are inadequately tuned, resulting in inefficient exploration and sluggish convergence. The algorithm's performance is further contingent on the choice of initial conditions, and beginning far from high-probability regions may necessitate a considerable number of iterations for meaningful exploration. Additionally, MALA may struggle to adapt to changes in the geometry of the target distribution, particularly when facing varying curvatures or strong anisotropy.\\While various more advanced algorithms exist, they often build upon the foundational concepts discussed in this section. Notable among them is Hamiltonian (or Hybrid) Monte Carlo\citep{duane1987hybrid} (HMC), an advanced MCMC method inspired by Hamiltonian mechanics. HMC utilizes fictitious Hamiltonian dynamics to propose new states, enhancing exploration, especially in high-dimensional systems. Gibbs sampling\citep{geman1984stochastic}, another MCMC approach, iteratively samples from conditional distributions given current variable values on a single dimension, proving effective, particularly in high-dimensional spaces. Parallel Tempering, or Replica Exchange\citep{swendsen1986replica}, involves running multiple chains at different temperatures concurrently, with periodic swaps between neighboring chains to facilitate improved exploration.\\
In general, most methods aim to find a chain that produces independent samples from a Boltzmann-Gibbs ensemble, particularly when run for extended periods. A critical issue is measuring the effective bias due to the truncation at finite time of the chain, posing challenges for convergence towards the asymptotic $\rho_{BG}$. Unfortunately, few general results are available, and they are often limited to specific BG ensembles, such as Gaussian or log-concave densities. This becomes particularly problematic in the context of Energy-Based Models (EBM), as outlined in Remark \ref{rem:fun}, where sampling from a BG distribution is required at each step of parameter optimization. 
\label{sec:ebms-sam}

\section{EBMs and physics}
\label{sec:ebms-phys}
{In this section, we review the Boltzmann-Gibbs ensemble in the context of equilibrium Statistical Physics, deriving it through the maximum entropy principle. This approach provides a mathematically precise yet concise justification for the Boltzmann distribution, which underpins the probabilistic structure of EBMs. Our goal is to offer a compact introduction to key statistical physics concepts, such as free energy and partition functions, that recur in the discussion of EBMs. While this derivation is not directly related to the practical training of an EBM, we believe it is important to equip the reader of this review with the necessary physical and mathematical tools to understand recurring concepts in EBMs, such as the Boltzmann-Gibbs distribution, entropy, and free energy. \\
The first step involves the derivation of the Boltzmann-Gibbs ensemble. Here, we present a derivation based on information theory\citep{jaynes1957information, jaynes1957information2}, offering a posteriori physical interpretation of the quantities we will manipulate. Alternative methods of proof are also available\citep{gallavotti1999statistical}. Consider a physical system whose state is uniquely determined by a variable $x\in\Omega\subseteq\R^d$, where $\Omega$ is often referred to as phase space. The connection with information theory is linked to the fundamental problem of Statistical Physics: describing a system as a statistical ensemble, i.e., identifying an observation of $x$ as a sample from an underlying PDF $\rho$. Like classical statistics, $\rho$ contains a wealth of information about the system, particularly its global properties.\\
In Physics, this dichotomy translates into the microscopic versus macroscopic realms. Let's envision a simple thought experiment: picture a large city where each of the $N$ inhabitants is given a fair coin, with the coin's state represented by our variable $x\in\{-1,1\}^N$. Twice a day, everyone has to flip their coin. If we were omniscient, there would be a way to predict the state $x$ with no error (i.e., the microscopic state) and derive any global (macroscopic) property, such as the sum or product of the state values at each flipping event, with no error. However, in reality, nobody could achieve this; we rely on statistics, the central limit theorem, and so forth. In other words, we know the probability density $\rho(x)$ from which the process is a sampled event. For instance, if $N$ is large enough, we expect the average of the state vector to be $0$ for any flipping event, and we can deduce so directly from $\rho$.

In Statistical Physics, each coin represents a component of a system, such as a particle in a gas, for which a direct measurement of $x$ is unattainable. The goal is to determine $\rho$ so that standard statistical tools can be used to analyze global properties. The challenge that makes Statistical Physics more complex than the simple example above is that the dynamics of individual components can be unknown and inaccessible. Additionally, interactions between components can make the identification of $\rho$ challenging, even if the underlying microscopic dynamics are known.\\To address this issue, we recognize that, before formulating any physical model, we need some motivated assumptions—constraints or information—regarding how the system should behave, at least on a macroscopic level. This is the bare minimum; without any information about a system, it is impossible to provide any meaningful analysis. Thus, adopting a claim of epistemic modesty, one can state that we aim to select the model compatible with such constraints that maximizes our ignorance about the system. The mathematical translation of such idea is the {\it Principle of Maximum Entropy}.    
\begin{assu}[Principle of Maximum Entropy]
    Let us consider the unknown $\rho:\Omega\to\R_+$ that describes the probability distribution of the states. We assume $\rho$ to be absolutely continuous w.r.t. Lebesgue measure without loss of generality. Given a vector field $F:\Omega\to \R^d$, $\Lambda\in \R^d$ and a PDF $\pi$, a set of constraints is any component-wise (in)equality
    \begin{equation}
        I_k[\pi]\coloneqq\int_\Omega F_k(x)\pi(x)dx \leq_k \Lambda_k\quad\quad k=1,\dots,d
    \end{equation}
    where the symbol $\leq_k$ can be an equality or inequality. The {\bf Principle of Maximum Entropy} is
    \begin{equation}
    \label{eq:max-ent}
      \rho=\argmax_{\substack{\pi\in \mathcal{P}(\Omega)\\ I_k[\pi]\leq_k \Lambda_k}} H(\pi)
    \end{equation}
    where $H(\rho)$ is the usual differential entropy, cfr. \eqref{eq:ent}.
\end{assu}
This variational formulation identifies the "ignorance" about the system with the entropy associated to $\rho$. It has been proven that the entropy can be characterized in an axiomatic way\citep{aczel1974shannon}, so that the definition of differential entropy is unique with respect to certain properties. For the sake of the present treatment, let us motivate the Maximum Principle with a simple example.
\begin{exm}[Maximum principle on an interval]
\label{exp:const}
    Let us consider an interval $\Omega=[a,b]\subset \R$, with $\operatorname{Vol}(\Omega)=\int_a^b dx$. Moreover, the only constraint is that $\rho$ can be normalized and is positive. Thus, we have $I_0[\pi]=\int_a^b \rho(x)dx= 1$ and  
    \begin{equation}
        \rho=\argmax_{\substack{\pi\in \mathcal{P}(\Omega) \\ I_0[\pi]= 1,\; \pi>0} } H(\pi)
    \end{equation}
    We can use Lagrange multipliers to solve a constrained optimization problem, solving the unconstrained optimization of the Lagrangian 
    \begin{equation}
        J(\pi)\coloneqq H(\pi)+\lambda_0\left(\int_a^b \pi(x)dx- 1\right)
    \end{equation}
    To find stationary points we can compute the first variational derivative with respect to $\pi$ and finding its roots, namely solutions of  
    \begin{equation}
        \frac{\delta J(\pi)}{\delta \pi}=-\log \pi -1 +\lambda_0=0
    \end{equation}
    that is $\rho(x,\lambda_0)=\exp(\lambda_0-1)$. To find $\lambda_0$ we can substitute $\rho(x,\lambda_0)$ into the constraint, yielding $\lambda_0=1-\log(b-a)$. In conclusion, $\rho(x)=1/(b-a)$, which also satisfies the positivity request. We have just to check that such stationary point is a maximum. The second variation of $J(\pi)$ evaluated in the stationary point is
    \begin{equation}
    \label{eq:second-var}
        \frac{\delta^2 J(\pi)}{\delta \pi^2}\Bigr\rvert_{\pi=\rho}=-\frac{1}{\rho(x)}<0
    \end{equation}
    Hence, we conclude that $\rho(x)$ is a maximum. If $\Omega$ is discrete such derivation can be easily generalized. The interpretation is straightforward: imagine that $\Omega$ is the event space for some random process. Without any knowledge, the simplest possible model is the one that associates the same probability to all the possible outcomes.  
\end{exm}
At this point we have all the ingredients to present the derivation of Boltzmann-Gibbs ensemble.
\begin{prop}[Boltzmann-Gibbs ensemble from Maximum Entropy principle]
    Given $U(x)$ that satisfies Assumption \ref{eq:assump:U} and a constant $U^*$, the Boltzmann-Gibbs $\rho_{BG}=e^{\lambda_1 U(x)}/Z$, where $\lambda_1>0$, is the unique solution of the variational maximization problem \eqref{eq:max-ent} where the set of constraints is 
    \begin{equation}
        \begin{split}
            I_0[\pi]&\coloneqq\int_{\Omega} \pi(x)dx= 1\\
            I_1[\pi]&\coloneqq\int_{\Omega} U(x)\pi(x)dx=  U^*
        \end{split}
    \end{equation}
\end{prop}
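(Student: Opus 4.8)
The plan is to follow the calculus-of-variations recipe already used in Example~\ref{exp:const}, now carrying the two constraints simultaneously. First I would form the Lagrangian
\begin{equation}
J(\pi) = H(\pi) + \lambda_0\left(I_0[\pi]-1\right) + \lambda_1\left(I_1[\pi]-U^*\right),
\end{equation}
with multipliers $\lambda_0,\lambda_1$ enforcing normalization and the mean-energy constraint. Computing the first variation exactly as before gives $\delta J/\delta\pi = -\log\pi - 1 + \lambda_0 + \lambda_1 U(x)$, and setting it to zero yields the stationary candidate $\pi(x)=\exp\!\left(\lambda_0-1+\lambda_1 U(x)\right)$. Absorbing the constant $e^{\lambda_0-1}$ into a normalization factor produces precisely $\rho_{BG}(x)=Z^{-1}e^{\lambda_1 U(x)}$ with $Z=\int_\Omega e^{\lambda_1 U(x)}dx$, which is the claimed form.

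Next I would pin down the multipliers from the constraints themselves. The normalization $I_0[\pi]=1$ merely fixes $Z$, while the energy constraint $I_1[\pi]=U^*$ determines the remaining multiplier implicitly through the relation $\partial_{\lambda_1}\log Z = \E_{\rho_{BG}}[U] = U^*$; monotonicity of $\lambda_1\mapsto\E_{\rho_{BG}}[U]$ then shows a unique $\lambda_1$ realizes any admissible value of $U^*$. The sign of $\lambda_1$ is not free: it is forced by the requirement $Z<\infty$. Here Assumption~\eqref{eq:assump:U}(2) is essential, since the coercivity $x\cdot\nabla U(x)\ge a|x|^2$ outside a compact set makes $U$ grow at least quadratically at infinity, so the exponential is integrable precisely for the sign that makes the exponent decay—consistent with identifying the multiplier (up to sign) with a positive physical inverse temperature.

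For global maximality and uniqueness I would not rely solely on the second variation $\delta^2 J/\delta\pi^2 = -1/\pi(x)<0$, which as in Eq.~\eqref{eq:second-var} only certifies that the critical point is a local maximum. Instead I would close the argument with a Gibbs-inequality computation: for any feasible $\pi\in\mathcal{P}(\Omega)$ with $I_0[\pi]=1$ and $I_1[\pi]=U^*$, using $\log\rho_{BG}=\lambda_1 U-\log Z$ one has
\begin{equation}
D_{\text{KL}}(\pi\parallel\rho_{BG}) = -H(\pi) - \int_\Omega \pi\log\rho_{BG}\,dx = -H(\pi) + \log Z - \lambda_1 U^*,
\end{equation}
where the last two terms are identical for every feasible $\pi$ because $\int_\Omega U\pi=U^*$ and $\int_\Omega\pi=1$. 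Non-negativity of the KL divergence, with equality iff $\pi=\rho_{BG}$, then gives $H(\pi)\le H(\rho_{BG})$ with equality only at $\rho_{BG}$, establishing both that $\rho_{BG}$ maximizes the entropy over the constraint set and that it is the unique such maximizer.

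The hard part will not be any single calculation but the rigor of the infinite-dimensional variational problem: the formal Euler--Lagrange step only produces a candidate, so one must separately justify finiteness of $Z$ and of $H(\rho_{BG})$, solvability of the constraint equation for $\lambda_1$, and the admissible range of $U^*$. Assumption~\eqref{eq:assump:U}(2) is exactly the hypothesis that resolves the integrability obstacle, and the Gibbs-inequality route is attractive precisely because it bypasses compactness and second-variation technicalities entirely, upgrading the stationary candidate to a global and unique optimum using only $D_{\text{KL}}\ge 0$.
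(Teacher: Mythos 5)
Your derivation of the candidate density is the same as the paper's: form the Lagrangian with multipliers $\lambda_0,\lambda_1$, compute the first variation $\delta J/\delta\pi=-\log\pi-1+\lambda_0+\lambda_1 U$, and absorb $e^{\lambda_0-1}$ into $Z$. Where you genuinely depart from the paper is in how you certify optimality and uniqueness: the paper stops at the second variation $\delta^2 J/\delta\pi^2=-1/\pi<0$, which on its own only certifies a local maximum (one would still need to invoke concavity of $H$ on the convex constraint set to upgrade it), whereas your Gibbs-inequality computation $D_{\text{KL}}(\pi\parallel\rho_{BG})=H(\rho_{BG})-H(\pi)\ge 0$ for every feasible $\pi$ delivers global maximality \emph{and} uniqueness in one stroke, using only $D_{\text{KL}}\ge 0$ with equality iff the densities coincide. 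That is a cleaner and strictly stronger closing argument, and your additional remarks on solvability of the constraint for $\lambda_1$ via monotonicity of $\partial_{\lambda_1}\log Z$ and on integrability of $Z$ under Assumption~\ref{eq:assump:U}(2) address points the paper leaves implicit. One detail worth flagging: your observation that integrability forces the sign of the exponent actually exposes a notational slip in the statement itself --- with $U$ coercive and the density written as $e^{\lambda_1 U}/Z$, finiteness of $Z$ requires the exponent to decay at infinity, so the stated condition $\lambda_1>0$ is only consistent with the conventional form $e^{-\lambda_1 U}/Z$; your parenthetical ``up to sign'' handles this correctly, but be explicit that the multiplier produced by the Euler--Lagrange equation must be negative in the paper's convention.
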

\begin{proof}
    The proof proceeds similarly to Example \ref{exp:const}. The constrained optimization problem \eqref{eq:max-ent} is associated to the unconstrained one
    \begin{equation}
    \label{eq:unco}
        \rho=\argmax_{\pi\in \mathcal{P}(\Omega)} J(\pi)\coloneqq\argmax_{\pi\in \mathcal{P}(\Omega)} H(\pi)+\lambda_0\left(\int_\Omega \pi(x)dx- 1\right)+\lambda_1\left(\int_\Omega U(x) \pi(x)dx-U^*\right).
    \end{equation}
    where $\lambda_0,\lambda_1$ are Lagrange multiplier. We compute the first variational derivative and find its roots 
    \begin{equation}
    \label{104}
        \frac{\delta J(\pi)}{\delta \pi} = -\log\pi(x) -1 + \lambda_0 + \lambda_1 U(x)=0
    \end{equation}
    That is, $\rho(x)=\exp(\lambda_0 + \lambda_1 U(x)-1)$. This means that $\lambda_0$ can be incorporated in the normalization factor, namely the partition function $Z^{-1}=\exp(\lambda_0-1)$, and determined via the constraint $I_0[\rho]=1$. While $\lambda_1$ is implicitly determined by $I_1$. To check that such solution is a maximum, we compute the second variation obtaining a result analogous to \eqref{eq:second-var}. 
\end{proof}
The remaining issue is the identification of $\lambda_1$ with $\beta=1/k_BT$, related to temperature $T$ and Boltzmann dimensional constant $k_B=1.23\times 10^{-28}\, \text{J}\cdot \text{K}^{-1}$. The reason is that if we interpret the Boltzmann-Gibbs ensemble with an equilibrium ensemble, the derivation via Maximum Entropy principle must be coherent with Thermodynamics\citep{adkins1983equilibrium}. A complete survey on such field would lead the present treatment off-topic. The take home message is related to a different interpretation of the unconstrained optimization problem \eqref{eq:unco}, namely
\begin{equation}
    \frac{\delta }{\delta \pi}\left( H(\pi)+\lambda_0\int_\Omega \pi(x)dx+\lambda_1\int_\Omega U(x) \pi(x)dx\right) = 0
\end{equation}
In particular, the following lemma holds true:
\begin{lemma}
\label{lem:equiva}
    Maximum Entropy principle and its variational formulation are equivalent to 
    \begin{itemize}
        \item Minimization of energy functional $\bar U=\int_\Omega U(x) \pi(x)dx$ {under the constraints}
\begin{equation}
        \begin{split}
            I_0[\pi]&\coloneqq\int_{\Omega} \pi(x)dx= 1\\
            I_1[\pi]&\coloneqq H(\pi)= H^*>0
        \end{split}
    \end{equation}
        \item Minimization of Helmholtz Free Energy functional $F=\bar U-TH(\pi)$, where $T>0$ is the usual thermodynamic temperature, {under the constraint}
 \begin{equation}
        \begin{split}
            I_0[\pi]&\coloneqq\int_{\Omega} \pi(x)dx= 1
        \end{split}
    \end{equation}       
    \end{itemize}
\end{lemma}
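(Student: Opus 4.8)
The plan is to apply to each of the three formulations the same Lagrange-multiplier and calculus-of-variations argument already used in Example~\ref{exp:const} and in the proof of the Boltzmann--Gibbs proposition, and to show that every stationarity condition collapses to the same exponential family $\pi\propto e^{cU}$. The equivalence then reduces to checking that the Lagrange multipliers of the three problems stand in mutually reciprocal (dual) correspondence, so that one member of this family solves all three simultaneously, and to confirming that each critical point is the correct type of extremum.

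First I would treat the energy-minimization problem. Forming the Lagrangian
\begin{equation}
J_1(\pi)=\int_\Omega U(x)\pi(x)\,dx+\mu_0\left(\int_\Omega\pi(x)\,dx-1\right)+\mu_1\bigl(H(\pi)-H^*\bigr),
\end{equation}
and using $\delta H/\delta\pi=-\log\pi-1$ exactly as in \eqref{104}, the Euler--Lagrange equation $U(x)+\mu_0-\mu_1(\log\pi(x)+1)=0$ yields $\pi(x)\propto e^{U(x)/\mu_1}$. This matches the maximum-entropy solution $\rho\propto e^{\lambda_1 U}$ of \eqref{104} under $\lambda_1=1/\mu_1$: the multiplier that enforces the \emph{energy} constraint in \eqref{eq:max-ent} is the reciprocal of the one that enforces the \emph{entropy} constraint here. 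I would then repeat the computation for the free-energy functional, with Lagrangian
\begin{equation}
J_2(\pi)=\int_\Omega U(x)\pi(x)\,dx-T\,H(\pi)+\nu_0\left(\int_\Omega\pi(x)\,dx-1\right);
\end{equation}
its first variation $U(x)+T(\log\pi(x)+1)+\nu_0=0$ gives $\pi(x)\propto e^{-U(x)/T}$, again the Boltzmann--Gibbs form, now with the thermodynamic identification $T=1/\lambda_1$ already anticipated in the discussion preceding the lemma.

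To close the equivalence I would pin down the constraint data and the extremum type. Every stationary solution belongs to the one-parameter family $\rho_c\propto e^{cU}$, and both maps $c\mapsto\bar U(\rho_c)$ and $c\mapsto H(\rho_c)$ are strictly monotone in $c$; hence the value $U^*$ in the maximum-entropy problem, the value $H^*$ in the energy problem, and the temperature $T$ in the free-energy problem each select a single $c$, and choosing $H^*=H(\rho_{\lambda_1})$ together with $T=1/\lambda_1$ makes the three solutions coincide. The nature of each extremum follows from the second variation as in \eqref{eq:second-var}: $H$ is concave, so \eqref{eq:max-ent} is maximized; $F=\bar U-T H(\pi)$ is convex in $\pi$ (since $\pi\log\pi$ is convex), so $J_2$ is minimized. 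The step I expect to be the main obstacle is the duality bookkeeping for the energy problem: the level set $\{H(\pi)=H^*\}$ is not convex, so rather than reading off a sign of the second variation one must argue through the monotonicity of $c\mapsto(\bar U,H)$ that the critical point is a genuine constrained minimizer, and one must verify that the correspondence $U^*\leftrightarrow H^*\leftrightarrow T$ is a bijection so that the three problems are indeed stated over matching data.
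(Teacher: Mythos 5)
Your proposal is correct and takes essentially the same route as the paper: both form the Lagrangians for the three problems, observe that the stationarity conditions all collapse to the same equation defining the exponential family, and identify the multipliers reciprocally ($\lambda'_1=-1/\lambda_1$ for the energy problem, $T=-1/\lambda_1$ for the free energy). You go further than the paper's one-line proof by checking second-order conditions and the correspondence $U^*\leftrightarrow H^*\leftrightarrow T$; the only small imprecision is that $c\mapsto H(\rho_c)$ is not globally monotone (its derivative is $-c\,\mathrm{Var}_{\rho_c}(U)$, so it is monotone only on each sign branch of $c$), though under Assumption~\ref{eq:assump:U} normalizability on $\R^d$ restricts to the branch $c<0$ where your argument goes through.
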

\begin{proof}
    The proof is just related to a redefinition of the Lagrange multipliers, leading to the same eq. \ref{104} for the stationarity condition. For the minimization of energy, one define $\lambda'_1=-1/\lambda_1$ and $\lambda'_0=-\lambda_0/\lambda_1$, where the sign is just a convention, obtaining 
    \begin{equation}
        \frac{\delta }{\delta \pi}\left( \lambda'_1 H(\pi)+\lambda'_0\int_\Omega \pi(x)dx+\int_\Omega U(x) \pi(x)dx\right) = 0
    \end{equation} 
    While for the Helmholtz Free Energy, we have just to impose the thermodynamics constraint\citep{fermi2012thermodynamics} $\partial F/\partial S=-T$, that is $\lambda_1=-1/T$.
\end{proof}
\begin{rem}[Free Energy in EBM training]
    In Section \ref{sec:ebm-how} we presented the training procedure for an EBM as the KL divergence minimization. If $\rho_\theta$ is in the same class of $\rho_*$, the global minimum in probability space corresponds to $\rho_\theta=\rho_*$, i.e. KL divergence equal to zero since by definition $D_{\text{KL}}(\rho_\theta\parallel\rho_\theta)=0$. However, if we expand the such identity, we have  
    \begin{equation}
    \label{eq:free-conn}
        \log Z_\theta + \beta\int_{\R^d} U_\theta(x)\rho_\theta(x)dx-H(\rho_\theta)=0
    \end{equation}
    where we used \eqref{eq:cross-sim}, and restored $\beta$ in front of $U_\theta$ (n.b. we put $k_B=1$ and $T=1$ in Section \ref{sec:ebm-how}). If we identify $F_\theta=-\log Z_\theta$, we immediately notice that \eqref{eq:free-conn} is the definition of Helmholtz Free Energy. In fact, the convergence of the training corresponds to have reached the equilibrium. The equality $F=\bar U-TH(\pi)$ is not true {\it out of equilibrium} --- the KL divergence $D_{\text{KL}}(\rho_\theta\parallel\rho_*)$ is zero iff $\rho_\theta=\rho_*$. Moreover, it is even more clear the statement of Lemma \ref{lem:equiva}: since
    \begin{equation}
        \log Z_\theta +\min_{\substack{\pi\in \mathcal{P}(\Omega) \\ I_0[\pi]= 1,\; \rho_\theta>0} }  [\beta\int_{\R^d} U_\theta(x)\\\pi(x)dx-H(\pi)]=0
    \end{equation}
     and $F=\bar U-TH(\pi)$, at equilibrium $F$ is necessarily minimized in correspondence of $F[\rho_\theta]=-\log Z_\theta$.
\end{rem}

The requested compatibility between Thermodynamics and the Maximum Entropy principle in Lemma \ref{lem:equiva} represents the final ingredient needed to define the Boltzmann-Gibbs probability density associated with a system at equilibrium with a thermal reservoir at temperature $T$. For the purpose of this review, it would be beneficial to elaborate on the physical significance of EBM. Assuming we are dealing with an equilibrium ensemble, we presume that the parameters $\theta$ in the energy $U_\theta$ have already been determined. Similar to a physical gas where particles move within an energy landscape, different datasets or even individual data points can be envisioned as snapshots of an evolving physical system. The crucial aspect is that from a statistical perspective, the average energy $\bar U$ associated with the EBM must remain constant, with fluctuations suppressed as the number of components increases. An example of dynamics consistent with such a constraint is Langevin dynamics. The connection with sampling and Physics becomes evident: sampling is the process of relaxation\citep{evans2009dissipation} towards equilibrium. Utilizing our understanding of nature entails designing sampling routines capable of facilitating such relaxation.

We introduced the concept of free energy as a thermodynamic quantity minimized at equilibrium by the Boltzmann-Gibbs ensemble. Generally, computing free energy is a extensively studied problem in Chemistry\citep{jorgensen1989free}, spanning from organic Chemistry to protein folding\citep{dinner2000understanding}. However, the concept of free energy appears ubiquitous, extending into seemingly disparate contexts far from computational chemistry, such as autoencoders\citep{hinton1993autoencoders}, lattice field theory\citep{nicoli2021estimation}, and neuroscience\citep{friston2009free}. Invariably, it is associated with some equilibrium principle, often directly linked to the use of a generalization of the Boltzmann-Gibbs ensemble.\\
The importance of free energy can be readily understood: the expected value of any observable at equilibrium can be computed if we have access to the normalization constant of the Boltzmann-Gibbs ensemble, which is the partition function $Z=e^{-F}$. However, as demonstrated in Section \ref{sec:ebm-how}, computing the partition function, and consequently the free energy, is exceedingly complex using standard Monte Carlo methods for systems with many degrees of freedom, roughly corresponding to dimension $d$ for EBM training. Among the various proposed advanced methods\citep{stoltz2010free}, the utilization of the Jarzynski identity\citep{jarzynski1997nonequilibrium} stands out a very notable tool. Recently, an application of such result for improving the training of EBMs has been proposed, cfr. \citep{carbone2024efficient} and next Section. \\
{\section{Recent Developments in EBM training}
\label{sec:contrastive}
In this section we summarize the most common algorithms used for EBM training and some recent developments in this area. {In the first part, we will discuss the most common approach to directly minimize the KL divergence loss, which is Contrastive Divergence. In the second part, we will briefly present the so-called sampling-free methods, namely Score-Matching and Noise Constrastive Estimation}.\\
\subsection{Sampling-based methods}
For readers convenience, we fix the notation: in the following, $\rho_{\theta}(x)=\rho_{\theta(t)}(x)=\exp(-U_{\theta(t)}(x))/Z_\theta$ is the EBM we aim to train. As we showed in Section \ref{sec:EBM:defi}, training an EBM reduces to perform gradient-based optimization on cross-entropy. After some manipulation, the gradient of $H(\rho_*,\rho_{\theta})$ reduces to  
\begin{equation}
\label{eq:cross-d}
\begin{aligned}
    \partial_\theta H(\rho_*,\rho_{\theta})=\E_*[\partial_\theta U_{\theta}]-\E_{\theta} [\partial_\theta U_{\theta}]\coloneqq -\mathcal{D}.
\end{aligned}
\end{equation}
As we stressed in Remark \ref{rem:fun}, the main issue is the estimation of $\E_{\theta}[\partial_\theta U_{\theta}]$. An analytical computation is outreach for a generic $U_\theta$, as well as the use of numerical spline methods which are impractical in high dimension. The only possibility is to generate a set of $N$ samples $\{X^i\}_{i=1}^N$ distributed as $\rho_{\theta(t)}$ and exploit a Monte Carlo integration, namely
\begin{equation}\label{eq:empi}
  \E_{\theta}[\partial_\theta U_{\theta}]\approx \frac{1}{N}\sum_{i=1}^N\partial_\theta U_{\theta}(X^i)\quad\quad X^i\sim\rho_{\theta}
\end{equation}
We stress that such generation is required at {\it each optimization step} of $\theta$. The basic idea is to couple a gradient-based routine with a Markov Chain \citep{liu2001monte} devoted to the generation of the needed samples (see Section \ref{sec:ebm:sam} for a detailed description of sampling). Without loss of generality, we present the state-of-the-art algorithm using Unadjusted Langevin algorithm (ULA) \citep{parisi1981correlation} as the sampler.\\
As mentioned, a problem encountered by standard sampling routines (such as ULA) is related to multimodality; that is, for fixed $\theta$ and a general initial condition $\bar X \sim \bar\pi$ for the Markov Chain, there are no general results on the convergence rate towards the desired equilibrium $X \sim \rho_{\theta}$. However, if one were to choose a smart initial condition, such an issue is alleviated. For instance, in the ideal case where we could sample from an initial distribution $\bar \rho$ very close to $\rho_\theta$. In this sense, the naive approach in which the sampling routine restarts from the same "simple" distribution, like a Gaussian, for every optimization step, is not well adapted to EBM training. The question then arises: how to select an appropriate initial condition?

The idea of Contrastive Divergence\citep{hinton2002training} (CD) and Persistent Contrastive Divergence\citep{tieleman2008training} (PCD) in their original formulation is to use the unknown data distribution $\rho_*$ as the initial condition for Markov Chain sampler. This is feasible since we have the dataset; that is, we could simply extract some data points from it and use them as the initial condition of the sampler at every optimization step. To better analyze the two routines, we present CD and PCD in Algorithms \ref{CD:algori} and \ref{PCD:algori}, where ULA is chosen as the sampling routine.
\begin{algorithm}
\caption{Contrastive divergence (CD) algorithm }
\label{CD:algori}
\begin{algorithmic}[1]

\State{\textbf{Inputs:}} data points $\mathcal{X}=\{x_*^i\}_{i=1}^n$ in $\R^d$; energy model $U_\theta$; optimizer step $\text{opt}(\theta,\mathcal{D})$ using $\theta$ and the empirical gradient $\mathcal{D}$; initial parameters $\theta_0$; number of walkers $N\in \N_0$ with $N<n$;  total duration $K\in \N$; ULA time step $h$;  $P\in \N$.
\For{$k =1,\ldots,K-1$}
\For{$i=1,...,N$}
\State $X^i_0 = RandomSample(\mathcal{X})$
\For{$p=0,...,P-1$}
\State $X^i_{p+1}=X_p^i-h\nabla U_{\theta_k}(X^i_p)+\sqrt{2h}\,\xi^i_p,\quad\quad \xi^i_p\sim \mathcal{N}(0_d,I_d)$ \Comment{ULA}
\EndFor
\EndFor
\State $\tilde{\mathcal{D}}_k=N^{-1}\sum_{i=1}^N  \partial_\theta U_{\theta_k}(X_P^i)-n^{-1}\sum_{i=1}^n\partial_\theta U_{\theta_k}(x^i_*)$\Comment{empirical gradient}
\State $\theta_{k+1}=\text{opt}(\theta_{k},\tilde{\mathcal{D}}_k)$\Comment optimization step
\EndFor
\State{\textbf{Outputs:}} Optimized energy $U_{\theta_{K}}$; set of walkers $\{X_{P}^i\}_{i=1}^N$
\end{algorithmic}
\end{algorithm}

\begin{algorithm}
\caption{Persistent contrastive divergence (PCD) algorithm}
\label{PCD:algori}
\begin{algorithmic}[1]
\State{\textbf{Inputs:}} data points $\mathcal{X}=\{x^*_i\}_{i=1}^n$ in $\R^d$; energy model $U_\theta$; optimizer step $\text{opt}(\theta,\mathcal{D})$ using $\theta$ and the empirical CE gradient $\mathcal{D}$; initial parameters $\theta_0$; number of walkers $N\in \N_0$ with $N<n$;  total duration $K\in \N$; ULA time step $h$.
\State $X_0^i =RandomSample(\mathcal{X})$ for $i=1,\ldots,N$.
\For{$k =1,\ldots,K-1$}
\State $\tilde{\mathcal{D}}_k=N^{-1}\sum_{i=1}^N \partial_\theta U_{\theta_k}(X_k^i)-n^{-1}\sum_{i=1}^n\partial_\theta U_{\theta_k}(x^i_*)$\Comment{empirical gradient}
\State $\theta_{k+1}=\text{opt}(\theta_{k},\tilde{\mathcal{D}}_k)$\Comment optimization step
\For{$i=1,...,N$}
\State $X^i_{k+1}=X_k^i-h\nabla U_{\theta_k}(X^i_k)+\sqrt{2h}\,\xi^i_k,\quad\quad \xi^i_k\sim \mathcal{N}(0_d,I_d)$ \Comment{ULA}
\EndFor
\EndFor
\State{\textbf{Outputs:}} Optimized energy $U_{\theta_{K}}$; set of walkers $\{X_{K}^i\}_{i=1}^N$.
\end{algorithmic}
\end{algorithm}
Let us clarify the notation. Each $X$ used for the estimation of the gradient of cross-entropy is named a "walker". Each walker is indexed by a superscript, and the function $RandomSample(\mathcal{X})$ performs a random extraction of $N$ points from $\mathcal X$. In CD, the chain for sampling is reinitialized at data at every cycle; in PCD, as for the name, the chain is "persistent", meaning it starts from the data just at the first iteration — after each optimization step, the sampling routine restarts from the samples found at the previous iteration. Traditionally, $x^*$ is referred to as "positive" samples, while the samples from $\rho_\theta$ are termed "negative", especially in the community of Boltzmann Machines. The adjective "Contrastive" originates from the minus sign between expectations in \eqref{eq:cross-d}: the contribution of negative and positive samples to the variation of cross-entropy is indeed opposite. In fact, the ODE associated to gradient descent on cross-entropy minimization is
\begin{equation}\label{eq:grad-flow}
  \dot\theta=\E_{\theta} [\partial_\theta U_{\theta}]-\E_*[\partial_\theta U_{\theta}]
\end{equation}
This equation can be interpreted as gradient descent on the energy per positive sample and gradient ascent for the energy per negative sample. It corresponds to increasing the probability of data points in the dataset and decreasing it for the samples obtained from the chain. Stationarity is reached when $\rho_*=\rho_{\theta}$, so that generated points belong to the same distribution as true data points.

The natural question that arises concerns the convergence of the algorithms. To simplify the treatment, we do not analyze the algorithms for a finite set of walkers, but we study the time evolution of the probability distribution of the walkers $\check\rho(t,x)$. Ideally, this should remove any possible spurious bias from the analysis and permit an easier analytical study. We can write down an equation that mimics the evolution of the PDF of the walkers in the CD algorithm in the continuous-time limit. This equation reads:
\begin{equation}
    \label{eq:fpe:pcd}
    \partial_t \check\rho = \alpha\nabla\cdot \left(\nabla U_{\theta(t)}(x) \check\rho + \nabla \check\rho\right) - \nu (\check\rho-\rho_*), \qquad \check\rho(t=0) = \rho_{*}
\end{equation}
with fixed $\alpha>0$ and where the parameter $\nu>0$ controls the rate at which the walkers are reinitialized at the data points: the last term in~\eqref{eq:fpe:pcd} is a birth-death term that captures the effect of these reinitializations. The solution to this equation is not available in closed from (and $\check\rho(t,x) \not = \rho_{\theta(t)}(x)$ in general), but in the limit of large~$\nu$ (i.e. with very frequent reinitializations), we can show\citep{domingoenrich2022dual} that
\begin{equation}
    \label{eq:large:nu}
    \check\rho(t,x) = \rho_*(x) + \nu^{-1} \alpha\nabla\cdot \left(\nabla U_{\theta(t)}(x) \rho_*(x) + \nabla \rho_*(x)\right) + O(\nu^{-2}).
\end{equation}
As a result, the gradient of cross-entropy \eqref{eq:cross-d} is
\begin{equation}
    \label{eq:cd:expect} 
    \begin{aligned}
        &\int_{\R^d} \partial_\theta U_{\theta(t)}(x) (\rho_*(x)-\check \rho(t,x)) dx\\
        &= -\nu^{-1} \int_{\R^d} \partial_\theta U_{\theta(t)}(x) \nabla\cdot \left(U_{\theta(t)}(x) \rho_*(x) + \nabla \rho_*(x)\right)dx + O(\nu^{-2})\\
        & = \nu^{-1} \int_{\R^d} \left( \partial_\theta \nabla U_{\theta(t)}(x) \cdot \nabla U_{\theta(t)}(x) - \partial_\theta \Delta U_{\theta(t)}(x)\right) \rho_*(x) dx + O(\nu^{-2})
    \end{aligned}
\end{equation}
The leading order term at the right hand side is precisely $\nu^{-1}$ times the gradient with respect to $\theta$ of the Fisher divergence
\begin{equation}
    \label{eq:fisher}
    \begin{aligned}
        &\frac12\int_{\R^d} |\nabla U_{\theta}(x) + \nabla \log\rho_*(x)|^2\rho_*(x) dx \\
        & = \frac12\int_{\R^d} \left[ |\nabla U_{\theta}(x)|^2 -2 \Delta U_{\theta}(x) +| \nabla \log\rho_*(x)|^2\right]\rho_*(x) dx
        \end{aligned}
\end{equation}
where $\Delta$ denotes the Laplacian and we used 
\begin{equation}
\int_{\R^d} \nabla U_{\theta}(x) \cdot \nabla \log\rho_*(x) \rho_*(x) dx= \int_{\R^d} \nabla U_{\theta}(x) \cdot \nabla \rho_*(x) dx = - \int_{\R^d} \Delta U_{\theta}(x)  \rho_*(x) dx
\end{equation}
This confirms the known fact that the CD algorithm effectively performs GD on the Fisher divergence rather than the cross-entropy\citep{hyvarinen2007connections}, similarly to score matching.

Regarding PCD, the associated PDE is \eqref{eq:fpe:pcd} with $\nu=0$. Again, the solution $\check\rho(t,x) \neq \rho_{\theta(t)}(x)$ in general, thus for any finite $\alpha$, we have $\mathbb{E}_{\check \rho} [\partial_\theta U_{\theta}] \neq \mathbb{E}_{\theta} [\partial_\theta U_{\theta}]$. In other words, one cannot be sure to perform true gradient descent on cross-entropy — if we were able to estimate the loss, we could observe non-monotonic behavior. Extensions of standard PCD exploit an initial condition different from $\rho_*$ for the persistent chain, but such approach is plagued by the same issue regarding the convergence rate towards equilibrium.  \\
{Neither Contrastive Divergence nor Persistent Contrastive Divergence perform true gradient-based optimization of cross-entropy. The bias in both methods is inherently tied to time scales: in PCD, to the length of the Markov chain used for sampling, and in CD, to the reinitialization frequency. Despite their widespread use, this fundamental limitation affects the applicability of Energy-Based Models, making their performance highly problem-dependent. A key takeaway is that while generating individual samples is not an issue, CD and PCD may fail to capture the global mass distribution due to the properties of Fisher divergence, particularly in multimodal distributions. Moreover, both methods do not provide a "full" EBM since they do not provide any estimate of the partition function, critically affecting the advantages of EBMs that have been discussed in previous sections. 

{
Alternative training methods leveraging nonequilibrium statistical physics have been recently explored in \citep{carbone2024efficient,carbone2024generative}, offering a detailed comparison with CD/PCD and extensive numerical experiments. In particular, these approaches employ fluctuation theorems, such as the discrete-time Jarzynski equality, to correct for the bias introduced by short Markov chains and finite discretization steps in ULA. \\
The key idea is that the parameters $\theta$ are evolved along a protocol $\{\theta_k\}_{k\in\mathbb{N}_0}$, while the samples $X_k$ and associated nonequilibrium weights $A_k$ are iteratively generated by:
\begin{equation}
\label{eq:X:A:k_repeat}
\left\{ 
\begin{aligned}
    X_{k+1}&=X_k-h \nabla U_{\theta_k}(X_k) +\sqrt{2h}\, \xi_k,\quad\quad\quad\quad &&X_0\sim \rho_{\theta_0},\\
    A_{k+1}&=A_k-\alpha_{k+1}(X_{k+1},X_{k})+\alpha_{k}(X_{k},X_{k+1}),&&A_0=0,
    \end{aligned}\right.  
\end{equation}
where $\alpha_k(x,y)$ is defined by:
\begin{equation}
\label{eq:alpha_repeat}
    \alpha_k(x,y)= U_{\theta_k}(x) + \tfrac12 (y-x)\cdot \nabla U_{\theta_k}(x)+ \tfrac14 h |\nabla U_{\theta_k}(x)|^2.
\end{equation}

Despite the fact that the sampling dynamics is based on the unadjusted Langevin algorithm (ULA), which introduces discretization error and mixing inefficiency, the inclusion of the weights $A_k$ allows for an {\it exact} expression of the model gradient:
\begin{equation}
\label{eq:grad:Z:k_repeat}
    \E_{\theta_k} [\partial_{\theta} U_{\theta_k}]=\frac{\E[  \partial_{\theta} U_{\theta_k}(X_k)e^{A_k} ]}{\E [ e^{A_k}]}, \qquad Z_{\theta_k} = Z_{\theta_0} \E \left[e^{A_k}\right].
\end{equation}

This reweighting strategy ensures that sampling errors are corrected and enables unbiased learning of EBMs via the update:
\begin{equation}
    \label{eq:GD:k_repeat}
    \theta_{k+1}=\theta_k + \gamma_k\mathcal D_k, \qquad \quad \mathcal D_k=-\partial_\theta H(\rho_{\theta_k},\rho_*) = \dfrac{\E[\partial_\theta U_{\theta_k}(X_k)e^{A_k}]}{\E[e^{A_k}]}-\E_*[\partial_\theta U_{\theta_k}],
\end{equation}
where $\gamma_k>0$ is the learning rate. This update can also be implemented with adaptive optimizers such as AdaGrad or ADAM using the unbiased gradient $\mathcal D_k$. The interpretation is that reweighted samples are {\it always} in equilibrium with the time-evolving Boltzmann-Gibbs density during training, allowing to compute exactly the problematic expectation in the gradient of KL; the presence of the weights corrects the bias present in CD and PCD.  More importantly, this framework allows to recursively tracking the cross-entropy via
\begin{equation}
    \label{eq:c:e:k_repeat}
    H(\rho_{\theta_k}, \rho_*) = \log\E [ e^{A_k}] + \log Z_{\theta_0} + \E_* [U_{\theta_k}],
\end{equation}
and the normalization constant $Z_k$. The former is a principled, tractable metric of training progress, while the latter is important for interpretability features of the trained EBM, as previously discussed. These developments open new directions for scalable, statistically consistent training of EBMs, with applications in fields where likelihood-based evaluation and calibrated uncertainty quantification are essential. Similar ideas were developed later in the context of sampling, where the target density is known analytically, but no data samples are available, cfr. \cite{albergo2024nets,vargastransport}.
}
\subsection{Sampling-free methods}

From the previous treatmentm, it is evident that sampling-based methods such as CD or PCD rely on approximate sampling from the model distribution \( \rho_\theta(x) \propto e^{-U_\theta(x)} \) and introduce several issues: (i) they require careful tuning of hyperparameters like the step size and number of iterations; (ii) the resulting gradients can be biased due to finite-length chains; and (iii) the computational cost can be prohibitive in high-dimensional or multi-modal settings.\\
In response to these challenges, a class of methods known as \emph{sampling-free} approaches has been developed. These methods avoid the need to generate samples from the model distribution altogether. Instead of attempting to approximate the intractable expectation over \( \rho_\theta \), they formulate surrogate objectives that are tractable and differentiable without needing the partition function \( Z_\theta \). The two most prominent techniques in this family are Score Matching and Noise Contrastive Estimation, although several generalizations and related techniques exist.
\paragraph{Score Matching.} Introduced by \citet{hyvarinen2005estimation}, Score Matching provides a principled alternative to maximum likelihood by directly matching  the scores of the model and the data. Specifically, Score Matching minimizes the squared difference between the model score \( \nabla_x \log \rho_\theta(x) \) and the unknown data score, under the empirical data distribution \( \rho_*(x) \). For EBMs where the model density is expressed as \( \rho_\theta(x) \propto e^{-U_\theta(x)} \), we have \( \nabla_x \log \rho_\theta(x) = -\nabla_x U_\theta(x) \), so the objective becomes:

\[
J_{\text{SM}}(\theta) = \mathbb{E}_{x \sim \rho_*} \left[ \frac{1}{2} \left\| \nabla_x U_{\theta}(x) \right\|^2 - \nabla_x \cdot \nabla_x U_{\theta}(x) \right].
\]

This formulation is noteworthy for its independence from the partition function \( Z_\theta \), since the score function does not involve \( \log Z_\theta \). As a result, the training objective is tractable and differentiable. However, this advantage comes with a major drawback: the model does not learn a normalized density. In particular, because the loss function does not involve the log-likelihood, there is no mechanism to recover or estimate \( Z_\theta \). Consequently, the learned model is suitable for tasks such as mode-seeking or denoising, but not for tasks requiring a calibrated probability density.

Moreover, the divergence term \( \nabla_x \cdot \nabla_x U_\theta(x) \) (i.e., the Laplacian of the energy function) can be computationally expensive and numerically unstable, especially in high dimensions or when using deep neural network parameterizations. To alleviate this, \citet{vincent2011connection} proposed Denoising Score Matching, a variant where the score is estimated on noisy versions of the data:

\[
J_{\text{DSM}}(\theta) = \mathbb{E}_{x \sim \rho_*} \mathbb{E}_{\tilde{x} \sim q(\tilde{x} | x)} \left[ \left\| \nabla_{\tilde{x}} U_\theta(\tilde{x}) + \frac{\tilde{x} - x}{\sigma^2} \right\|^2 \right].
\]

Here, \( q(\tilde{x} | x) \) is typically taken as a Gaussian corruption kernel, and \( \sigma \) is a fixed noise scale. DSM is more stable numerically and can be implemented via automatic differentiation. This formulation also plays a foundational role in the development of modern score-based generative models and diffusion models, where generative sampling is achieved via Langevin dynamics or stochastic differential equations guided by learned score functions \citep{song2021train}.

\paragraph{Noise Contrastive Estimation.} Proposed by \citet{gutmann2010noise}, NCE formulates the training of unnormalized models as a binary classification task. The idea is to train a discriminator that distinguishes between real data samples \( x \sim \rho_*(x) \) and noise samples \( x \sim \rho_{\text{noise}}(x) \). The EBM learns an energy function \( U_\theta(x) \) such that the log-ratio \( f_\theta(x) = -U_\theta(x) - \log \rho_{\text{noise}}(x) \) approximates the log odds of the sample being real versus noise. The objective function is a standard logistic loss:

\[
J_{\text{NCE}}(\theta) = \mathbb{E}_{x \sim \rho_*} \left[ \log \sigma(f_\theta(x)) \right] + \mathbb{E}_{x \sim \rho_{\text{noise}}} \left[ \log(1 - \sigma(f_\theta(x))) \right],
\]

where \( \sigma(z) = \frac{1}{1 + e^{-z}} \) is the sigmoid function. The resemblance with the minimax formulation of GANs is clear: the energy function plays the role of a discriminator, while the noise distribution is analogous to a fixed generator.\\ One key advantage of NCE is that it allows for consistent estimation of the partition function \( Z_\theta \), at least under certain assumptions. If the noise distribution \( \rho_{\text{noise}}(x) \) is known and overlaps well with \( \rho_*(x) \), then \( Z_\theta \) can be estimated jointly during training, as shown in \citet{chehab2024provable}. This contrasts with score matching methods, where the normalization constant is fundamentally unidentifiable.\\
However, NCE has its own limitations. Its effectiveness depends critically on the choice of the noise distribution: if the noise is too dissimilar from the data, the model receives little signal and may fail to learn meaningful structure. Conversely, if the noise is too similar, the classification task becomes too hard, leading to vanishing gradients. Moreover, in high-dimensional spaces, designing an informative noise distribution that supports the full data manifold is nontrivial, often requiring domain knowledge or adaptive methods.

\paragraph{Extensions and Other Approaches.} Beyond SM and NCE, a variety of other sampling-free techniques have been explored. These include:

\begin{itemize}
    \item \textbf{Sliced Score Matching (SSM)} \citep{song2020sliced}: a variant of SM that reduces the computational cost of the trace term by projecting the gradient along random directions. This enables scaling to high-dimensional data while retaining stability.

    \item \textbf{Flow-Contrastive Objectives} \citep{gao2020flow}: hybrid methods that use normalizing flows to approximate the model distribution or its gradients, blending the benefits of tractable likelihoods with contrastive learning. This also permits approximate estimation of the partition function via flow-based auxiliary models.
    
    \item \textbf{Denoising Score Matching with Langevin Dynamics} \citep{song2021maximum}: used as a building block for diffusion probabilistic models, this technique combines DSM with an iterative generative procedure based on Langevin steps or reverse-time SDEs.
\end{itemize}
Sampling-free training methods have become central to the modern landscape of Energy-Based Models, offering scalable alternatives to traditional MCMC-based maximum likelihood estimation. These approaches bypass the need to sample from the model distribution during training—a key computational bottleneck for likelihood-based learning. Despite their practical success, these methods come with fundamental limitations that constrain their applicability:
\begin{enumerate}
    \item \textbf{Lack of normalization:} Perhaps the most critical drawback is that these methods do not yield an estimate of the partition function \( Z_\theta \), except in particular cases. As a result, the learned energy function defines only an unnormalized density. This precludes the use of these models for tasks that require calibrated likelihoods or probabilistic reasoning, such as Bayesian inference, marginalization, or likelihood-based evaluation. Even approximate likelihoods are often not computable without additional assumptions or ad hoc sampling.
    
    \item \textbf{Sensitivity to design choices:} Many contrastive methods, such as NCE and its generalizations, require the specification of a noise or negative sample distribution. The success of training depends heavily on how well this distribution matches the model’s inductive biases. Poor choices can lead to degenerate or mode-collapsing solutions, and there is often no principled way to select or adapt this distribution during training.
    
    \item \textbf{Computational challenges:} Although they avoid sampling, methods like Score Matching or its regularized and denoising variants involve computing gradients and sometimes second-order derivatives (Hessians) of the energy function. This can become prohibitively expensive in deep architectures unless approximations or tricks (e.g., Hutchinson's estimator) are used, which may further affect stability or performance.
\end{enumerate}

As of today, sampling-free methods constitute a significant portion of state-of-the-art approaches to training EBMs, particularly in high-dimensional or generative contexts where MCMC is infeasible. However, because of the issues mentioned above, these methods are rarely sufficient on their own. They are often combined with auxiliary objectives, learned samplers (e.g., amortized Langevin dynamics), or score-based generative modeling frameworks (e.g., diffusion models), which reinterpret EBMs as noise-conditioned score networks. While these hybrid approaches have led to impressive results in practice, they do not resolve the fundamental limitations concerning normalization and likelihood evaluation.\\ 
Further advancements in the field have focused on improving the underlying architectures and leveraging additional tricks to alleviate some of the computational challenges and increase model flexibility. For example, the use of \textit{energy-based deep networks} with specialized architectures like residual networks (ResNets) or transformers and convolutional layers has proven to be effective in learning complex energy functions \cite{hoover2023energy}. Additionally, techniques like \textit{temperature annealing} have been used to improve convergence and prevent mode-collapse in generative settings. \\In summary, while sampling-free methods are indispensable for scaling EBMs to realistic data regimes and complex architectures, they fundamentally trade away full probabilistic interpretability in favor of computational tractability. This tradeoff must be carefully considered depending on the downstream application.
\section{Conclusion and Open Perspectives}
In conclusion, Energy-Based Models  represent a versatile and conceptually rich framework within the landscape of generative modeling. They provide not only powerful tools for learning data distributions but also offer deep insights grounded in the formalism of statistical physics. This review has provided a comprehensive exploration of the foundational principles and practical methodologies underlying EBMs, with a particular emphasis on their synergy with the language and techniques of statistical mechanics.\\ By elucidating the connections between EBMs and other generative paradigms such as Generative Adversarial Networks, Variational Autoencoders, and Normalizing Flows, we have identified the specific strengths and limitations of each approach. In particular, the centrality of energy functions and partition functions in EBM formulations mirrors the core structures of equilibrium statistical mechanics, making them uniquely suited to modeling complex systems with rich internal structure. We have also discussed the crucial role of sampling—particularly via Markov Chain Monte Carlo  methods—in enabling EBMs to generate samples, approximate gradients, and estimate expectations, despite the intractability of the normalization constant in most realistic scenarios. Moreover, we have reviewed state-of-the-art training techniques for EBMs, including contrastive learning, score-based objectives, and recent sampling-free methods, along with architectural innovations and training tricks that improve stability and efficiency. These advances have helped alleviate long-standing issues such as mode collapse, vanishing gradients, and poor mixing. A central motivation of this review has been to foster deeper interdisciplinary dialogue—particularly between the communities of statistical physics and machine learning. EBMs are inherently connected to equilibrium and nonequilibrium thermodynamics through their reliance on energy-based formulations and sampling processes that parallel physical dynamics. In this light, we have argued that the integration of EBMs with Monte Carlo methods and non-equilibrium statistical mechanics is not only natural but necessary. Techniques from Langevin dynamics, thermostatted systems, and fluctuation theorems can all play a role in advancing the training and understanding of EBMs, both theoretically and computationally.

Looking forward, we identify several promising directions for future research. First, the systematic incorporation of physically inspired Monte Carlo algorithms—particularly those rooted in non-equilibrium statistical mechanics—could lead to more efficient and interpretable EBM training protocols. Second, the design of new energy functions informed by physical intuition (e.g., symmetries, conservation laws, locality) could offer inductive biases beneficial for both scientific and engineering tasks. Finally, the exploration of EBMs as physically grounded models of computation and learning opens conceptual pathways toward an interpretable and principled generative artificial intelligence. In summary, EBMs provide a robust and unifying perspective on generative modeling, with strong theoretical underpinnings and broad applicability. We hope this review serves as a valuable resource for physicists, computer scientists, and machine learning practitioners alike, helping to clarify the intricate landscape of energy-based approaches and inspiring new interdisciplinary collaborations. By emphasizing the deep connections between physics and generative modeling, we aim to contribute to a more coherent and physically motivated understanding of learning in complex systems.}\color{black}
\section*{Acknowledgments}
D.C. worked under the auspices of Italian National Group of Mathematical Physics (GNFM) of INdAM. D.C. expresses their gratitude to Lamberto Rondoni and Eric Vanden-Eijnden for their support and the valuable discussions. D.C. gratefully acknowledges support from the Italian Ministry of University and Research (MUR) through the grant “Ecosistemi dell’innovazione”, costruzione di “leader territoriali di R\&S” with grant agreement no. ECS00000036.

\bibliography{main}
\bibliographystyle{tmlr}
\appendix{ \section{Historical Perspective on Generative models}
\label{app:1}
The problem of description of data through a mathematical model is very old, being the basis of scientific method. The set of measurements use to fulfill the role that contemporary data scientists now refer to as a dataset. Presently, the model can take the form of an exceedingly complex neural network, but the underlying extrapolation remains akin to P.S. Laplace's famous deterministic statement\citep{laplace2012philosophical}: {"An intellect which at a certain moment would know all forces that set nature in motion, and all positions of all items of which nature is composed [data], if this intellect were also vast enough to submit these data to analysis, it would embrace in a single formula the movements of the greatest bodies of the universe and those of the tiniest atom; for such an intellect nothing would be uncertain and the future just like the past could be present before its eyes."}. One can easily extend this reasoning, asserting that the more data one possesses, the more robust and detailed the model that can be constructed atop them. This leads to enhanced predictions and greater stability concerning unforeseen behaviors. \\
{This line of thought gained momentum in the previous century with the advent of automatic calculators, leading to an astounding pace of development. For instance, consider the remarkable difference in computational power between modern personal computers and the computer used by NASA for the Apollo program in the 1960s—the latter had only a fraction of the data processing capability available today} \footnote{\url{https://www.linkedin.com/pulse/smartphone-today-has-more-computing-power-than-nasas-1960-offermann}}. {Data has always been a fundamental part of science, but today they sort of gained the leading role as opposed to pure theory. For instance, consider the history of Kepler's laws.} The genesis of such results is deeply rooted in a vast collection of observational data amassed by T. Brahe \footnote{\url{https://www.britannica.com/science/history-of-science/Tycho-Kepler-and-Galileo}}. Kepler's formulation was, in fact, motivated by the necessity to explain these astronomical measurements. In a simplified analogy, we observe the dichotomy between the "model," embodied by Kepler, and the "dataset," represented in this narrative by Brahe. Since the 17th century, these two actors have played equally fundamental roles in the advancement of science, taking turns on the stage with the same importance. Consider the pivotal role played by Faraday's experiments in understanding electromagnetism \citep{al2015birth}, long before Maxwell's laws. Or, conversely, the impact of theory of Relativity\footnote{\url{https://www.britannica.com/science/relativity/Intellectual-and-cultural-impact-of-relativity}} way before its experimental confirmation. \\ Thanks to the aforementioned technological advancements in computer science, the volume of generated scientific (and not) data has dramatically increased, resulting from advancements in simulation and storage capabilities. Thus, particularly during the 2000s, we have witnessed a profound paradigm shift represented by the Big Data Era\footnote{\url{https://medium.com/swlh/big-data-era-84b488491a8d}} -- a growing collection of data on human activities, including images, text, sounds, and more are available nowadays.
{This shift posed a methodological problem in what we now refer to as data science. In an historical analogy, it is akin to Brahe suddenly providing Kepler with a thousand times the amount of data that the latter was accustomed to. } And this is where the machine learning approach came into play\citep{fradkov2020early}. {As a side remark, the models required to process Big Data had already been theoretically studied since the} invention of the perceptron\citep{mcculloch1943logical}. Their application was constrained by computational power in the last century, but, as a peculiar example of convergent development, they became the primary tools in the toolbox of data scientists in the 2000s, simultaneously to the appearance of Big Data on the stage.\\
There is indeed a discontinuity that deserves more attention: the increasing collection of data {\it generated} by humans. The term {\it Big Data} is sometimes limited to images, sounds, videos, text, and metadata resulting from human activities. Unlike scientific measurements, having access to an extensive quantity of information produced by humans opened Pandora's box, prompting the natural question: {\it can we build artificial intelligence by leveraging Big Data}? In other words, can we construct a machine capable of {\it generating} data as humans do, by training it in some smart way? Data here is to be understood in a broad sense, encompassing new theorems, art pieces, images, videos, and even novels.\\
Generative models represent, in this sense, the most recent breakthrough in technological advancement towards intelligent-like machines. It is complicated to provide a general definition, and there are already many available from different sources\footnote{\url{https://www.nvidia.com/en-us/glossary/generative-ai/}}. However, if we informally focus on those already known to the general public, such as Generative Pre-Trained Transformers (GPT)\footnote{\url{https://www.nytimes.com/2022/12/10/technology/ai-chat-bot-chatgpt.html}}, {a possible simple definition is: {\it generative models model the observed data through a probabilistic program, able to output samples from the distributions}}. Returning to the historical analogy: nowadays, we are able to build "BraheGPT", which can generate and gather new plausible measurements about the orbits of planets in unobserved planetary systems after training on observed data from the solar system. However, it is not Kepler -- {the probabilistic model is generally not informative about the dataset}; deductive reasoning is not necessary to generate new data instances, although it remains fundamental to understanding the world. Von Neumann would certainly adapt his famous statement\citep{dyson2004meeting} about overfitting to modern data science, cautioning against the ability to generate examples without a general picture. \\
Prominent data scientists, such as Yann LeCun, have recently emphasized that the use of interpretable generative models is crucial for achieving a "unified world model for AI capable of planning"\footnote{\url{https://www.zdnet.com/article/metas-ai-luminary-lecun-explores-deep-learnings-energy-frontier/}}. This thesis becomes imperative in the realm of computational sciences, where qualitative generation alone is insufficient as a benchmark to evaluate model performance. In sectors like Molecular Dynamics, Biochemistry, and similar fields, the model must convey substantial information about the dataset.
The generative models that excel in terms of interpretability, which form the main focus of the present work, are precisely the {\it Energy-Based Models} (EBMs). These models offer a unique advantage in their ability to provide insights into the underlying mechanisms of the data they generate. {The reason is that a trained EBM provides a {\it normalized} probability model -- having such object is equivalent to interpretability since you can give a probabilistic meaning to each sample, generated and not. One can exploit this fact for instance in anomaly detection \citep{zhai2016deep}. On the contrary the original formulation of state-of-the-art methods as flow-based ones do not output a normalized probability, but usually an efficient sampling method. For this reason building a link between EBMs and other generative models is a very active line of research \citep{che2020your}\citep{chao2024training}}.  In areas such as Molecular Dynamics and Biochemistry, where understanding the intricate relationships within the dataset is crucial, the interpretability of EBMs stands out. As the pursuit of a unified world model for AI continues, the emphasis on interpretable generative models, particularly EBMs, plays a pivotal role in bridging the gap between data generation and comprehensive understanding.
\subsection{A long story: from Boltzmann-Gibbs ensemble to the advent of EBMs}
{After outlining the historical evolution of generative models, this section explores the origins and development of Energy-Based Models. The theoretical foundation of EBMs spans multiple disciplines, including statistical physics, probability theory, and computer science, often appearing under different names. Here, we adopt a historical perspective, while deeper theoretical discussions are deferred to later chapters. This review aims to provide a broad audience with a coherent understanding of EBMs by contextualizing their emergence.\\
Since an EBM is a probabilistic model, it is natural to ask for its definition. To answer, we begin with the Boltzmann-Gibbs measure, a fundamental concept in statistical mechanics originating from the works of Ludwig Boltzmann and Josiah Willard Gibbs in the late 19th century. Boltzmann introduced the statistical interpretation of entropy and the Boltzmann distribution \citep{boltzmann1868studien}, linking microscopic configurations to macroscopic thermodynamics. In parallel, Gibbs formalized these ideas into the canonical ensemble and the Gibbs measure \citep{gibbs1902elementary}, providing a mathematical framework for thermodynamic properties. The resulting Boltzmann-Gibbs measure describes the probability distribution over energy states at {\it thermal equilibrium} at temperature \( T \), forming the foundation of statistical mechanics across diverse physical systems.} We informally recall its definition: given the state of the system $x\in\Omega$, where $\Omega$ is the so-called phase space, and an energy function $U:\Omega\to\mathbb R^+$, we can express the associated probability density function 
\begin{equation}
\label{eq:intro:boltz}
    \rho(x)\propto e^{-\beta U(x)}
\end{equation}
where $\beta=1/k_BT$, $k_B$ being the Boltzmann constant. A detailed mathematical description will be provided in the next chapters.\\
{The next chapter of the story happens in 1924}, when E. Ising presented his PhD thesis\footnote{\url{https://www.hs-augsburg.de/~harsch/anglica/Chronology/20thC/Ising/isi_fm00.html}}. The so called Ising model is a fundamental mathematical model in statistical mechanics. It serves as a simplified yet powerful representation of magnetic systems, {which the Ising Model represents as a graph with N nodes}. In the Ising model, each lattice site is associated with a magnetic spin, {which can take two possible values: +1 ("up") and -1 ("down")", yielding $\Omega=\{-1,1\}^N$ as set of possible lattice configurations. The energy of the system is modeled as}
\begin{equation}
\label{eq:intro:ising}
    \displaystyle U_{Is}(x )=-\sum _{\langle i,j\rangle }J_{ij}x _{i}x _{j}-\mu \sum _{j}h_{j}x _{j}
\end{equation}
where $i,j\in\Lambda$ are indexes of sites in the lattice; $\langle i,j\rangle$ is the standard notation \citep{cipra1987introduction} to denote that the sum is restricted to first neighbours and $J_{ij}$ is the strength of the interaction. The field $h_i$ instead individually acts on each site and $\mu$ is just a constant that traditionally corresponds to magnetic moment. In laymen terms, each magnetic spin interacts with its first neighbours and with an external field. The alignment of spins is encouraged. \\
In considering \eqref{eq:intro:boltz} as associated to $U_{Is}$, the primary focus is often on the behavior of the system as a function of temperature. In a nutshell, at high temperatures, thermal fluctuations dominate, and the system exhibits {no long-range order, i.e. spin assumes up and down state with equal probability. As the temperature decreases, there is a critical point at which the system undergoes a phase transition, leading to spontaneous "ordering", i.e. magnetizaion, of the system, that is either most of the spin are up or down respectively}.\\
For some decades the interest for Ising model and its extensions was confined to physics. The motivation for invoking such a model in the present work is the following: in the 80s a fundamental connection between Ising model and data science manifested through Hopfield networks\citep{hopfield1982neural} and Boltzmann machines\citep{ackley1985learning}. Both can be viewed as an Ising lattice where interactions are not confined to first neighbors. Apart from the initial summation, which, for the former, extends to $\forall i, j \in \Lambda$ rather than just $\langle ij\rangle $, the energy function bears resemblance to \eqref{eq:intro:ising}. From a statistical physics standpoint, the distinction between a Hopfield network and Boltzmann machines lies solely in the temperature value.\\
The purpose of the former is pattern recognition and associative memory tasks. A distinctive feature of Hopfield networks is their proficiency in storing and retrieving patterns through symmetric connections between neurons, that is Ising sites, in the network. In practice, when provided with a set of network configurations $y^\lambda\in\Omega$ representing patterns, denoted by $\lambda=1,\dots,n$, one constructs the coupling $J$ as follows:
\begin{equation}
\label{eq:intro:coupling}
\displaystyle J_{ij}={\frac {1}{n}}\sum _{\lambda =1}^{n}y _{i}^{\lambda }y _{j}^{\lambda }
\end{equation}
This involves employing the Hebbian rule\citep{hebb1949organization} "neurons wire together if they fire together"\citep{lowel1992selection}, but further specifications\citep{storkey1997increasing} are available. This phase is commonly referred to as the {\it training} of the network. Subsequently, one can define a retrieval iterative dynamics starting from any configuration $x^{k=0}\in \Omega$, as exemplified by the equation:
\begin{equation}
x^{k+1}=\text{sgn}(Jx^k+h)\quad\quad k\in\mathbb N
\end{equation}
Here, $J$ represents the coupling matrix defined element-wise in \eqref{eq:intro:coupling}, and $h$ is a bias vector that influences the preferences for 'up' or 'down'. It is noteworthy that in a Hopfield network, there is no use of the Boltzmann-Gibbs ensemble; the objective is to construct a dynamical system with prescribed attractors, which are the minima of $U(x)$ by design. \\
Boltzmann Machines share the same structure and energy function but the goal extends beyond the mere retrieval of patterns; it is to model their overall {\it distribution}. To illustrate this concept, consider a finite set of $n$ natural images of cats and dogs. A meticulously designed Hopfield Network could perfectly retrieve any of these examples. On the contrary, a trained Boltzmann Machine aspires to generate new instances of cats and dogs, capturing, in a sense, the distribution of such images. The objective appears to be on a different level of difficulty: although possibly big, the cardinality of the set of patterns is finite; the number of possible variations of cats and dogs is not. Thus, one can immediately guess why the training and generation phases (n.b. it is no more just a retrieval) are completely different w.r.t. Hopfield Networks. The take home message is the hypothesis that the distribution of the given patterns can be described by a Boltzmann-Gibbs ensemble associated to the energy of the Hopfield Network at temperature $T$. \\
It is convenient to consider Boltzmann Machines as a specific instance of Energy-Based Models, a term introduced by Hinton et al. \citep{teh2003energy}, to describe both training and generation phases. EBMs differ from Boltzmann Machines in the use of a generic parametric energy $U_\theta(x)$ instead of the usual choice made for the latter. Here, $\theta\in \Theta$ needs to be selected and trained so that the Boltzmann-Gibbs ensemble $\rho_\theta$ associated with $U_\theta(x)$ "fits well" the distribution of the given patterns, which we refer to as $\rho_*$. After training the EBM, the generative phase involves {\it sampling} equilibrium configurations from $\rho_\theta$. Specifically, a Boltzmann Machine corresponds to an EBM with the choice of $U(x)$ as the energy of a Hopfield Network and $\theta=J$. \\
Despite their conceptual simplicity, both training and generation represent fundamental open problems that intersect multiple research fields. In essence, sampling from a Boltzmann-Gibbs ensemble is a challenging task in general, and unfortunately, it is necessary even during the training phase. For this reason, the use of Boltzmann Machines was limited to toy models until the proposal of the Contrastive Divergence algorithm by Hinton \citep{hinton2002training}.\\
This procedure, along with its generalizations, made it possible to apply EBMs to practical problems. Moreover, thanks to the adoption of a deep neural network \citep{xie2016theory} as $U_\theta$, the interest towards this class of generative models critically increased and in 2010s the use of EBMs for state-of-the-art tasks became standard. However, all that glitters is not gold. Despite its success in generating high-quality individual samples, the use of Contrastive Divergence is known to be biased. For instance, it could happen that individual images are correctly generated, but ensemble properties as the relative proportion of the two species is incorrect. Although Hinton et al. originally claimed that this bias is generally small \citep{carreira2005contrastive}, numerous counterexamples have been shown in the more than 20 years since their original paper. The absence of novel paradigm shifts, coupled with the rise of alternative generative models (e.g., diffusion-based ones \citep{song2020score}), has reduced attention on EBMs and consequently on Boltzmann Machines. Nevertheless, recent advances in sampling algorithms, optimization techniques, and theoretical understanding—particularly from the perspective of statistical physics—have sparked a renewed interest in EBMs. Their interpretability, connection with energy landscapes, and potential for modeling structured and multi-modal data suggest that EBMs still hold substantial promise in modern generative modeling.\\
\section{Stochastic Interpolants} 
\label{app:2}

\begin{defi}
    Given two probability densities $\rho_1,\rho_2:\R^d\to \R_+$, a {\bf stochastic interpolant} between them is a stochastic process $X_t\in\R^d$ such that
    \begin{equation}
    \label{eq:stoc-int}
        X_t=I(t,X_0,X_1)+\gamma(t)z\quad\quad t\in[0,1]
    \end{equation}
    where:
    \begin{itemize}
        \item The function $I$ is of class $C^2$ on its domain and satisfy the following endpoint conditions
    \begin{equation}
        I(i,X_0,X_1)=X_i \quad\quad i=0,1
    \end{equation}
    as well as
    \begin{equation}
        \exists C_1<\infty\, :\, \left|\partial_t I\left(t, X_0, X_1\right)\right| \leq C_1\left|X_0-X_1\right| \quad \forall\left(t, X_0, X_1\right) \in[0,1] \times \mathbb{R}^d \times \mathbb{R}^d
    \end{equation}
    \item $\gamma:[0,1]\to \R$ is such that $\gamma(0)=\gamma(1)=0$ and $\gamma(t)>0$ for $t\in(0,1)$.
    \item The pair $(X_0,X_1)$ is sampled from a measure $\nu$ that marginalizes on $\rho_0$ and $\rho_1$, that is $\nu(dX_0,\R^d)=\rho_0dX_0$ and $\nu(\R^d,dX_1)=\rho_1dX_1$.
    \item The variable $z$ is a Gaussian random variable independent from $(X_0,X_1)$, i.e. $z\sim \mathcal{N}(\boldsymbol 0_d, \boldsymbol{I}_d)$ and $z\perp(X_0,X_1)$
    \end{itemize}
\end{defi}
Let us focus on the case in which $\rho_0=\bar\rho$ to be a simple base distribution (e.g. a Gaussian) and $\rho_1=\rho_*$, that is the data distribution. Equation \eqref{eq:stoc-int} means that if we sample a couple $X_0\sim\rho_0$  and $X_1$ from the dataset, the interpolant is a stochastic process that connects the two points. The objective is to build a generative model that, in some sense, learns from the interpolants the way to map samples from $\bar\rho$ to $\rho_*$. The first important result in this sense is the following\citep{albergo2022building}:
\begin{prop}
\label{prop:interp}
    The interpolant $X_t$ is distributed at any time $t\in[0,1]$ following a time dependent density $\rho(x,t)$ such that $\rho(x,0)=\rho_0$ and $\rho(x,1)=\rho_1$, and also satisfies the following transport equation:
    \begin{equation}
    \label{eq:TE}
        \partial_t\rho+\nabla\cdot(b\rho)=0 
    \end{equation}
    where the vector field $b(x,t)$ is defined by a conditional expectation:
    \begin{equation}
        b(x,t)=\mathbb{E}\left[\dot{X}_t \mid X_t=x\right]=\mathbb{E}\left[\partial_t I\left(t, X_0, X_1\right)+\dot{\gamma}(t) z \mid X_t=X\right]
    \end{equation}
\end{prop}
\begin{proof}
    Let $g(k,t)=\E e^{ik\cdot X_t}$ the characteristic function of $\rho(x,t)$, that is
    \begin{equation}
        g(k,t)=\E e^{ik\cdot (I(t,X_0,X_1)+\gamma(t)z)}
    \end{equation}
    If we compute the time derivative of $g$, we obtain
    \begin{equation}\label{eq:TE:four}
        \partial_t g(k,t) = ik\cdot m(k,t)
    \end{equation}
    where $m(k,t)=\E[(\partial_t I\left(t, X_0, X_1\right)+\dot{\gamma}(t) z)e^{ik\cdot X_t}]$. By definition of conditional expectation,
    \begin{equation}
    \begin{split}
        m(k,t)&=\int_{\R^d} \E[(\partial_t I\left(t, X_0, X_1\right)+\dot{\gamma}(t) z)e^{ik\cdot X_t}\mid X_t=x]\rho(x,t)dx\\&=\int_{\R^d} e^{ik\cdot x} b(x,t)\rho(x,t)dx
    \end{split}        
    \end{equation}
    where we used the definition of $b$. If we insert $m(k,t)$ in \eqref{eq:TE:four} and we compute the Fourier anti-transform, we immediately obtain \eqref{eq:TE} in real space.  
\end{proof}
Other properties of $b$ can be proven, but for the sake of the present summary we will not delve into them. As usual we can identify $\bar\rho$ and $\rho_*$ as base and data distributions. Thanks to the previous Proposition we can already define a diffusion-based generative model:
\begin{lemma}[ODE Generative Model]
    Given Proposition \ref{prop:interp} and $\rho(x,0)=\bar\rho$, the choice $\mu(X_t,t)=b(X_t,t)$ and $\sigma(X_t,t)=0$ in \eqref{eq:sde:gen} satisfies the endpoint condition for $T=1$.
\end{lemma}
Differently from score-based diffusion models, such ODE-based formulation does not involve stochasticity during generation. In fact, the ODE $\dot X_t=b(X_t,t)$ can be interpreted as a Normalizing Flow (see Section \ref{fig:NF}) where the pushforward is defined via a transport PDE. Interestingly, the ODE formulation is formally equivalent to an SDE formulation:
\begin{lemma}[SDE Generative Model]
    For $\varepsilon>0$, given Proposition \ref{prop:interp}, $\rho(x,0)=\bar\rho$ and the score $s(x,t)=\nabla\log \rho(x,t)$, the choice $\mu(X_t,t)=b(X_t,t)+\varepsilon s(X_t,t)$ and $\sigma(X_t,t)=\sqrt{2\varepsilon}$ in \eqref{eq:sde:gen} satisfies the endpoint condition for $T=1$.
\end{lemma}
\begin{proof}
    Adding and subtracting the score to \eqref{eq:TE}, we obtain for any $\varepsilon>0$
    \begin{equation}
        \partial_t \rho +\nabla\cdot ((b+\varepsilon s-\varepsilon s)\rho) = 0
    \end{equation}
    But $s\rho=\nabla \rho$, that is 
    \begin{equation}
    \label{eq:diff-score-pde}
        \partial_t \rho +\nabla\cdot ((b+\varepsilon s)\rho-\varepsilon \nabla\rho) = 0
    \end{equation}
    Trivially, the solution of the PDE \eqref{eq:diff-score-pde} is the law of a stochastic process solution of an SDE as in \eqref{eq:sde:gen}.  
\end{proof}
We presented the proof as an example of the standard trick used to convert the diffusion term into a transport term exploiting the score. \\
We defined the generative model, but similarly to score-based diffusion, we need to clarify how $b$ and $s$ are learned in practice from data. For such purpose, we present the following result:
\begin{prop}
    The vector field $b(x,t)$ is the unique minimizer of the following objective loss
    \begin{equation}
    \label{eq:loss-b}
        \mathcal{L}_b[\hat{b}]=\int_0^1 \mathbb{E}\left(\frac{1}{2}\left|\hat{b}\left(t, X_t\right)\right|^2-\left(\partial_t I\left(t, X_0, X_1\right)+\dot{\gamma}(t) z\right) \cdot \hat{b}\left(t, X_t\right)\right) d t
                        \end{equation}
                        Similarly, the the score $s(x,t)$ the unique minimizer of the following objective loss
                        \begin{equation}
                        \label{eq:loss-s}
                            \mathcal{L}_s[\hat{s}]=\int_0^1 \mathbb{E}\left(\frac{1}{2}\left|\hat{s}\left(t, X_t\right)\right|^2+\gamma^{-1}(t) z \cdot \hat{s}\left(t, X_t\right)\right) d t
                        \end{equation}
\end{prop}
For the sake of the present summary, we will not present the proof\citep{albergo2022building}. The take home message is that one can now propose two neural networks, namely $b_\theta(x,t)$ and $s_{\theta'}(x,t)$, and train them through backpropagation using \eqref{eq:loss-b} and \eqref{eq:loss-s}. The integrals are estimated using random pairs $(X_0,X_1)\sim \nu$ and times $t\sim \mathcal{U}[0,1]$. As for score-based diffusion, we avoid delving into practical details regarding the implementation of the neural networks. We emphasize the main message: it is feasible to construct a diffusion model defined in a finite time interval that does not solely rely on the score function. In fact, score-based diffusion can be viewed as a specific instance of stochastic interpolation or similar methods (refer to Section \ref{sec:comp} for more details).

Concerning practical aspects, the freedom in choosing the function $I(t,X_0,X_1)$ as well as $\gamma(t)$ can be challenging due to the absence of a general guiding principle. Unfortunately, the structure of the interpolant and the implementation of $b_\theta$ and $s_{\theta'}$ can significantly impact the efficient training of the model. Regarding the generative phase, the SDE and ODE formulations are formally equivalent, but the practical choice is not straightforward. From a numerical perspective, the primary issue lies in the time discretization and integration of the differential equations. The ODE is preferred since integration methods are more stable and precise compared to those for SDEs; this allows for larger time steps and accelerated generation. This is also a significant advantage of stochastic interpolants over score-based diffusion, which is SDE-based. However, the presence of noise appears to be necessary as regularization: in layman's terms, since $b$ is learned and possibly imperfect, any mismatch is "smoothed" in the SDE setting by the presence of noise. The value of $\varepsilon$ functions as a hyperparameter in this context.}
\end{document}